\documentclass{article}

\usepackage{microtype}
\usepackage{graphicx}
\usepackage{subfigure}
\usepackage{booktabs} % for professional tables
\usepackage{dsfont}
\usepackage{subcaption}

\usepackage{hyperref}

\usepackage[preprint]{icml2026}

\usepackage{amsmath}
\usepackage{amssymb}
\usepackage{mathtools}
\usepackage{amsthm}
\usepackage{xspace}

\DeclareRobustCommand{\ie}{i.e.,\@\xspace}
  
\DeclareRobustCommand{\eg}{e.g.,\@\xspace}

\usepackage[capitalize,noabbrev]{cleveref}

\theoremstyle{plain}
\newtheorem{theorem}{Theorem}[section]
\newtheorem{proposition}[theorem]{Proposition}
\newtheorem{lemma}[theorem]{Lemma}

\theoremstyle{definition}

\theoremstyle{remark}

\newcommand{\AlgNameLong}{Tail-aware Flow Fine-Tuning\xspace}
\newcommand{\AlgNameShort}{\textsc{\small{TFFT}}\xspace}
\newcommand{\AlgNameDef}{\textbf{T}ail-aware \textbf{F}low \textbf{F}ine-\textbf{T}uning  (\textsc{\small{TFFT}}\xspace)}

\newcommand{\AlgNameShortFDC}{\textsc{\small{FDC}}\xspace}

\usepackage[textsize=tiny]{todonotes}
\usepackage{array}    
\usepackage{multirow} 
\icmltitlerunning{Efficient Tail-Aware Generative Optimization via Flow Model Fine-Tuning}

\begin{document}

\twocolumn[
  \icmltitle{Efficient Tail-Aware Generative Optimization\\ via Flow Model Fine-Tuning}
  % Efficient CVaR Fine-Tuning for Generative Flow and Diffusion Models

  % It is OKAY to include author information, even for blind submissions: the
  % style file will automatically remove it for you unless you've provided
  % the [accepted] option to the icml2026 package.

  % List of affiliations: The first argument should be a (short) identifier you
  % will use later to specify author affiliations Academic affiliations
  % should list Department, University, City, Region, Country Industry
  % affiliations should list Company, City, Region, Country

  % You can specify symbols, otherwise they are numbered in order. Ideally, you
  % should not use this facility. Affiliations will be numbered in order of
  % appearance and this is the preferred way.
  \icmlsetsymbol{equal}{*}

  \begin{icmlauthorlist}
    \icmlauthor{Zifan Wang}{KTH,ETH}
    \icmlauthor{Riccardo De Santi}{ETH,ETHAI}
    \icmlauthor{Xiaoyu Mo}{KTH}
    \icmlauthor{Michael M. Zavlanos}{Duke}
    \icmlauthor{Andreas Krause}{ETH,ETHAI}
    \icmlauthor{Karl H. Johansson}{KTH,DF}
    % \icmlauthor{Firstname7 Lastname7}{comp}
    %\icmlauthor{}{sch}
    % \icmlauthor{Firstname8 Lastname8}{sch}
    % \icmlauthor{Firstname8 Lastname8}{yyy,comp}
    %\icmlauthor{}{sch}
    %\icmlauthor{}{sch}
  \end{icmlauthorlist}

  \icmlaffiliation{KTH}{KTH Royal Institute of Technology}
  \icmlaffiliation{ETH}{ETH Zurich}
  \icmlaffiliation{ETHAI}{ETH AI Center}
  \icmlaffiliation{Duke}{Duke University}
  \icmlaffiliation{DF}{Digital Futures}

  % \icmlaffiliation{KTH}{Department of Decision and Control Systems, KTH Royal Institute of Technology, Stockholm, Sweden}
  % \icmlaffiliation{ETH}{ETH Zurich, 8092 Zurich, Switzerland}
  % \icmlaffiliation{ETHAI}{ETH AI Center, Zurich, Switzerland}
  % \icmlaffiliation{Duke}{Department of Mechanical Engineering and Materials Science, Duke University, Durham, USA}
  % \icmlaffiliation{DF}{Digital Futures, Stockholm, Sweden}

  \icmlcorrespondingauthor{Zifan Wang}{zifanw@kth.se}
  % \icmlcorrespondingauthor{Firstname2 Lastname2}{first2.last2@www.uk}

  % You may provide any keywords that you find helpful for describing your
  % paper; these are used to populate the "keywords" metadata in the PDF but
  % will not be shown in the document
  \icmlkeywords{Machine Learning, ICML}

  \vskip 0.3in
]

% this must go after the closing bracket ] following \twocolumn[ ...

% This command actually creates the footnote in the first column listing the
% affiliations and the copyright notice. The command takes one argument, which
% is text to display at the start of the footnote. The \icmlEqualContribution
% command is standard text for equal contribution. Remove it (just {}) if you
% do not need this facility.

% Use ONE of the following lines. DO NOT remove the command.
% If you have no special notice, KEEP empty braces:
\printAffiliationsAndNotice{}  % no special notice (required even if empty)
% Or, if applicable, use the standard equal contribution text:
% \printAffiliationsAndNotice{\icmlEqualContribution}

\begin{abstract}
\looseness -1 Fine-tuning pre-trained diffusion and flow models to optimize downstream utilities is central to real-world deployment. Existing entropy-regularized methods primarily maximize expected reward, providing no mechanism to shape tail behavior. However, tail control is often essential: the lower tail determines reliability by limiting low-reward failures, while the upper tail enables discovery by prioritizing rare, high-reward outcomes. In this work, we present \AlgNameDef, a principled and efficient distributional fine-tuning algorithm based on the Conditional Value-at-Risk (CVaR). We address two distinct tail-shaping goals: right-CVaR for seeking novel samples in the high-reward tail and left-CVaR for controlling worst-case samples in the low-reward tail.  Unlike prior approaches that rely on non-linear optimization, we leverage the variational dual formulation of CVaR to decompose it into a decoupled two-stage procedure: a lightweight one-dimensional threshold optimization step, and a single entropy-regularized fine-tuning process via a specific pseudo-reward. This decomposition achieves CVaR fine-tuning efficiently with computational cost comparable to standard expected fine-tuning methods. We demonstrate the effectiveness of \AlgNameShort across illustrative experiments, high-dimensional text-to-image generation, and molecular design.
\end{abstract}

\section{Introduction}

\looseness -1 Flow \cite{lipman2022flow,liu2022flow} and diffusion models \cite{ho2020denoising,song2019generative,song2020score} have emerged as dominant paradigms in the rapid advancement of large-scale generative modeling. Their capacity to generate high-fidelity samples has enabled widespread adoption in fields such as chemistry \cite{hoogeboom2022equivariant}, biology \cite{corso2022diffdock}, and robotics \cite{chi2025diffusion}. While learning to approximate the data distribution has enabled impressive generative models, it is often insufficient for real-world tasks such as scientific discovery~\citep{bilodeau2022generative, zeni2023mattergen}. In these settings, one typically seeks samples that optimize task-specific utilities. A common approach is to fine-tune the generative model to maximize the expected reward of generated samples, while preserving information from the pre-trained model via KL-divergence regularization. Importantly, this fine-tuning objective admits an interpretation as an entropy-regularized control problem~\citep[\eg][]{domingo2024adjoint, uehara2024fine, tang2024fine}, enabling practical and effective procedures with demonstrated success in image generation~\citep{domingo2024adjoint}, molecular design~\citep{uehara2024feedback}, and protein engineering~\citep{uehara2024feedback}.

\looseness -1 Unfortunately, many practically relevant objectives fall outside this mean-reward formulation. As noted by \citet{de2025flow}, optimizing the expectation is inadequate when utility is governed by the tail behavior of the reward distribution. Examples are \emph{risk-averse} and \emph{novelty-seeking} reward maximization. In risk-averse settings, \eg robust image synthesis \cite{schramowski2023safe}, one seeks to suppress lower-tail failures by steering the model toward distributions with improved worst-case rewards, thereby enhancing reliability and safety. Similarly, in novelty-seeking settings, \eg de novo molecular design \cite{jain2022biological,dunn2024mixed}, one typically  wishes to sacrifice the average reward of generated designs toward increasing the chances of generating exceptionally high-reward samples. We denote this class of problems by tail-aware generative optimization~\citep{de2025flow}.

\looseness -1 To address the limitation of expected reward maximization, \citet{de2025flow} recently proposed Flow Density Control (\AlgNameShortFDC), the first framework to integrate Conditional Value-at-Risk (CVaR) constraints into generative optimization. CVaR is a coherent risk measure \cite{rockafellar2000optimization} that evaluates the expectation of a distribution's tail, allowing for rigorous control over extreme outcomes.
Specifically, this allows for two distinct objectives: right-CVaR, the expected reward conditioned on the upper tail, which promotes novelty-seeking; and left-CVaR, the expected reward conditioned on the lower tail, which captures risk aversion.
Although \AlgNameShortFDC demonstrated the ability to perform such tail-aware generative optimization, it tackles the inherent non-linear optimization problem via sequential use of an entropy-regularized fine-tuning scheme. On the contrary, in this work we propose \AlgNameDef, a method that solves both risk-averse and novelty-seeking generative optimization problems via a single use of any entropy-regularized fine-tuning method. This renders \AlgNameShort significantly more computationally efficient.

\looseness -1 A fundamental contribution of this work lies in leveraging the variational dual formulation of CVaR to demonstrate that the optimization decouples into a highly efficient two-stage procedure. The first stage involves a lightweight one-dimensional optimization problem to determines the risk-sensitive quantile threshold using the pre-trained generative model. The second stage is a
single standard entropy-regularized fine-tuning routine using a pseudo-reward based on that optimal threshold. This decomposition is significant as it renders risk-averse and novelty-seeking fine-tuning practically solvable with computational cost comparable to standard expected reward maximization, avoiding the need to use sequential fine-tuning solvers.
We empirically validate \AlgNameShort across a hierarchy of tasks. First, we consider illustrative settings to visually assess the method's precise control over tail behaviour. Then, we utilize risk-averse reward optimization to optimize the worst-case performance in a text-to-image generation task, thereby minimizing the occurrence of low-quality samples, and evaluate \AlgNameShort novelty-seeking reward maximization for molecular design to prioritize high-rewards candidates.

\section{Preliminaries and Notation}
\label{sec:background}

\subsection{Generative Flow and Diffusion Models}
\looseness -1 Flow-based generative models aim to approximately sample data points from a data distribution $p_{\text{data}}$ \citep{lipman2024flow}. A convenient perspective is to construct a continuous-time transport that maps samples from a simple source distribution $p_0$ (\eg a standard Gaussian) into samples from $p_{\text{data}}$.

\looseness -1 A generative flow is a time-dependent map $\psi : [0,1]\times \mathbb{R}^d \to \mathbb{R}^d, \; (t,x)\mapsto \psi_t(x),$
which transports an initial random variable $X_0\sim p_0$ along the trajectory $X_t = \psi_t(X_0), \; t\in[0,1]$, 
so that ideally $X_1 \sim p_{\mathrm{data}}$. A common parametrization defines $\psi_t$ through a time-dependent velocity field $u : [0,1]\times \mathbb{R}^d \to \mathbb{R}^d,$
via the flow ODE
\begin{equation}
\frac{d}{dt}\psi_t(x) = u_t(\psi_t(x)), 
\qquad \psi_0(x) = x .
\label{eq:flow_ode}
\end{equation}
\looseness -1 In other words, the velocity field $u_t(\cdot)$ specifies an infinitesimal displacement rule that deterministically transports points over time.
Let $p_t$ denote the marginal density of $X_t$. The pair $(u, \{p_t\}_{t\in[0,1]})$ is consistent with the flow dynamics if the induced density path satisfies the continuity equation $\partial_t p_t(x) + \nabla \cdot \big(p_t(x)\,u_t(x)\big) = 0,$
which expresses conservation of probability mass under the deterministic transport.
We write $p^u := \{p_t^u\}_{t\in[0,1]}$ for the marginal density path induced by a velocity field $u$. 

\looseness -1 Standard training techniques (e.g., flow matching) learn a parameterized velocity field $u^\pi$ such that the induced path satisfies $p_1^{u^\pi} \approx p_{\mathrm{data}}$.
After training, sampling proceeds by drawing $X_0\sim p_0$ and numerically integrating the ODE \eqref{eq:flow_ode} to obtain $X_1 \sim p_1^{u^\pi}$. Although diffusion models~\citep{song2019generative} are typically introduced through stochastic dynamics, they can be re-expressed via an ODE preserving the same marginal densities~\citep[Chapter 10]{lipman2024flow}. Therefore, our results apply to diffusion models as well.

\subsection{Entropy-Regularized Flow Fine-Tuning}
Let a generative flow model be parameterized by $\pi$, and we denote the induced sampling distribution $p_1^{u^\pi}$ by $p^{\pi}$ for simplicity of notation.
Assume that the model is pre-trained, and can generate samples from the distribution $p^{\mathrm{pre}}$.
Given a reward function $r:\mathcal{X}\to\mathbb{R}$ that scores samples, the goal of fine-tuning is to learn a new sampling distribution that increases the reward while remaining close to $p^{\mathrm{pre}}$.

A widely used objective is entropy-regularized reward maximization, also referred to as linear generative optimization
\begin{equation}
\max_{p^{\pi} \in \mathcal{P}} \;
\mathbb{E}_{x\sim p^{\pi}}[\,r(x)\,]
-\alpha\,D_{\text{KL}}\!\left(p^{\pi}\,\|\,p^{\text{pre}}\right),
\label{eq:ent_reg_ft}
\end{equation}
which optimizes the expectation of the reward distribution, penalized by a KL divergence term multiplied by $\alpha$. Throughout the paper, we assume that the set of admissible policies induces a space of probability distributions $\mathcal{P} = \{p^{\pi}\}$ that is convex and compact. We also assume that the reward distribution induced by the pre-trained model $p^{\mathrm{pre}}$ is continuous (i.e., admits a density with no atoms).
The solution of \eqref{eq:ent_reg_ft} admits the closed-form Boltzmann distribution $p^\star(x)
\;\propto\;
p_0(x)\,\exp\!\left(r(x)/\alpha\right),$
and can be solved via entropy-regularized control methods \cite{uehara2024fine,domingo2024adjoint}.

\section{Problem Statement: Tail-Aware Generative Optimization via Flow Model Fine-Tuning}

\looseness -1 The conventional entropy-regularized fine-tuning objective \eqref{eq:ent_reg_ft} maximizes the expected reward under the learned distribution and therefore does not directly capture tail-sensitive goals. In discovery settings such as molecular design or creative generation, the primary aim is often to increase the probability of producing a small number of exceptional candidates; this can require trading off average performance in order to allocate more mass to the upper tail. Conversely, in safety-critical applications where avoiding low-reward failures is central, an expected-reward objective provides too weak an incentive to improve the lower tail, which captures worst-case outcomes. These considerations motivate the following CVaR-based objectives~\citep{de2025flow}.

Let $\beta \in (0,1)$ be a risk level and $\alpha > 0$ be a regularization coefficient. We consider the CVaR fine-tuning problem
\begin{equation}
    \label{eq:general_problem}
    \max_{\pi} \mathcal{G}(\pi) := \text{CVaR}_{\beta, p^\pi}[r(X)] - \alpha D_{\text{KL}}(p^\pi || p^{\text{pre}}),
\end{equation}
\looseness -1 where CVaR is a risk measure that evaluates the expected value of the tail of a distribution. To define it rigorously, let $X \sim p$ denote the random variable governed by the generative policy, and let $Z = r(X)$ be the corresponding scalar reward variable. The Value-at-Risk (VaR) at level $\beta$ is defined as the $\beta$-quantile of the reward distribution $\text{VaR}_\beta[p] = \inf\{ \zeta : \mathbb{P}_{X \sim p} ( r(X) \leq \zeta) \geq \beta \}$. We consider two types of CVaR: right-CVaR defined as $\text{R-CVaR}_{\beta,p}[r(X)] = \mathbb{E}_{X \sim p} [ r(X) \mid r(X) \geq \text{VaR}_\beta[p] ]$ and left-CVaR defined as $\text{L-CVaR}_{\beta,p}[r(X)] = \mathbb{E}_{X \sim p} [ r(X) \mid r(X) \leq \text{VaR}_\beta[p] ]$. Their variational dual formulations \cite{rockafellar2000optimization} are:
\begin{align*}
    \text{R-CVaR}_{\beta,p}[r(X)] &= \min_{t \in \mathbb{R}} \Big\{ t + \frac{1}{1-\beta} \mathbb{E}_{x \sim p}  [r(x) - t]_+ \Big\}, \\
    \text{L-CVaR}_{\beta,p}[r(X)] &= \max_{t \in \mathbb{R}} \Big\{ t - \frac{1}{\beta} \mathbb{E}_{x \sim p}  [t - r(x)]_+  \Big\},
\end{align*}
where $[z]_+ := \max(0, z)$. An illustration of right- and left-CVaR is shown in Fig.~\ref{fig:cvar_illustration}.

Crucially, the $\text{CVaR}$ in Eq. \eqref{eq:general_problem} can be either right-CVaR for novelty-seeking or left-CVaR for risk-aversion.

\begin{figure}[t]
    \centering
    \begin{subfigure}
        \centering
        \includegraphics[width=0.95\linewidth]{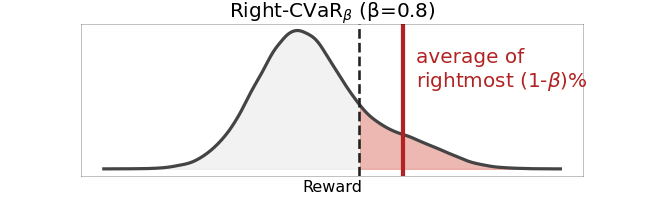}
    \end{subfigure}
    \vspace{-0.1em}
    \par\nointerlineskip
    \begin{subfigure}
        \centering
        \includegraphics[width=0.95\linewidth]{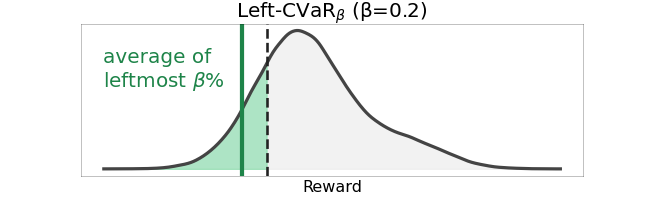}
    \end{subfigure}
    \vspace{-0.4cm}
    \caption{Illustration of right-CVaR and left-CVaR.}
    \label{fig:cvar_illustration}
\end{figure}

\section{Variational Dual Formulation of CVaR for Efficient Tail-Aware Flow Fine-Tuning}

\label{sec:reformulation}

\looseness -1 The CVaR fine-tuning objective defined in \eqref{eq:general_problem} is non-linear with respect to the probability distribution $p^\pi$. Unfortunately, this non-linearity precludes the direct application of standard entropy-regularized control methods. Flow density control (FDC) \cite{de2025flow}, a general framework for non-linear generative optimization (GO), addresses this difficulty by decomposing the nonlinear GO problem into a sequence of $K$ linear GO sub-problems, each requiring a full run of a fine-tuning solver, e.g., Adjoint Matching~\citep{domingo2024adjoint}. While this approach is universal, the computational cost scales linearly with $K$, making it significantly more computationally expensive than standard entropy-regularized control methods.
In the following, we demonstrate that the specific variational structure of CVaR allows us to bypass this iterative procedure entirely. 
% By leveraging the dual formulation of CVaR, we derive reformulations under which CVaR fine-tuning \eqref{eq:general_problem} decomposes into two decoupled and tractable stages. 
% In what follows, we present these reformulations for both right- and left-CVaR.
This is achieved through the reformulations of \eqref{eq:general_problem}, which we provide below for both right- and left-CVaR.
Detailed proofs are provided in Appendix~\ref{app:proof:reformulation_right} and Appendix~\ref{app:proof:reformulation_left}.

\begin{theorem}\label{thm:cvar_duality}
Let $\beta \in (0,1)$ and $\alpha > 0$. 
Assume that the reward $r(x)$ is bounded.
Then, the right-CVaR fine-tuning problem admits the equivalent reformulation:
\begin{align}
    &\max_{p^{\pi}} \; \operatorname{R\text{-}CVaR}_{\beta, p^{\pi}}[r(X)] -  \alpha D_{\mathrm{KL}}(p^{\pi} \parallel p^{\mathrm{pre}}) \label{eq:cvar_dual_form_2} \\
    &= \min_{t \in \mathbb{R}} \left\{ t +  \alpha \log \mathbb{E}_{X \sim p^{\mathrm{pre}}} \left[ \exp\left( \frac{[r(X) - t]_+}{ \alpha(1-\beta)} \right) \right] \right\}. \label{eq:cvar_dual_form}
\end{align}
Moreover, let $t^*$ be the optimal threshold minimizing \eqref{eq:cvar_dual_form}. The optimal distribution maximizing \eqref{eq:cvar_dual_form_2} is given by
\begin{equation}
    p_R^*(x) \propto p^{\text{pre}}(x) \exp\left( \frac{[r(X) - t^*]_+}{\alpha(1-\beta)} \right).
    \label{eq:optimal_distribution}
\end{equation}
Finally, it satisfies that $\mathrm{VaR}_\beta(p_R^\star)=t^\star$.
% \begin{equation}
% \mathrm{VaR}_\beta(p_R^\star)=t^\star .
% \end{equation}
\end{theorem}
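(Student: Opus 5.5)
The plan is to substitute the Rockafellar--Uryasev variational form of right-CVaR into \eqref{eq:cvar_dual_form_2}. This turns the problem into a max--min:
\[
\max_{p^\pi}\min_{t\in\mathbb{R}} L(p^\pi,t),\qquad L(p,t):= t+\tfrac{1}{1-\beta}\mathbb{E}_{p}[r(X)-t]_+ - \alpha D_{\mathrm{KL}}(p\,\|\,p^{\mathrm{pre}}).
\]
We then exchange the max and the min. The function $L$ is concave in $p$, since the first two terms are linear in $p$ and the KL term is convex and enters with a minus sign. It is convex in $t$, since $t\mapsto t+\frac{1}{1-\beta}[z-t]_+$ is convex. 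The set $\mathcal{P}$ is convex and compact by the standing assumption. So Sion's minimax theorem applies, provided $L(\cdot,t)$ is upper semicontinuous in $p$ and $L(p,\cdot)$ is lower semicontinuous in $t$. The second condition is clear. For the first, boundedness of $r$ makes the linear part continuous in the weak topology, and $-D_{\mathrm{KL}}$ is weakly upper semicontinuous. We may also restrict $t$ to the compact interval $[\inf r,\sup r]$ without changing the inner minimum: for $t<\inf r$ the derivative in $t$ equals $1-\frac{1}{1-\beta}<0$, and for $t>\sup r$ it equals $1>0$. This gives $\max_p\min_t L=\min_t\max_p L$.

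For fixed $t$, the inner problem $\max_p \mathbb{E}_p[\tilde r_t]-\alpha D_{\mathrm{KL}}(p\|p^{\mathrm{pre}})$, with pseudo-reward $\tilde r_t=[r-t]_+/(1-\beta)$, is exactly the linear problem \eqref{eq:ent_reg_ft}. By the Gibbs (Donsker--Varadhan) variational principle, its value is $\alpha\log\mathbb{E}_{p^{\mathrm{pre}}}[\exp(\tilde r_t/\alpha)]$. The maximizer is the Boltzmann tilt $p_t\propto p^{\mathrm{pre}}\exp(\tilde r_t/\alpha)$, which is well defined because $r$ is bounded. Adding $t$ yields \eqref{eq:cvar_dual_form}. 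Since the outer function $\phi(t)$ in \eqref{eq:cvar_dual_form} is continuous and coercive, with slope $1$ for large $t$ and slope $1-\frac{1}{1-\beta}<0$ for small $t$, a minimizer $t^\star$ exists.

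To identify the optimal distribution, we take the saddle point $(p^\star,t^\star)$ supplied by the minimax argument. The function $L(\cdot,t^\star)$ is strictly concave because KL is strictly convex, so $p^\star$ must be its unique maximizer, namely $p_{t^\star}$. This is \eqref{eq:optimal_distribution}.

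For the final claim $\mathrm{VaR}_\beta(p_R^\star)=t^\star$, we differentiate $\phi$, using dominated convergence, which is justified by boundedness of $r$. This gives
\[
\phi'(t)=1-\frac{1}{1-\beta}\,\frac{\mathbb{E}_{p^{\mathrm{pre}}}\!\left[\mathbf{1}\{r>t\}e^{\tilde r_t/\alpha}\right]}{\mathbb{E}_{p^{\mathrm{pre}}}\!\left[e^{\tilde r_t/\alpha}\right]}=1-\frac{\mathbb{P}_{p_t}(r(X)>t)}{1-\beta}.
\]
The function $\phi$ is differentiable because the reward law under $p^{\mathrm{pre}}$ has no atoms, so $\{r=t\}$ is null. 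The same property transfers to $p_t$, which has a bounded density relative to $p^{\mathrm{pre}}$. Setting $\phi'(t^\star)=0$ gives $\mathbb{P}_{p_R^\star}(r\le t^\star)=\beta$, and continuity of the reward law of $p_R^\star$ then yields $\mathrm{VaR}_\beta(p_R^\star)=t^\star$.

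There is one subtlety: the infimum in the VaR definition could pick a smaller $\zeta$ if the CDF is flat just below $t^\star$. To exclude this, I would argue that $p_R^\star$ charges every neighbourhood of $t^\star$ whenever $p^{\mathrm{pre}}$ does, and treat the degenerate case by taking the minimal minimizer $t^\star$. The main obstacle is the rigorous justification of the minimax exchange, specifically the topology on $\mathcal{P}$ and the upper semicontinuity of $-D_{\mathrm{KL}}$. If Sion's theorem proves awkward, I would instead verify the saddle point directly: show that $(p_{t^\star},t^\star)$ satisfies $L(p,t^\star)\le L(p_{t^\star},t^\star)\le L(p_{t^\star},t)$. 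The left inequality is the Gibbs principle. The right inequality holds because $\mathbb{P}_{p_{t^\star}}(r\le t^\star)=\beta$ makes $t^\star$ a minimizer of the Rockafellar--Uryasev objective under $p_{t^\star}$.
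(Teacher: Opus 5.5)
Your route for the duality identity and the form of $p_R^\star$ is the paper's: the Rockafellar--Uryasev substitution, a Sion exchange, the Gibbs principle for the inner problem, and then $F_R'(t^\star)=0$. Your additions fill in steps the paper only asserts: restricting $t$ to $[\inf r,\sup r]$, proving that $t^\star$ exists, and identifying the maximizer through the saddle point and strict concavity of $L(\cdot,t^\star)$. Your fallback is actually the better proof. The chain $\max_p\min_t L\ge\min_t L(p_{t^\star},t)=L(p_{t^\star},t^\star)=F_R(t^\star)=\min_t\max_p L\ge\max_p\min_t L$ needs no topology on $\mathcal{P}$. That matters, because for a merely measurable $r$, boundedness alone does not make $p\mapsto\mathbb{E}_p[r(X)-t]_+$ weakly (upper semi)continuous.

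The genuine problem is the last claim. You rightly observe that $\mathbb{P}_{p_R^\star}(r\le t^\star)=\beta$ only gives $\mathrm{VaR}_\beta(p_R^\star)\le t^\star$. However, your repair of the degenerate case (``take the minimal minimizer'') cannot work, because $t^\star$ is unique. To see this, write $\mathbb{P}_{q_{t,R}}(r>t)=A(t)/(A(t)+B(t))$, where $A(t)=\mathbb{E}_{p^{\mathrm{pre}}}[\mathbf 1\{r>t\}e^{(r-t)/(\alpha(1-\beta))}]$ is strictly decreasing and $B(t)=\mathbb{P}_{p^{\mathrm{pre}}}(r\le t)$ is nondecreasing. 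Hence $F_R'$ is strictly increasing wherever $0<\mathbb{P}_{p^{\mathrm{pre}}}(r>t)<1$, and $F_R'\neq 0$ elsewhere, so there is no smaller minimizer to pick.

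In fact the claim is false in that case. Let the reward under $p^{\mathrm{pre}}$ be uniform on $[0,1]\cup[2,3]$, with $\beta=1/4$ and $\alpha=4/3$, so that $\alpha(1-\beta)=1$. For $t\in[1,2]$ we get $F_R'(t)=1-\tfrac43\,a/(\tfrac12+a)$ with $a=\tfrac12(e-1)e^{2-t}$. This vanishes at $t^\star=2-\log\bigl(3/(e-1)\bigr)\approx 1.44$. The resulting $p_R^\star$ puts mass exactly $1/4$ on $\{r\in[0,1]\}$, with positive density there, and no mass on $\{1<r<2\}$. Therefore $\mathrm{VaR}_{1/4}(p_R^\star)=1\neq t^\star$.

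What the first-order condition actually proves is that $t^\star$ is \emph{a} $\beta$-quantile of $r$ under $p_R^\star$, which is all the duality and the form of $p_R^\star$ need. Equality with the infimum-convention VaR requires an extra hypothesis, e.g.\ $\mathbb{P}_{p^{\mathrm{pre}}}(t^\star-\epsilon<r\le t^\star)>0$ for every $\epsilon>0$, as holds when the reward density is positive on an interval. Under that hypothesis, your observation that $dp_R^\star/dp^{\mathrm{pre}}$ is bounded above and away from zero finishes the proof. The paper's own proof passes over the same point (``by definition''). So this is a missing assumption in the statement, not something a better argument could supply.
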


\begin{theorem}\label{thm:cvar_duality_left} 
Let $\beta \in (0,1)$ and $\alpha > 0$. Assume that the reward $r(x)$ is bounded. Then, the left-CVaR fine-tuning problem admits the equivalent reformulation:
\begin{align}
    &\max_{p^{\pi}} \; \operatorname{L\text{-}CVaR}_{\beta, p^{\pi}}[r(X)] - \alpha D_{\mathrm{KL}}(p^{\pi} \parallel p^{\mathrm{pre}}) \label{eq:cvar_dual_form_left_2} \\
    &= \max_{t \in \mathbb{R}} \left\{ t + \alpha \log \mathbb{E}_{X \sim p^{\mathrm{pre}}} \left[ \exp\left( -\frac{[t- r(X) ]_+}{\alpha\beta} \right) \right] \right\}.
    \label{eq:cvar_dual_form_left}
\end{align}
Moreover, let $t^*$ be the optimal threshold minimizing \eqref{eq:cvar_dual_form_left}. The optimal distribution maximizing \eqref{eq:cvar_dual_form_left_2} is given by
\begin{equation}
    p_L^*(x) \propto p^{\text{pre}}(x) \exp\left( -\frac{[t^*-r(X) ]_+}{\alpha\beta} \right).
    \label{eq:optimal_distribution_left_cvar}
\end{equation}
Finally, it satisfies that $\mathrm{VaR}_\beta(p_L^\star)=t^\star$.
% \begin{equation}
% \mathrm{VaR}_\beta(p_L^\star)=t^\star .
% \end{equation}
\end{theorem}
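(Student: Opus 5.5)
The approach mirrors Theorem~\ref{thm:cvar_duality}, with the roles of $\min$ and $\max$ adjusted. Substituting the variational dual of left-CVaR into \eqref{eq:cvar_dual_form_left_2} turns the problem into a joint maximization
\begin{equation*}
\max_{p^\pi}\max_{t\in\mathbb{R}} \Big\{ t - \tfrac{1}{\beta}\mathbb{E}_{p^\pi}[t-r(X)]_+ - \alpha D_{\mathrm{KL}}(p^\pi\,\|\,p^{\mathrm{pre}})\Big\}.
\end{equation*}
Both operations are maximizations, so they can be interchanged freely. No minimax theorem is needed, which is where the left case is easier than the right one.

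For fixed $t$, the inner problem over $p^\pi$ is a linear generative optimization of the form \eqref{eq:ent_reg_ft} with pseudo-reward $\tilde r_t(x) = -[t-r(x)]_+/\beta$; the constant $t$ does not affect the maximizer. By the Gibbs/Donsker--Varadhan variational formula $\max_p \{\mathbb{E}_p[f] - \alpha D_{\mathrm{KL}}(p\|q)\} = \alpha\log\mathbb{E}_q[e^{f/\alpha}]$, the inner value is $t+\alpha\log\mathbb{E}_{p^{\mathrm{pre}}}[\exp(-[t-r(X)]_+/(\alpha\beta))]$, attained by the tilted density $\propto p^{\mathrm{pre}}\exp(-[t-r]_+/(\alpha\beta))$. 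Boundedness of $r$ makes the exponent bounded, so everything is finite. This assumes the Boltzmann tilt lies in $\mathcal{P}$, which is implicit in the paper's closed-form claim. Taking $\max_t$ gives \eqref{eq:cvar_dual_form_left}.

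Next I show that the outer maximum over $t$ is attained. Call the objective $h(t)$. For $t<\inf r$, $h(t)=t$, which tends to $-\infty$ as $t\to-\infty$. For $t>\sup r$, $h(t)= t-(t-\mathbb{E}_{p^{\mathrm{pre}}} r)/\beta$, which also tends to $-\infty$ because $1/\beta>1$. Since $h$ is continuous, a maximizer $t^\star$ exists. Concavity of $h$ follows from joint concavity of the pre-dualized objective in $(p,t)$ together with partial maximization preserving concavity. At $t^\star$, the optimal distribution is the tilt \eqref{eq:optimal_distribution_left_cvar}, since $(p_L^\star,t^\star)$ is a joint maximizer.

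The main obstacle is the claim $\mathrm{VaR}_\beta(p_L^\star)=t^\star$. I would differentiate $h$. Write $Z(t)=\mathbb{E}_{p^{\mathrm{pre}}}[e^{-[t-r]_+/(\alpha\beta)}]$. Since $\frac{d}{dt}[t-r]_+=\mathbb{1}\{r<t\}$ almost everywhere, and $r(X)$ has no atoms under $p^{\mathrm{pre}}$, dominated convergence gives
\begin{equation*}
h'(t)=1-\frac{1}{\beta}\,\frac{\mathbb{E}_{p^{\mathrm{pre}}}[\mathbb{1}\{r<t\}e^{-[t-r]_+/(\alpha\beta)}]}{Z(t)} = 1-\frac{1}{\beta}\,\mathbb{P}_{p_t}(r(X)<t),
\end{equation*}
where $p_t$ denotes the tilt at $t$. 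Setting $h'(t^\star)=0$ yields $\mathbb{P}_{p_L^\star}(r\le t^\star)=\beta$. The tilt is absolutely continuous with respect to $p^{\mathrm{pre}}$, so there are still no atoms. To conclude that $t^\star$ is the quantile in the $\inf$ sense, I must also rule out a flat stretch of the CDF just below $t^\star$. I would argue that on such a gap $h$ is linear with nonzero slope, which would contradict optimality. If that fails, I would note that any point of the flat region is a valid $\beta$-quantile once the definition is adjusted, and choose $t^\star$ as the smallest maximizer.
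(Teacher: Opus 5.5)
Your main line is the paper's proof. You substitute the max-form dual of left-CVaR, swap the two maximizations (no minimax theorem needed), solve the inner problem by the Gibbs variational formula, and get $\mathbb{P}_{p_L^\star}(r(X)<t^\star)=\beta$ from $F_L'(t^\star)=0$. Your coercivity argument for the existence of $t^\star$ is a worthwhile addition that the paper omits. There is one slip in it: for $t>\sup r$ you have $h(t)=(1-1/\beta)\,t+\alpha\log\mathbb{E}_{p^{\mathrm{pre}}}[e^{r(X)/(\alpha\beta)}]$, not $t-(t-\mathbb{E}\,r)/\beta$, though the limit $-\infty$ is unaffected. You are also right that $\mathbb{P}_{p_L^\star}(r\le t^\star)=\beta$ does not by itself give $\mathrm{VaR}_\beta(p_L^\star)=t^\star$ under the $\inf$ definition. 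The paper's proof passes over this point.

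Two of your supporting claims are false, however.

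First, $t-\frac1\beta\int p\,[t-r]_+\,dx-\alpha D_{\mathrm{KL}}(p\,\|\,p^{\mathrm{pre}})$ is not jointly concave in $(p,t)$: on $\{r<t\}$ the middle term is bilinear in $(p,t)$. In fact $h$ itself is not concave whenever the law of $r(X)$ under $p^{\mathrm{pre}}$ has a gap with mass on both sides, as the next computation shows. Neither your argument nor the paper's proof of this theorem uses concavity, so delete that sentence. Do not borrow it from Theorem~\ref{thm:convexity} either. Its computation of $F_L''$ drops the term produced by differentiating $\mathbf 1\{r<t\}$ and reverses the sign of the variance term. Correctly, $F_L''(t)=\frac{1}{\alpha\beta^2}\mathrm{Var}_{q_{t,L}}[\mathbf 1\{r<t\}]-\frac{\rho(t)}{\beta Z_L(t)}$, where $\rho$ is the density of $r(X)$ under $p^{\mathrm{pre}}$.

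Second, on a gap $(a,t^\star)$ with $\mathbb{P}_{p^{\mathrm{pre}}}(a<r(X)<t^\star)=0$, $h$ is not linear. There $h(t)=t+\alpha\log\big(c\,e^{-t/(\alpha\beta)}+B\big)$, with $c=\mathbb{E}_{p^{\mathrm{pre}}}[e^{r(X)/(\alpha\beta)}\mathbf 1\{r(X)\le a\}]$ and $B=\mathbb{P}_{p^{\mathrm{pre}}}(r(X)\ge t^\star)$. Both are positive because $p_L^\star$ and $p^{\mathrm{pre}}$ are mutually absolutely continuous and $0<\beta<1$. So $h$ is strictly convex on the gap.

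That convexity is what closes the gap. $h'$ is continuous and strictly increasing on $(a,t^\star]$ with $h'(t^\star)=0$. Hence $h'<0$ on $(a,t^\star)$ and $h(a)>h(t^\star)$, contradicting maximality. This step needs maximality, not just stationarity. If $r(X)$ under $p^{\mathrm{pre}}$ has two well-separated modes, $h$ has a stationary point inside the gap that is a local minimum. There the tilted CDF equals $\beta$, but the $\beta$-quantile sits at the upper edge of the lower mode. With this argument your fallback of adjusting the quantile definition is unnecessary; drop it.
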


\looseness -1 Theorems~\ref{thm:cvar_duality} and ~\ref{thm:cvar_duality_left} reveal that the non-linear CVaR GO problem effectively decomposes into two tractable sub-problems. First, one solves a one‑dimensional scalar optimization over the threshold, using only samples from the pre‑trained distribution $p^{\text{pre}}$. Second, conditioning on this threshold, the optimal fine-tuned distribution is characterized as the solution to a standard entropy-regularized fine-tuning problem with a specific pseudo-reward function.

\section{Algorithm: \AlgNameLong}

\looseness -1 Guided by the reformulations established in Sec.~\ref{sec:reformulation}, we now introduce \AlgNameDef $ $ (see Alg.~\ref{alg:cvar_ft}) by describing in the following its two main stages.

\textbf{Stage 1: Efficient Threshold Optimization.}
The primary goal of this stage is to identify the threshold that delineates the tail region of interest. We define the scalar objectives 
\begin{align}
    F_R(t) &:= t + \alpha \log \mathbb{E}_{x \sim p^{\text{pre}}} \left[ \exp\left( \frac{[r(x) - t]_+}{\alpha(1-\beta)} \right) \right] \label{eq:right_cvar_obj} \\
    F_L(t) &:= t + \alpha \log \mathbb{E}_{x \sim p^{\text{pre}}} \left[ \exp\left( -\frac{[t - r(x)]_+}{\alpha\beta} \right) \right] \label{eq:left_cvar_obj}
\end{align}
for right- and left-CVaR, respectively. The global optimization of these objectives is tractable due to their favorable theoretical properties as we provide below.

\begin{theorem}
\label{thm:convexity}
$F_R$ is strictly convex and smooth with Lipschitz constant $L_R = \frac{1}{4\alpha(1-\beta)^2}$. Conversely, $F_L$ is strictly concave and smooth with Lipschitz constant $L_L = \frac{1}{ 4 \alpha \beta^2}$.
\end{theorem}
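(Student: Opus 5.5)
The plan is a direct computation of the first two derivatives of $F_R$, followed by a bound on the second derivative. The left case is then obtained by the same computation with the roles of the tails swapped.

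\textbf{Setup and first derivative.} Write $c:=\alpha(1-\beta)$, $M(t):=\mathbb{E}_{p^{\mathrm{pre}}}[\exp([r(X)-t]_+/c)]$ and $N(t):=\mathbb{E}_{p^{\mathrm{pre}}}[\mathds{1}\{r(X)>t\}\exp((r(X)-t)/c)]$. Since $r$ is bounded, the integrands are bounded in $t$ on compacts. The map $t\mapsto [r-t]_+$ is Lipschitz with derivative $-\mathds{1}\{r>t\}$ almost everywhere, and the pre-trained reward law has no atoms. Dominated convergence therefore gives $M'(t)=-N(t)/c$, and hence
\[
F_R'(t)=1-\frac{1}{1-\beta}\,q(t),\qquad q(t):=\frac{N(t)}{M(t)}\in[0,1].
\]
Here $q(t)$ is the probability of $\{r>t\}$ under the tilted law $\tilde p_t\propto p^{\mathrm{pre}}e^{[r-t]_+/c}$.

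\textbf{Second derivative.} Differentiating $N$ produces two contributions. The first is the interior term $-N/c$. The second is a boundary term $-f(t)$, where $f$ is the density of $r(X)$ under $p^{\mathrm{pre}}$; this term comes from the moving indicator, and the exponential weight equals $1$ at $r=t$. This yields
\[
q'(t)=-\frac{f(t)}{M(t)}-\frac{1}{c}\,q(t)\bigl(1-q(t)\bigr),
\qquad
F_R''(t)=\frac{1}{1-\beta}\Bigl[\frac{f(t)}{M(t)}+\frac{q(1-q)}{\alpha(1-\beta)}\Bigr]\ge 0.
\]
Both terms are nonnegative, which gives convexity. Strictness follows because $q(1-q)>0$ whenever $0<\mathbb{P}_{p^{\mathrm{pre}}}(r>t)<1$, that is, for $t$ in the interior of the reward range. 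Outside that range, $F_R(t)=t$ for $t\ge\sup r$, and for $t\le \inf r$ one gets $F_R(t)=t+\frac{1}{1-\beta}(\mathbb{E}_{p^{\mathrm{pre}}}[r]-t)$. Both pieces are affine, so strict convexity should be read on the interval where the minimizer $t^*$ lies. I would state it that way, or note that $F_R$ is strictly decreasing on the left piece and strictly increasing on the right piece, so the minimizer is interior and unique. For the smoothness bound, the variance term is controlled by $q(1-q)\le 1/4$, which contributes exactly $\frac{1}{4\alpha(1-\beta)^2}$.

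\textbf{Main obstacle.} The boundary term $f(t)/((1-\beta)M(t))$ is the hard part, since it is not controlled by the claimed constant unless the reward density is bounded.
\begin{itemize}
\item My first approach would be to interpret the smoothness constant as the curvature contributed by the log-partition (tilting) term. This is the term that governs the stochastic-gradient updates used in Stage 1.
\item Alternatively, I would add the assumption $\sup_t f(t)/M(t)\le$ (something absorbed into the constant) and state it explicitly.
\item Failing that, the bound becomes $L_R=\frac{1}{4\alpha(1-\beta)^2}+\frac{\|f\|_\infty}{1-\beta}$, using $M\ge 1$.
\end{itemize}
I expect deciding how to handle this term honestly to be the crux of the proof.

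\textbf{Left case.} For $F_L$, set $c'=\alpha\beta$ and let $\tilde q(t)$ be the probability of $\{r<t\}$ under the tilt $p^{\mathrm{pre}}e^{-[t-r]_+/c'}$. The same computation gives $F_L'(t)=1-\tilde q(t)/\beta$ and
\[
F_L''(t)=-\frac{1}{\beta}\Bigl[\frac{f(t)}{M_L(t)}+\frac{\tilde q(1-\tilde q)}{\alpha\beta}\Bigr]\le 0.
\]
This yields concavity, strict on the interior of the reward range, and the constant $\frac{1}{4\alpha\beta^2}$ under the same treatment of the density term. Alternatively, the left case can be reduced to the right case via $r\mapsto -r$, $t\mapsto -t$, $\beta\mapsto 1-\beta$.
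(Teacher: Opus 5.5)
\textbf{Right case.} Your right-CVaR computation is correct, and it is more careful than the paper's proof. The paper differentiates only the tilted density inside $\int \mathbf{1}(r(x)>t)\,q_{t,R}(x)\,dx$ and ignores the moving indicator, which gives $F_R''=\frac{1}{\alpha(1-\beta)^2}\mathrm{Var}_{q_{t,R}}[\mathbf{1}(r>t)]$. Your boundary term $\frac{f(t)}{(1-\beta)M(t)}$ is really there. As $\alpha\to\infty$, $F_R$ tends to $t+\frac{1}{1-\beta}\mathbb{E}[r-t]_+$, whose curvature is $f(t)/(1-\beta)$, while the paper's formula tends to $0$.

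So the stated $L_R$ does need a density bound. Your first option, reading the constant as the curvature of the tilting term alone, is not a proof: it is exactly the omission the paper makes. Your other two options are the honest statements. You are also right that strictness holds only on the interior of the reward range; the paper infers strictness from a variance being nonnegative.

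One small slip: for $t\le\inf r$, $F_R(t)=t-\frac{t}{1-\beta}+\alpha\log\mathbb{E}_{p^{\mathrm{pre}}}[e^{r(X)/(\alpha(1-\beta))}]$. Your expression $t+\frac{1}{1-\beta}(\mathbb{E}[r]-t)$ is only a Jensen lower bound. The slope is the same, so your conclusion stands.

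\textbf{Left case: this is the genuine gap.} ``The same computation'' does not give the sign you wrote. In $q_{t,L}\propto p^{\mathrm{pre}}e^{-[t-r]_+/(\alpha\beta)}$, raising $t$ makes the penalty on $\{r<t\}$ heavier. The tilt therefore pushes $\tilde q$ down, since $\frac{d}{dt}q_{t,L}=\frac{1}{\alpha\beta}q_{t,L}(\tilde q-\mathbf{1}\{r<t\})$, while the moving boundary now adds mass. Carrying this through gives
\[
\tilde q'(t)=\frac{f(t)}{M_L(t)}-\frac{\tilde q(1-\tilde q)}{\alpha\beta},\qquad F_L''(t)=\frac{\tilde q(1-\tilde q)}{\alpha\beta^2}-\frac{f(t)}{\beta M_L(t)}.
\]
The two terms compete, and concavity fails in general. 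Suppose $r(X)$ has a gap $(a,b)$ in its support, which the no-atom assumption allows. On $(a,b)$,
\[
F_L(t)=t+\alpha\log\big(\mathbb{P}(r\ge b)+e^{-t/(\alpha\beta)}\mathbb{E}[e^{r/(\alpha\beta)}\mathbf{1}\{r\le a\}]\big),
\]
which is strictly convex. For a concrete case, take two narrow equal-mass clusters at $0$ and $1$, with $\beta=0.2$ and $\alpha=1$. Then $\tilde q=\beta$ has three roots and $F_L$ has two local maxima.

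Your reflection shortcut fails for the same reason. With $s=-r$ and $u=-t$ it produces $-\big(u-\alpha\log\mathbb{E}[e^{-[s-u]_+/(\alpha\beta)}]\big)$, not $-F_R$, because $-\log\mathbb{E}[e^{-Y}]\neq\log\mathbb{E}[e^{Y}]$. Structurally, $F_R(t)$ is a supremum over $p$ of functions convex in $t$, so it is convex. $F_L(t)$ is a supremum over $p$ of functions concave in $t$, which need not be concave.

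The paper's proof makes the same sign error when it asserts $F_L''=-\frac{1}{\alpha\beta^2}\mathrm{Var}$, so the left half of the statement cannot be proved as written. What your computation actually yields is $F_L''\le\frac{1}{4\alpha\beta^2}$, so $-F_L$ is only weakly convex. Two-sided smoothness holds only under a density bound.
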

\looseness -1 This theorem ensures that the optimization landscape is unimodal, meaning any stationary point is the unique global optimum. Consequently, $t^*$ can be reliably determined using gradient-based methods. 
In practice, we approximate these expectations using a finite batch of $N$ samples $\{x_i\}_{i=1}^N$. Crucially, these samples are drawn from the pre-trained prior $p^{\text{pre}}$, which allows the samples and their reward scores to be collected offline and reused.
% Given that the objectives are one-dimensional, one can even use the grid search method to find the optimal threshold if the range of reward is bounded.
Ultimately, the combination of favorable theoretical properties, the sufficiency of offline samples, and the low dimensionality ensures that this stage is highly efficient. The resulting computational overhead is negligible compared to the subsequent generative fine-tuning.

\textbf{Stage 2: Threshold-Conditioned Fine-Tuning.}
Once $t^*$ is identified, \AlgNameShort constructs a \textit{pseudo-reward} $r^*(x)$ to guide fine-tuning. For right-CVaR, we define $r^*(x) = \frac{[r(x) - t^*]_+}{1-\beta}$, which creates non-zero gradients solely for novel samples exceeding the threshold. Conversely, for left-CVaR, we define $r^*(x) = -\frac{[t^* - r(x)]_+}{\beta}$, which penalizes only the worse failures below the threshold. The constructed pseudo-reward is then used by a standard entropy-regularized solver, such as Adjoint Matching~\citep{domingo2024adjoint}, to update the policy $\pi$ in a single fine-tuning step.

\begin{algorithm}[t]
   \caption{\AlgNameShort: \AlgNameLong}
   \label{alg:cvar_ft}
\begin{algorithmic}[1]
   \STATE {\bfseries Input:} Pre-trained model $u^{\text{pre}}$, reward $r(x)$, risk level $\beta$,  $ \text{mode} \in \{\text{Right}, \text{Left}\}$.
   \STATE \textit{// Stage 1: Optimal Threshold Optimization}
   \IF{$\text{mode} = \text{Right}$}
       \STATE $ t^* = \arg \min_t F_R(t)$.
   \ELSE
       \STATE $t^* = \arg \max_t F_L(t)$.
   \ENDIF
    % \STATE {\bfseries Return} $t^*$
   \STATE \textit{// Stage 2: Single Fine-Tuning Step}
   \IF{$\text{mode} = \text{Right}$}
       \STATE Construct pseudo-reward: $r^*(x) = \frac{[r(x) - t^*]_+}{1-\beta}$.
   \ELSE
       \STATE Construct pseudo-reward: $r^*(x) = -\frac{[t^* - r(x)]_+}{\beta}$.
   \ENDIF
   \STATE $u^{\text{finetune}} \leftarrow \text{FineTune}(p^{\text{pre}}, r^*)$. \COMMENT{via Adjoint Matching}
   \STATE {\bfseries Output} $u^{\text{finetune}}$
\end{algorithmic}
\end{algorithm}
\subsection{Comparison to non-linear generative optimization}

\begin{figure*}[t]
    \centering
    \includegraphics[width=1.0\linewidth]{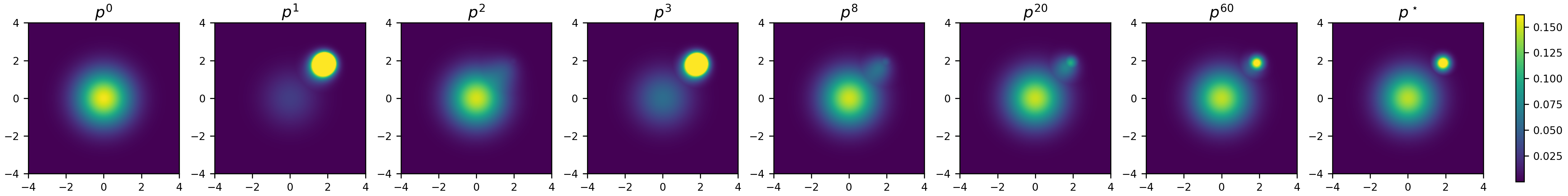}
    \caption{\looseness -1 Evolution of distributions $p^k$ under the FDC update rule \eqref{eq:update_final} in the 2D example. The sequence $p^k$ converges to our characterized target distribution $p_R^*$ defined in \eqref{eq:optimal_distribution}, which empirically validates our theory and confirms that \AlgNameShort bypasses the expensive iterative loop.}
    \label{fig:fdc:evolution}
    \vspace{-0.1cm}
\end{figure*}

\looseness -1 While \AlgNameShort leverages the variational dual form of CVaR to yield a two-stage procedure, an alternative and widely used route for handling non-linear objectives is to iteratively linearize them, reducing optimization to a sequence of tractable subproblems. \AlgNameShortFDC ~\citep{de2025flow} brings this idea to GO over the infinite-dimensional space of probability distributions. In this section, we clarify the relationship between \AlgNameShort and \AlgNameShortFDC within the context of tail-aware GO. Below we use right-CVaR as an example.

Specifically, at iteration $k$, \AlgNameShortFDC computes the next distribution by solving the following linear GO problem
\begin{align}\label{eq:FDC:update}
    p^{\pi_{k+1}} = \mathop{\arg \max}_{p^\pi} \left\langle \delta \mathcal{G}(p^{\pi_{k}}) , p^{\pi} \right\rangle - \eta D_{\text{KL}} (p^{\pi} || p^{\pi_k}),
\end{align}
where $\mathcal{G}$ is defined in \eqref{eq:general_problem}, $\eta > 0$ is a step-size parameter, and $p^{\pi_0} = p^{\text{pre}}$. Here, $\delta \mathcal{G}$ denotes the first variation, which can be interpreted as an infinite-dimensional gradient in the space of probability distributions. 
Assuming that \eqref{eq:FDC:update} is solved exactly, the updated distributions of FDC satisfy
\begin{align}
p^{\pi_{k+1}}(x)
\propto &
\big(p^{\pi_k}(x)\big)^{1- \alpha / \eta}  \big( p^{\mathrm{pre}}(x) \big)^{ \alpha / \eta} \nonumber \\
&\exp\!\Big(\frac{[r(x)-\mathrm{VaR}_\beta(p^{\pi_k})]_+}{\eta (1-\beta)}\Big).
\label{eq:update_final}
\end{align}

The detailed derivation of \eqref{eq:update_final} is provided in App.~\ref{sec:app:fdc:derivation}.
The update \eqref{eq:update_final} acts as a \textit{prior-anchored exponential tilt}. The first term anchors the distribution to a mixture of pre-trained prior and the last distribution to ensure stability, while the second term selectively amplifies the probability mass of samples exceeding the current Value-at-Risk. A visualization of the distribution updates is provided in Fig.~\ref{fig:fdc:evolution}. 
The iterative update \eqref{eq:update_final} defines a distribution operator
\begin{align}
\label{eq:T_def}
(\mathcal T p)(x)\;\propto \;&
\big(p(x)\big)^{1-\alpha/\eta}\,
\big(p^{\mathrm{pre}}(x)\big)^{\alpha/\eta} \nonumber \\
&\exp\!\left(\frac{[r(x)-\mathrm{VaR}_\beta(p)]_+}{\eta(1-\beta)}\right).
\end{align}

In what follows, we show that our characterized target distribution is a fixed-point solution of this distribution operator.
\begin{proposition}
\label{prop:fdc_fixed_point}
Let $\beta\in(0,1)$, $\alpha>0$, $\eta>0$.
The characterized optimal distribution $p_R^\star$ in Theorem~\ref{thm:cvar_duality} is a fixed-point solution of the distribution operator defined in \eqref{eq:T_def}, i.e., $\mathcal T(p_R^\star)=p_R^\star$.
\end{proposition}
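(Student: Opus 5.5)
The plan is to substitute $p=p_R^\star$ directly into the operator \eqref{eq:T_def} and use the two facts supplied by Theorem~\ref{thm:cvar_duality}. The first is the closed form \eqref{eq:optimal_distribution}:
\begin{equation*}
p_R^\star(x)=Z^{-1}p^{\mathrm{pre}}(x)\exp\!\left(\frac{[r(x)-t^\star]_+}{\alpha(1-\beta)}\right), \qquad Z>0 .
\end{equation*}
Here $Z<\infty$ because $r$ is bounded. The second is the quantile identity $\mathrm{VaR}_\beta(p_R^\star)=t^\star$. The latter lets us replace the state-dependent threshold $\mathrm{VaR}_\beta(p)$ in the exponential tilt of \eqref{eq:T_def} by the fixed number $t^\star$. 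Once that is done, the operator applied at $p_R^\star$ involves only explicit exponentials and powers of $p^{\mathrm{pre}}$.

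The first step is to write $g(x):=[r(x)-t^\star]_+/(1-\beta)$. The unnormalized expression in \eqref{eq:T_def} evaluated at $p_R^\star$ then becomes
\begin{equation*}
Z^{-(1-\alpha/\eta)}\big(p^{\mathrm{pre}}(x)\big)^{1-\alpha/\eta}e^{(1-\alpha/\eta)g(x)/\alpha}\,\big(p^{\mathrm{pre}}(x)\big)^{\alpha/\eta}\,e^{g(x)/\eta}.
\end{equation*}
The second step is to collect exponents. The powers of $p^{\mathrm{pre}}$ sum to $1$. The exponent of the tilt is
\begin{equation*}
g\left(\frac{1}{\alpha}-\frac{1}{\eta}+\frac{1}{\eta}\right)=\frac{g}{\alpha}.
\end{equation*}
Up to the constant $Z^{-(1-\alpha/\eta)}$, which is absorbed by normalization, the result is therefore $p^{\mathrm{pre}}(x)\exp\!\big([r(x)-t^\star]_+/(\alpha(1-\beta))\big)\propto p_R^\star(x)$. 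It remains to check that the normalizing constant of $\mathcal T p_R^\star$ is finite and positive. This holds because the expression is bounded above and below by positive multiples of $p^{\mathrm{pre}}$, again since $r$ is bounded. Hence $\mathcal T(p_R^\star)=p_R^\star$.

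The only substantive ingredient is the identity $\mathrm{VaR}_\beta(p_R^\star)=t^\star$, and I take it as given from Theorem~\ref{thm:cvar_duality}. Should one wish to double-check it, the route is the first-order condition $F_R'(t^\star)=0$. This gives $\mathbb{P}_{p_R^\star}(r(X)>t^\star)=1-\beta$, because $\partial_t[r-t]_+=-\mathbf 1\{r>t\}$. Continuity of the reward law under $p^{\mathrm{pre}}$ carries over to $p_R^\star$, since $p_R^\star\ll p^{\mathrm{pre}}$ with a bounded density ratio, and this pins the $\beta$-quantile at $t^\star$.

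A minor technicality is the case $\alpha/\eta>1$, where the power $1-\alpha/\eta$ is negative. Points with $p^{\mathrm{pre}}(x)=0$ then need the convention that the operator is supported on $\{p^{\mathrm{pre}}>0\}$. On that set every factor is positive, so the computation above goes through unchanged.
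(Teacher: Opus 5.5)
This is essentially the paper's own proof: you substitute $p_R^\star$ into \eqref{eq:T_def}, use $\mathrm{VaR}_\beta(p_R^\star)=t^\star$ from Theorem~\ref{thm:cvar_duality} to freeze the threshold, and collect exponents via $\frac{1-\alpha/\eta}{\alpha}+\frac{1}{\eta}=\frac{1}{\alpha}$. Your remarks on the normalizer and the support convention are harmless additions.

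One caution on your optional double-check. Absence of atoms together with $\mathbb{P}_{p_R^\star}(r(X)>t^\star)=1-\beta$ pins the infimum-based quantile at $t^\star$ only if $\mathbb{P}_{p^{\mathrm{pre}}}(t^\star-\epsilon<r(X)\le t^\star)>0$ for every $\epsilon>0$. If the reward law has a support gap just below $t^\star$, then $\mathrm{VaR}_\beta(p_R^\star)<t^\star$, and in fact $\mathcal T p_R^\star\propto p_R^\star\exp\bigl(c\,\mathbf 1\{r>t^\star\}\bigr)$ with $c>0$, so the fixed point fails. The paper's proof of Theorem~\ref{thm:cvar_duality} glosses over this same caveat.
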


\looseness -1 Proposition~\ref{prop:fdc_fixed_point} confirms that \AlgNameShort effectively jumps directly to the convergence point of the \AlgNameShortFDC trajectory. By identifying the optimal threshold $t^*$ via offline data, \AlgNameShort bypasses the expensive iterative loop entirely, reducing the cost from $K$ fine-tuning runs to a single one. A detailed comparison of computational complexity is presented in Table~\ref{tab:complexity}, highlighting \AlgNameShort's clear efficiency advantage over \AlgNameShortFDC.

\begin{table}[h]
\centering
\small
\caption{Generative optimization computational complexity comparison. $C_{\mathrm{FT}}$ denotes the cost of a single entropy-regularized fine-tuning run. $C_{\mathrm{Thresh}}$ denotes the cost of solving the scalar threshold optimization problem in Stage 1. $K$ denotes the number of iterations required for general functional optimization.}
\label{tab:complexity}
\begin{tabular}{lcc}
\toprule
Method &  FDC & \AlgNameShort \\
\midrule
Utility   & General & CVaR \\
Fine-tuning Calls  & $K$ & 1 \\
Complexity & $\mathcal{O}(K \cdot C_{\mathrm{FT}})$ & $\mathcal{O}(C_{\mathrm{Thresh}} + C_{\mathrm{FT}})$ \\
Exactness & no & yes \\
\bottomrule
\end{tabular}
\vspace{-0.2cm}
\end{table}

\section{Theoretical Analysis}
\label{sec:convergence}

We provide a theoretical guarantee for the two-stage procedure of \AlgNameShort. We first establish that finding $t^*$ in Stage 1 is tractable, and then show that the Stage 2 target distribution is robust to small numerical errors in $t^*$.

\subsection{Analysis of Stage 1}
\looseness -1 The threshold optimization in Stage 1 is a one-dimensional problem. As established in Theorem~\ref{thm:convexity}, the objective functions $F_R(t)$ (convex) and $F_L(t)$ (concave) are $L$-smooth, ensuring a well-behaved optimization landscape.

\looseness -1 A practical challenge arises from the stochastic estimation of the gradients. Due to the log-expectation structure of the objective, the derivatives $F_R'(t)$ and $F_L'(t)$ inherently take the form of a \textit{ratio of expectations}. When we approximate these gradients using a finite batch of $N$ samples, we encounter a fundamental statistical issue: the expectation of a ratio is not the ratio of expectations (i.e., $\mathbb{E}[\hat{A}/\hat{B}] \neq \mathbb{E}[\hat{A}]/\mathbb{E}[\hat{B}]$). Consequently, the gradient estimators are biased.

\looseness -1 Despite this bias, we establish that the optimization remains tractable. Under the mild assumption of bounded rewards, standard concentration arguments ensure that both the bias and the variance of the estimators decay at a rate of $\mathcal{O}(1/N)$. This property allows us to guarantee convergence to a neighborhood of the optimal threshold, as formalized below.

\begin{theorem}[Convergence of Stage 1, Informal]\label{thm:converge:stage_1}  Assume rewards are bounded. Run projected gradient descent with a  batch size of $N$. For both right- and left-CVaR cases, the optimality gap in stage 1 after $M$ steps is $\mathcal{O}\left(\frac{1}{\sqrt{M}} + \frac{1}{N}\right)$.
\end{theorem}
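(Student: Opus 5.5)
We treat the right-CVaR case in detail; the left-CVaR case follows by the symmetric argument with $F_L$ concave and $\beta$ in place of $1-\beta$. The plan is to analyze projected stochastic gradient descent on the convex, $L_R$-smooth function $F_R$ (Theorem~\ref{thm:convexity}) over a compact interval $\mathcal{T}=[r_{\min},r_{\max}]$, treating the minibatch gradient as a biased stochastic oracle. The bias and variance of this oracle are controlled separately. The optimum lies in $\mathcal{T}$: for $t<r_{\min}$ we have $F_R'(t)=1-\frac{1}{1-\beta}<0$, and for $t>r_{\max}$ we have $F_R'(t)=1>0$. Hence projection onto $\mathcal{T}$ loses nothing, and $D:=r_{\max}-r_{\min}$ bounds the diameter.

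\textbf{Step 1: gradient form and the estimator.} Write $w_t(x)=\exp\!\big([r(x)-t]_+/(\alpha(1-\beta))\big)$. Then
\[
F_R'(t)=1-\frac{1}{1-\beta}\,\frac{A(t)}{B(t)},\qquad A(t)=\mathbb{E}[w_t(X)\mathbf 1\{r(X)>t\}],\quad B(t)=\mathbb{E}[w_t(X)].
\]
The estimator $g_N(t)$ replaces $A$ and $B$ by their empirical means $\hat A,\hat B$ over $N$ i.i.d.\ samples from $p^{\mathrm{pre}}$. Boundedness gives $1\le w_t\le e^{D/(\alpha(1-\beta))}=:W$ on $\mathcal{T}$. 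Consequently $B\ge 1$, $0\le \hat A/\hat B\le 1$, and $|g_N|\le G:=\frac{1}{1-\beta}$.

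\textbf{Step 2: bias and variance of the ratio estimator.} This is the main technical step. Expand the ratio around $(A,B)$:
\[
\frac{\hat A}{\hat B}-\frac AB=\frac{\hat A-A}{B}-\frac{A(\hat B-B)}{B^2}+R,
\]
where the remainder satisfies $|R|\lesssim \frac{(\hat B-B)^2+|\hat A-A||\hat B-B|}{\hat B B}$. The linear terms have zero mean. Since $\hat B\ge1$ and $B\ge1$, we get $\mathbb{E}|R|\le C(W)/N$ from the variance bounds $\operatorname{Var}(\hat A),\operatorname{Var}(\hat B)\le W^2/N$. This yields $|\mathbb{E}g_N(t)-F_R'(t)|\le b/N$ with $b=O(W^2/(1-\beta))$, uniformly in $t\in\mathcal T$, and in the same way $\mathbb{E}|g_N-F_R'|^2\le \sigma^2/N$. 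The delicate point is keeping the constants uniform in $t$. The lower bound $\hat B\ge1$ avoids any small-denominator event, which is why boundedness suffices. The constant $W$ may be exponentially large in $D/\alpha$, but it is fixed.

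\textbf{Step 3: inexact projected SGD.} Run $t_{k+1}=\Pi_{\mathcal T}(t_k-\gamma g_N(t_k))$ with fresh batches. Nonexpansiveness of the projection and convexity give
\[
\mathbb{E}\|t_{k+1}-t^*\|^2\le \mathbb{E}\|t_k-t^*\|^2-2\gamma\,\mathbb{E}[F_R(t_k)-F_R(t^*)]+2\gamma D\,\tfrac{b}{N}+\gamma^2G^2.
\]
The cross term $\mathbb{E}[(g_N-F_R')(t_k-t^*)]$ is bounded by bias times diameter, after conditioning on $t_k$. Summing over $k$, choosing $\gamma=D/(G\sqrt M)$, and using Jensen's inequality on the averaged iterate $\bar t_M$ gives
\[
\mathbb{E}[F_R(\bar t_M)]-F_R(t^*)\le \frac{DG}{\sqrt M}+\frac{Db}{N}=\mathcal O\!\Big(\frac1{\sqrt M}+\frac1N\Big).
\]
Only the $O(1/N)$ bias enters this bound. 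The variance is absorbed into $G^2$; alternatively, one can exploit $L_R$-smoothness to get an $O(1/M)+O(\sigma^2/N)$-type split, but the stated rate does not need it.

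For $F_L$, apply the same argument to $-F_L$ with weights $\exp(-[t-r]_+/(\alpha\beta))\in[e^{-D/(\alpha\beta)},1]$. The denominator is now bounded below by $e^{-D/(\alpha\beta)}$, which plays the role of $B\ge1$ above.
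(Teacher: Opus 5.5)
Your argument is correct. It has the same skeleton as the paper's: an $O(1/N)$ bias bound for the ratio estimator, fed into a biased projected-SGD bound in which bias enters as diameter times bias. The SGD lemma, however, is genuinely different.

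The paper's Lemma~\ref{lemma:bias:convergence} uses the $L$-smoothness of Theorem~\ref{thm:convexity} via $f'(t)^2\le 2L(f(t)-f(t^*))$. This forces it to control $\mathbb{E}[e_k^2\mid t_k]\le\sigma^2+\epsilon^2$, which is the variance half of Lemma~\ref{lemma:bias:bound}. It also forces the restriction $\eta\le 1/(4L)$, so its choice $\eta=1/\sqrt M$ tacitly needs $M\ge 16L^2$.

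You instead use that $0\le\hat A\le\hat B$ bounds the estimator almost surely by $G=1/(1-\beta)$, and run the textbook bounded-gradient analysis. This needs neither smoothness nor a variance bound, and places no condition on $M$.

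What the paper's route buys is a smaller noise term, $O(\eta/N)$ instead of $\gamma G^2$. With a constant step $\eta=1/(4L)$ it would in fact give $O(1/M+1/N)$. That is the improvement you allude to, and the paper does not exploit it.

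Your bias bound via the exact second-order remainder with $\hat B\ge1$ is also cleaner than the paper's covariance decomposition. The paper passes through $\mathbb{E}[1/\bar V']-1/\mu_V\le\operatorname{Var}(1/\bar V')$, and this step is false in general. Take $N=1$ and $V'$ uniform on $\{1,3\}$: the left side is $1/6$ and the right side is $1/9$. The bound by $\operatorname{Var}(\bar V')$ that the paper ends with is nonetheless true, by exactly the kind of remainder identity you use.

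Finally, you make explicit two hypotheses that the paper's proof needs but leaves implicit:
\begin{itemize}
\item $t^*\in[r_{\min},r_{\max}]$, which you derive from the sign of $F_R'$ outside that interval.
\item A fresh batch at every step. Without it, the condition $|\mathbb{E}[e_k\mid t_k]|\le\epsilon$ in Lemma~\ref{lemma:bias:convergence} fails. This is what happens if a single offline batch is reused across iterations, as the main text suggests.
\end{itemize}
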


\looseness -1 The convergence bound reveals two distinct sources of error: an \textbf{optimization error} of order $\mathcal{O}(1/\sqrt{M})$, which reflects the standard convergence rate for convex optimization and vanishes as the iteration count $M \to \infty$; and a \textbf{statistical error} of order $\mathcal{O}(1/N)$, representing the irreducible error floor caused by the bias and variance of the ratio estimator, which vanishes as the batch size $N \to \infty$. Consequently, Stage 1 reliably provides a high-precision estimate of $t^*$, ensuring the stability of the subsequent fine-tuning stage. The complete statement and proof are deferred to Appendix~\ref{app:stage1_biased_sgd}.

\subsection{Analysis of Stage 2}

Since the optimal threshold $t^*$ is determined numerically in Stage 1, it inevitably contains a small approximation error. A critical theoretical question is whether this scalar error could lead to a catastrophic deviation in the target distribution constructed in Stage 2. We address this by establishing a bound on the Kullback-Leibler (KL) divergence between the ideal and approximate distributions.

\begin{theorem}[\textbf{Sensitivity to Threshold Estimation}]
\label{thm:sensitivity}
Let $\hat{t}$ be an approximation of the optimal threshold $t^*$ with estimation error $\delta = |\hat{t} - t^*|$. Let $p^*$ be the ideal target distribution defined by $t^*$, and $\hat{p}$ be the approximate distribution defined by $\hat{t}$. The KL divergence is bounded by 
$D_{KL}(p^* || \hat{p}) \le \frac{2\delta}{\lambda_{\mathrm{mode}}}$,
where $\lambda_{\mathrm{mode}} = \alpha(1-\beta)$ if $\mathrm{mode} = \mathrm{Right}$, and $\lambda_{\mathrm{mode}} = \alpha(1-\beta)$ if $\mathrm{mode} = \mathrm{Left}$.
\end{theorem}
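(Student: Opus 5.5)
The plan is to write both distributions as exponential tilts of $p^{\mathrm{pre}}$ with a common potential, and then bound the KL divergence by a sup-norm estimate on the difference of potentials. For the right mode set
\[
g_t(x) := \frac{[r(x)-t]_+}{\lambda}, \qquad \lambda=\alpha(1-\beta).
\]
Then $p^*=p^{\mathrm{pre}}e^{g_{t^*}}/Z(t^*)$ and $\hat p=p^{\mathrm{pre}}e^{g_{\hat t}}/Z(\hat t)$, where $Z(t)=\mathbb{E}_{p^{\mathrm{pre}}}[e^{g_t}]$. For the left mode use $g_t(x)=-[t-r(x)]_+/(\alpha\beta)$. The argument is identical, with $\lambda=\alpha\beta$ as the natural constant. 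The statement as printed repeats $\alpha(1-\beta)$ for the left mode; I read this as a typo for $\alpha\beta$, or else the bound holds with whatever constant appears as the scaling in the pseudo-reward.

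\textbf{Step 1 (KL as a difference of potentials).} Directly from the densities,
\[
D_{\mathrm{KL}}(p^*\|\hat p)=\mathbb{E}_{p^*}\big[g_{t^*}-g_{\hat t}\big]-\log Z(t^*)+\log Z(\hat t).
\]

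\textbf{Step 2 (Lipschitz control in $t$).} The map $z\mapsto[z]_+$ is 1-Lipschitz, so pointwise
\[
|g_{t^*}(x)-g_{\hat t}(x)|\le \frac{|t^*-\hat t|}{\lambda}=\frac{\delta}{\lambda}.
\]
This bounds the first term by $\delta/\lambda$. For the log-partition term,
\[
Z(\hat t)=\mathbb{E}_{p^{\mathrm{pre}}}\big[e^{g_{t^*}}e^{g_{\hat t}-g_{t^*}}\big]\le e^{\delta/\lambda}Z(t^*),
\]
so $\log Z(\hat t)-\log Z(t^*)\le\delta/\lambda$. Adding the two gives $D_{\mathrm{KL}}(p^*\|\hat p)\le 2\delta/\lambda$, which is the claim.

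\textbf{Step 3 (well-definedness).} Boundedness of $r$, assumed in Theorem~\ref{thm:cvar_duality} and Theorem~\ref{thm:cvar_duality_left}, ensures that $Z(t)$ is finite and positive and that the densities are mutually absolutely continuous. Hence all the expressions above are legitimate.

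There is no real obstacle. The only point needing care is the sign bookkeeping in the log-partition term, which is handled by the one-sided bound $e^{g_{\hat t}-g_{t^*}}\le e^{\delta/\lambda}$. Alternatively, a mean value argument using $|\partial_t \log Z(t)|\le 1/\lambda$ gives the same estimate, since $\partial_t\log Z$ is the $p_t$-expectation of $-\mathbf{1}\{r>t\}/\lambda$.
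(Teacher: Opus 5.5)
Your proof is correct and follows the paper's argument essentially verbatim. The paper also splits $D_{\mathrm{KL}}(p^*\|\hat p)$ into the $p^*$-expectation of the potential difference plus $\ln Z(\hat t)-\ln Z(t^*)$, bounds each term by $\delta/\lambda$ using the 1-Lipschitz property of $[\cdot]_+$ and the pointwise factor $e^{\pm\delta/\lambda}$ on the partition function, and dismisses the left case as symmetric. Your reading of the left-mode constant as $\lambda_{\mathrm{Left}}=\alpha\beta$ is the right one, since that is what the symmetric argument actually produces.
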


\looseness -1 Theorem \ref{thm:sensitivity} confirms the stability of our decoupled approach: a linearly small error in the threshold estimation leads to a linearly bounded error in the target distribution, scaled by $1/\alpha$. The proof can be found in Appendix~\ref{app:proof:thm_sensitivity}.

\section{Experiments}
\looseness -1 We evaluate the ability of \AlgNameShort $ $ in both risk-sensitive and novelty-seeking settings, and compare its performance against the pre-trained model (Pre-trained), Expected Fine-Tuning (EXP-FT) method, specifically Adjoint Matching ~\citep{domingo2024adjoint}, and \AlgNameShortFDC~\citep{de2025flow}, a recent state-of-the-art method for such problems. We present two types of experiments:  (i) Illustrative, yet visually interpretable settings to provide insights about the different optimization dynamics, and (ii) High-dimensional real-world applications, namely (a) risk-averse image generation to control the worst-case generated samples, and (b) novelty-seeking molecular design to enhance the chances of discovering best-case single-point energy minimizers~\citep{ friede2024dxtb}. Additional experimental details are provided in Appendix \ref{sec:experimental_details}.

\subsection{Illustrative Settings}

\looseness -1 To provide intuition for the behavior of CVaR-based generative optimization, we analyze a simple 2D example. We define the base distribution as a standard 2D normal distribution and the reward function as $r(x_1,x_2) = x_1 + x_2$. For this analysis, we directly compute the theoretical optimal distributions using importance sampling, assuming the generative model has sufficient capacity to represent them. 

\begin{figure}[t]
    \centering
    \begin{minipage}[b]{0.42\textwidth}
        \centering
        \includegraphics[width=\linewidth]{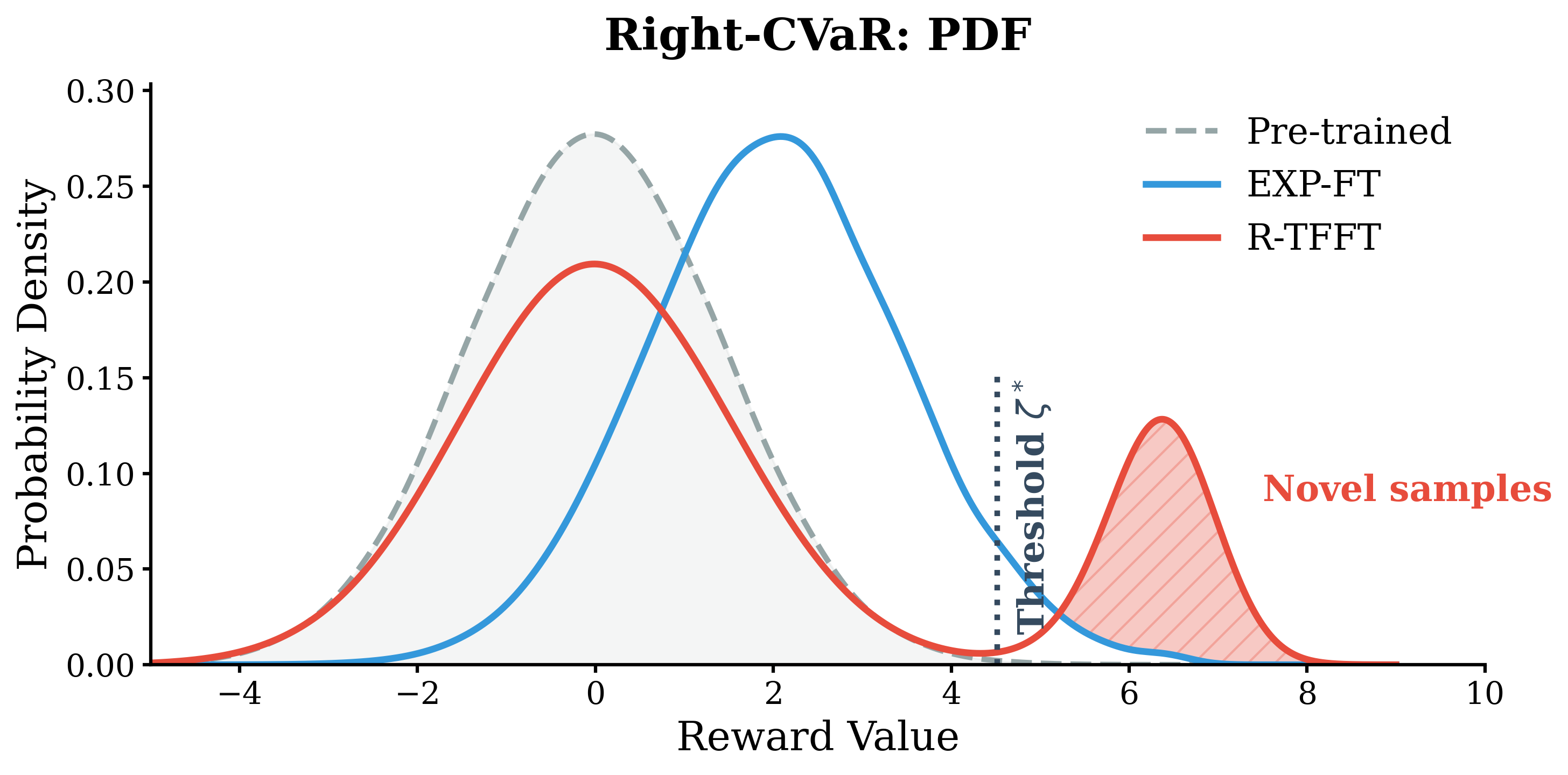}
    \end{minipage}
    \hfill 
    \begin{minipage}[b]{0.42\textwidth}
        \centering
        \includegraphics[width=\linewidth]{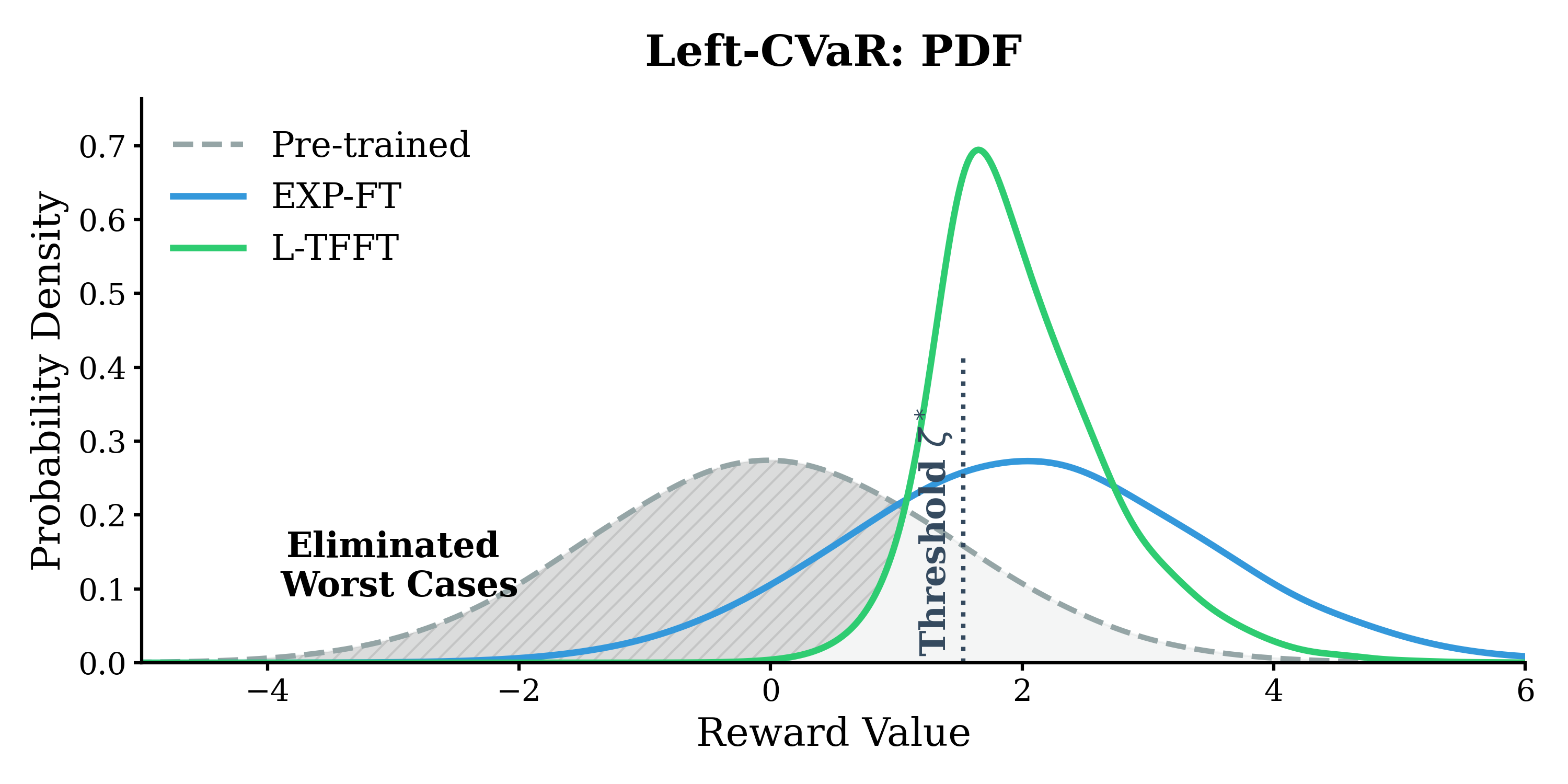}
    \end{minipage}
    \caption{\looseness -1 Probability Density Functions (PDFs) in the 2D example comparing Pre-trained, EXP-FT, and \AlgNameShort. \textbf{(Top}, right-CVaR, $\beta=0.8$): R-TFFT concentrates probability mass in the high-reward tail, while EXP-FT shifts the mean. \textbf{(Bottom}, left-CVaR, $\beta=0.2$): L-TFFT successfully truncates the lower tail, removing worst-case failures that persist in the EXP-FT distribution.}
    \label{fig:cvar:illustration:pdf}
    \vspace{-0.4cm}
\end{figure}

\looseness -1 For the right-CVaR case, we set $\beta=0.8$, targeting the largest 20\% of the distribution. As shown in Fig.~\ref{fig:cvar:illustration:pdf}, EXP-FT uniformly shifts the mean of the distribution, whereas our right-CVaR fine-tuning method R-TFFT selectively amplifies the probability mass in the upper tail, demonstrating its ability to generate novel samples that exceed the performance threshold $\zeta^*$. 
For the left-CVaR case, we set $\beta=0.2$, targeting the worst 20\% of the distribution. As shown in Fig.~\ref{fig:cvar:illustration:pdf}, while EXP-FT improves the average reward, it retains the shape of the Gaussian tail, implying a non-negligible probability of suboptimal outcomes. In contrast, the left-CVaR fine-tuning method L-TFFT aggressively suppresses the probability of worst-case outcomes, thereby improving the model's worst-case performance.

\looseness -1 \paragraph{Efficiency Discussion.} The efficiency gain of \AlgNameShort compared to \AlgNameShortFDC stems from avoiding the iterative distribution updates. As visualized in Fig.~\ref{fig:fdc:evolution}, \AlgNameShortFDC's distribution sequence $\{p^k\}$ oscillates significantly because the update rule depends on the current quantile $\text{VaR}_\beta(p^k)$. This creates a \emph{moving target} effect: a large distribution shift alters the threshold, which in turn changes the VaR value $\text{VaR}_\beta(p^{k+1})$ and the effective pseudo-reward $[r(x) - \text{VaR}_\beta(p^{k+1})]_+$. In contrast, \AlgNameShort bypasses this entirely by solving the dual form \eqref{eq:cvar_dual_form} offline using only the static prior $p^{\text{pre}}$. By identifying the optimal threshold $\text{VaR}_\beta(p^*)$ in advance, \AlgNameShort locks the target threshold, allowing the model to jump directly to the optimal solution in a single, stable fine-tuning step.

\subsection{Image Generation}

\begin{figure*}[ht!]
    \centering
    \begin{subfigure}
        \centering
        \begin{minipage}[c]{0.15\linewidth}
            \centering \small \textbf{Pre-trained}
        \end{minipage}%
        \begin{minipage}[c]{0.78\linewidth}
            \includegraphics[width=\linewidth]{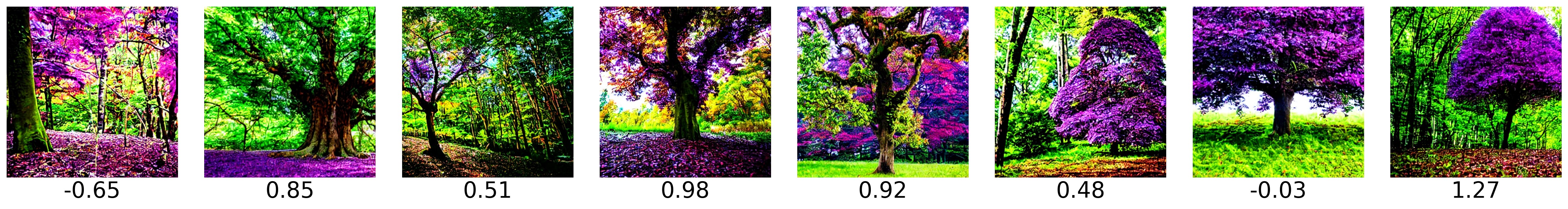}
        \end{minipage}
    \end{subfigure}
    \vspace{0.2em}
    \begin{subfigure}
        \centering
        \begin{minipage}[c]{0.15\linewidth}
            \centering \small \textbf{EXP-FT}
        \end{minipage}%
        \begin{minipage}[c]{0.78\linewidth}
            \includegraphics[width=\linewidth]{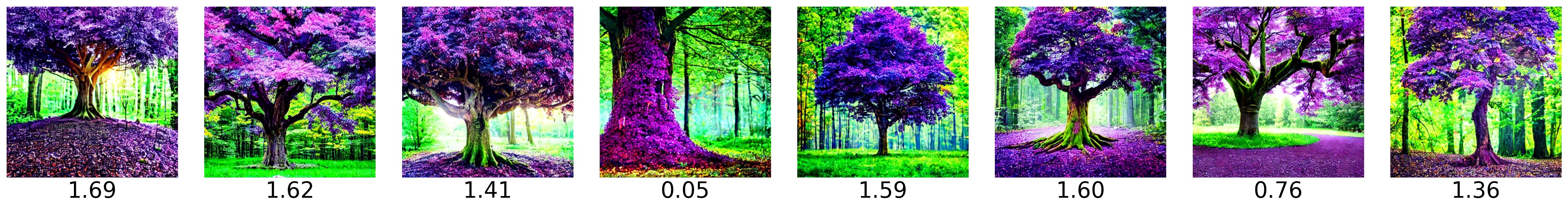}
        \end{minipage}
    \end{subfigure}
    \vspace{0.2em}
    \begin{subfigure}
        \centering
        \begin{minipage}[c]{0.15\linewidth}
            \centering \small \textbf{FDC}
        \end{minipage}%
        \begin{minipage}[c]{0.78\linewidth}
            \includegraphics[width=\linewidth]{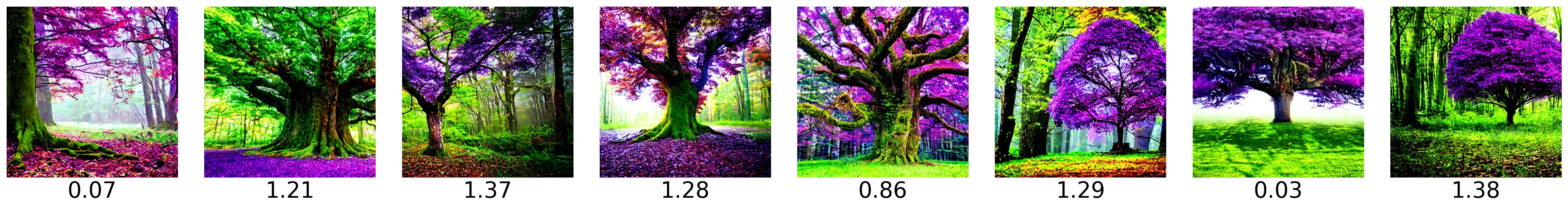}
        \end{minipage}
    \end{subfigure}
    \begin{subfigure}
        \centering
        \begin{minipage}[c]{0.15\linewidth}
            \centering \small \textbf{L-TFFT}
        \end{minipage}%
        \begin{minipage}[c]{0.78\linewidth}
            \includegraphics[width=\linewidth]{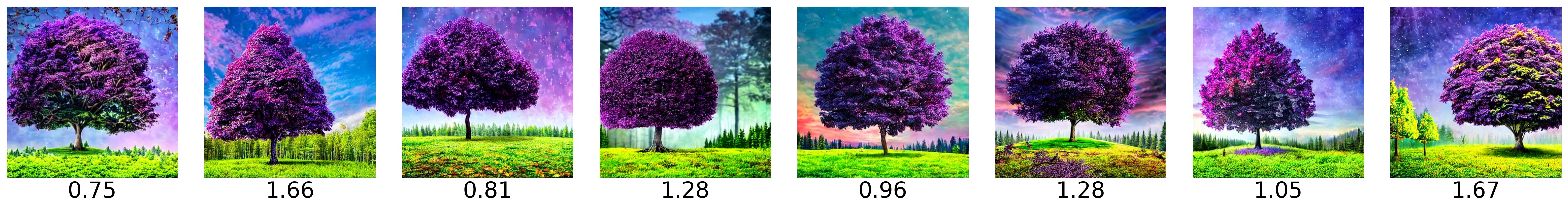}
        \end{minipage}
    \end{subfigure}
    \caption{\looseness -1 Qualitative comparison of generated samples for the prompt ``A tree with purple leaves in a green forest''. Each image is labeled with its ImageReward score. EXP-FT and FDC suffer from high variance and occasional failures. In contrast, L-TFFT produces consistently high-quality samples with a minimum score of 0.75, validating its ability to control the worst-case tail.}
    \label{fig:sd:main}
\end{figure*}

\begin{table*}[ht!]
    \caption{Quantitative comparison of fine-tuning methods on Stable Diffusion maximizing ImageReward. Under a fixed computational budget, our method L-TFFT matches the state-of-the-art mean reward but significantly improves the worst-case metric $\text{L-CVaR}_{0.2}[r]$. }
    \label{tab::stable_diffusion}
    \centering
    \begin{tabular}{ccccccc}
        \toprule
         & \shortstack{Training time \\  {\small GPU hours}}
         & \shortstack{$\mathbb{E}[r]$ $\uparrow$ \\ \;}
         & \shortstack{$\text{L-CVaR}_{0.2}[r]$ $\uparrow$\\ {\small (\text{bottom} 20\%)}}
         & \shortstack{CLIP-Score $\uparrow$ \\ \;}
         & \shortstack{HPS $\uparrow$   \\ \; }
         & \shortstack{DreamSim Var $\uparrow$\\ \; } \\ 
         \midrule
         Pre-trained & 0 & 0.271$_{\pm 0.073}$ & -1.147$_{\pm 0.034}$ & 0.276$_{\pm 0.003}$ & 0.255$_{\pm 0.003}$ & \textbf{0.350}$_{\pm 0.010}$\\
         EXP-FT  & 82 & \textbf{0.852}$_{\pm 0.065}$  & -0.370$_{\pm 0.035}$ & 0.279$_{\pm 0.003}$ & 0.278$_{\pm 0.026}$ & 0.306$_{\pm 0.009}$ \\
         FDC(K=2) & 82 & 0.633$_{\pm 0.067}$ & -0.604$_{\pm 0.033}$ & 0.278$_{\pm 0.003}$ & 0.269$_{\pm 0.003}$ & 0.321$_{\pm 0.009}$\\
         L-TFFT & 82 & 0.850$_{\pm 0.063}$ & \textbf{-0.306}$_{\pm 0.032}$ & \textbf{0.281}$_{\pm 0.003}$ & \textbf{0.280}$_{\pm 0.003}$ & 0.289$_{\pm 0.008}$\\
         \bottomrule
    \end{tabular}
    \vspace{-0.2cm}
\end{table*}

\looseness -1 To assess the efficacy of our L-TFFT in real-world applications, we apply it to high-dimensional text-to-image generation utilizing Stable-Diffusion v1-5 \cite{rombach2022high} as the backbone. We employ ImageReward~\citep{xu2023imagereward} as the objective function, which is a general-purpose preference model trained on human feedback to quantify visual aesthetics and text-image alignment. 
% We benchmark our approach against EXP-FT, FDC, and the pre-trained model. 
For each fine-tuning method, we fix the computational budget to be 82 GPU hours and compare their performance.
Our evaluation metrics include ImageReward (and its CVaR value), ClipScore \cite{hessel2021clipscore} for text-image consistency, HPSv2 \cite{wu2023human} for domain generalization, and DreamSim \cite{fu2023dreamsim} for diversity.

\looseness -1 We set $\beta = 0.2$ to target the bottom 20\% of the distribution. As summarized in Table~\ref{tab::stable_diffusion}, all three fine-tuning methods improve ImageReward values compared to the pre-trained baseline. However, under a fixed budget, \AlgNameShortFDC is less efficient and lags behind with a mean reward of 0.633. Crucially, while EXP-FT achieves the highest expected reward of 0.852, L-TFFT matches this expected performance (0.850) while significantly improving the worst-case metric $\text{L-CVaR}_{0.2}[r]$ from -0.370 (EXP-FT) to -0.306. This confirms L-TFFT's ability to raise the quality floor without sacrificing average performance. Their distribution curves are visualized in Fig.~\ref{fig:rewards_comparison}. The inverse CDF difference between L-TFFT and EXP-FT in the right panel highlights a distinct positive margin in the lower tail ($q<0.2$), confirming that L-TFFT successfully raises the quality floor of worst-case samples compared to EXP-FT. When generating multiple samples for a fixed prompt, EXP-FT and FDC occasionally produces low-fidelity samples and suffers from high variance, while L-TFFT effectively precludes worst-case degradation, as shown in Fig.~\ref{fig:sd:main}. Furthermore, our method demonstrates superior consistency and generalization, achieving the highest results on both CLIP-Score and HPSv2. While the slight decrease in DreamSim variance indicates a minor reduction in diversity, this reflects a necessary trade-off where the model suppresses diverse but poor-quality samples in the distribution tail.

\subsection{Molecular design}

\begin{table*}[ht]
    \caption{Quantitative comparison on molecular design. Our method R-TFFT achieves the highest right-CVaR values, indicating a superior ability to discover novel molecules in the distribution tail compared to FDC and EXP-FT. Crucially, it achieves this with 3$\times$ less training time than FDC and maintains higher chemical validity than EXP-FT.}
    \centering
    \begin{tabular}{ccccccc}
        \toprule
          & Training time& FT calls& $\mathbb{E}[r]$ & $\text{R-CVaR}_{0.9}[r]$ & Validity & SA score\\ 
         \midrule
         Pre-trained & 0 & 0 &21.9$_{\pm 1.21}$ & 161.5$_{\pm 10.6}$  & \textbf{87.4\%}$_{\pm 0.12\%}$ & 7.64$_{\pm 0.02}$ \\
         EXP-FT  & T & 1 & \textbf{29.8}$_{\pm 0.58}$  & 167.1$_{\pm 7.22}$ & 78.3\%$_{\pm 1.01\%}$ & 7.66$_{\pm 0.02}$\\
         FDC (K=3)  & $\approx$ 3T & 3& 29.7$_{\pm 0.85}$ & 172.5$_{\pm 11.8}$   & 82.7\%$_{\pm 0.47\%}$ & \textbf{7.56}$_{\pm 0.02}$\\
         R-TFFT & $\approx$ T & 1 & 27.1$_{\pm 2.29}$ &  \textbf{183.4}$_{\pm 15.8}$ & 85.7\%$_{\pm 0.59\%}$ &  7.70$_{\pm 0.04}$\\
         \bottomrule
    \end{tabular}
    \label{tab:right-cvar:molecule}
    \vspace{-0.2cm}
\end{table*}

\begin{figure}[ht]
    \centering
    \begin{tabular}{@{} p{0.07\textwidth} *{5}{>{\centering\arraybackslash}p{0.11\textwidth}} @{}}
        \multirow{2}{*}{\textbf{Pre-trained}} &
        \includegraphics[width=0.8\linewidth]{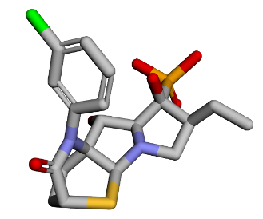} &
        \includegraphics[width=\linewidth]{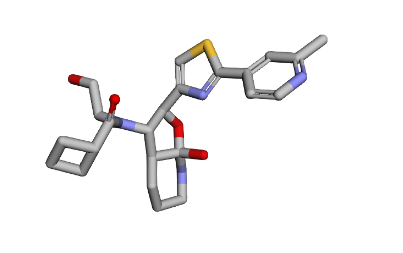} &
        \includegraphics[width=\linewidth]{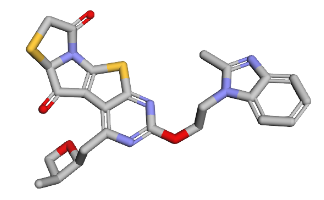} &\\
         & \small 840 & \small 826 & \small 755  \\[0.0ex] 
        \multirow{2}{*}{\textbf{EXP-FT}} &
        \includegraphics[width=0.8\linewidth]{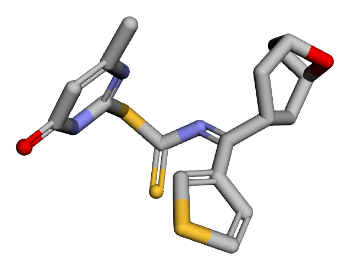} &
        \includegraphics[width=\linewidth]{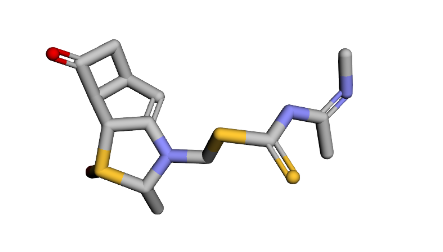} &
        \includegraphics[width=\linewidth]{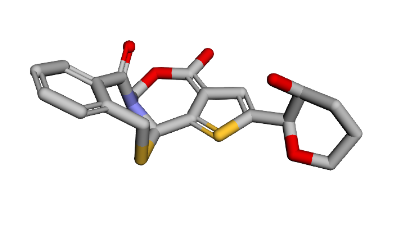} &\\
         & \small 1069 & \small 891 & \small 852  \\[0.0ex] 
        \multirow{2}{*}{\textbf{FDC}} &
        \includegraphics[width=\linewidth]{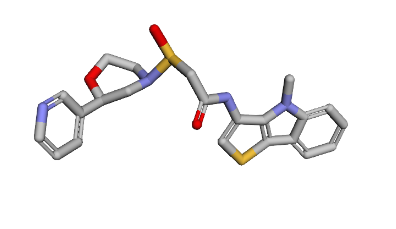} &
        \includegraphics[width=\linewidth]{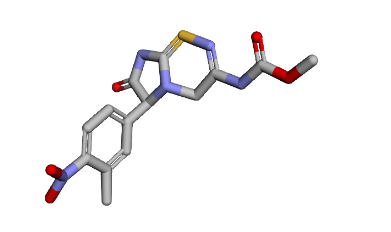} &
        \includegraphics[width=\linewidth]{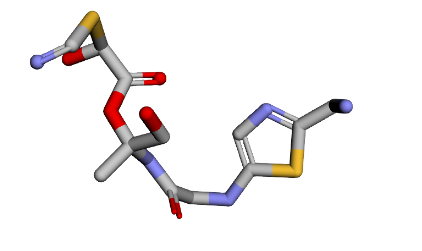} &\\
         & \small 2012 & \small 799 & \small 757  \\[0.0ex] 
        \multirow{2}{*}{\textbf{R-TFFT}} &
        \includegraphics[width=\linewidth]{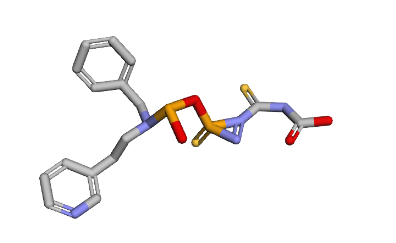} &
        \includegraphics[width=\linewidth]{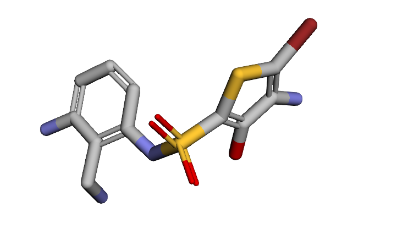} &
        \includegraphics[width=\linewidth]{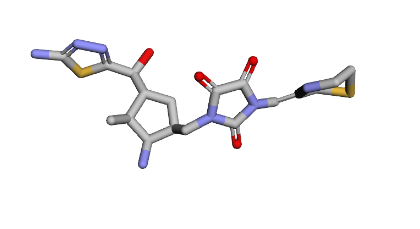} &\\
         & \small 5059 & \small 3237 & \small 1032 \\
    \end{tabular}
    \caption{Visualization of the top-3 molecules with the highest reward found by each method. R-TFFT discovers candidates with significantly higher stability (scores up to 5059) compared to the best candidates from EXP-FT (1069) and FDC (2012), validating its ability to explore the extreme high-reward tail.}
    \label{fig:mol:method_comparison}
    \vspace{-0.3cm}
\end{figure}

\looseness -1 To demonstrate the efficacy of right-CVaR, we apply it to the task of molecular generation. We employ the continuous-time Markov chain (CTMC) implementation of FlowMol \cite{dunn2025flowmol3} pretrained on the GEOM drugs dataset \cite{axelrod2022geom} as our base model. To optimize for chemical stability, we define the reward function as the negative xTB energy, calculated using GFN1-xTB \cite{friede2024dxtb}. Our objective is to discover top-performing candidates rather than maximize the average sample reward. To this end, we employ \AlgNameShort with $\beta =0.9$. We assess performance by sampling $2000$ molecules from each model across three independent runs, verify their chemical validity using RDKit Sanitization~\citep{rdkit}, and compute the energy distribution for the subset of molecules that pass this check.

As detailed in Table~\ref{tab:right-cvar:molecule}, R-TFFT achieves a superior right-CVaR of 183.4, significantly outperforming both EXP-FT (167.1) and the computationally heavier FDC (172.5). While FDC requires iterative updates totaling $\approx 3T$ training time, R-TFFT attains better results in a single fine-tuning step ($\approx T$), matching the computational cost of standard EXP-FT. Although EXP-FT yields the highest average reward, R-TFFT prioritizes discovery, generating valid, novel molecules in the high-reward tail (visualized in Fig.~\ref{fig:mol:method_comparison} ).
\looseness -1 In terms of chemical properties, R-TFFT maintains a high validity rate, outperforming both fine-tuning baselines and rivaling the pre-trained model. We attribute this to the modified reward function $[r-t]_+$, which effectively filters out low-reward samples. By limiting the number of samples contributing to model updates, the model stays close to the pre-trained model and preserves chemical validity. Regarding synthesizability, the Synthetic Accessibility (SA) score~\citep{ertl_estimation_2009} still remains stable (7.70 vs 7.64 for pre-trained).

\section{Related Works}

\looseness -1 \paragraph{Fine-tuning of generative flow models.} 
Fine-tuning pre-trained generative flow models with downstream reward functions has been rigorously formulated within the framework of stochastic optimal control \cite{tang2024fine,uehara2024fine,berner2022optimal,jensen2025value,wang2024training,blessing2025trust}, where scalable solvers like Adjoint Matching \cite{domingo2024adjoint} have rendered high-dimensional control tractable. Similarly, recent work introduced fine-tuning schemes for constrained optimization of expected rewards \cite{gutjahr2025constrained} and reward-guided flow merging~\citep{de2026unified}.
However, these methods are risk-neutral, focusing solely on maximizing the expected reward, rather than distributional measures. This formulation is often insufficient when utility is governed by the tail behavior of the reward distribution, whether for ensuring safety or facilitating the discovery of novel samples. An exception is FDC \cite{de2025flow}, which proposes a framework for optimizing general functionals of the generative density, including CVaR. However, FDC handles the non-linear CVaR objective by iteratively calling standard entropy-regularized solvers, which scales linearly with the number of iterations and is computationally expensive. In contrast, we demonstrate that for CVaR fine-tuning, this expensive iterative process can be bypassed entirely via a dual formulation, nearly matching the efficiency of standard expected reward maximization.

\looseness -1 \paragraph{Risk-sensitive learning.} Risk-sensitive learning extends standard expected utility maximization to account for the variability and tail behavior of a distribution, and has been extensively explored in reinforcement learning and control \cite{chow2018risk, tamar2015optimizing, urpi2021risk, curi2020adaptive}. Recently, these principles have been leveraged in generative modeling. For instance, \citet{kimgenerating} employ diffusion models primarily as auxiliary samplers to synthesize informative data for the robust training of separate downstream tasks. Analogously, \citet{foffano2025adversarial} leverage diffusion models as adversarial world models to generate worst-case trajectories for robust reinforcement learning. Unlike these approaches, which utilize generative models to synthesize training samples for downstream problems, our framework directly aims to control the risk of the generative model density itself. Similar schemes, although limited to the Left-CVaR formulation, have been proposed to mitigate the generation of toxic outputs in large language modeling~\citep{chaudhary2024risk}. In this work, we prioritize the use of right-CVaR (\ie novelty-seeking) to enhance discovery processes, while still supporting risk-aversion.

\paragraph{Convex reinforcement learning.}
\looseness -1 Convex RL~\citep{hazan2019maxent, zahavy2021reward} generalizes RL to the maximization of concave functionals of the state distribution induced by a policy over a system's state space. This formulation, which includes the CVaR functional studied in this work~\citep[\eg][]{mutti2022challenging, mutti2023convex}, has been recently extended to settings where one wishes to maximize objectives beyond average rewards under a limited, finite budget of samples~\citep[\eg][]{de2024global, de2024geometric, mutti2022importance, prajapat2023submodular, mutny2023active}. However, such methods are typically for non-combinatorial discrete spaces~\citep{hazan2019maxent, de2024global} or for continuous representations~\citep[\eg][]{liu2021behavior} and cannot be straightforwardly applied to deep generative models.

\section{Conclusion}
\looseness -1 In this paper, we introduced \AlgNameShort, an efficient framework that enables to seek novel samples via right-CVaR and mitigate worst cases via left-CVaR. 
We decompose the CVaR generative optimization into a lightweight scalar search and a single fine-tuning step. This approach maintains a computational cost comparable to standard expected utility maximization.
We validated \AlgNameShort on 2D, text-to-image, and molecular generation tasks, demonstrating that it efficiently and reliably shapes the reward tail.
Future work could extend this dual decomposition approach to spectral risk measures for more kinds of distribution shaping. 

\section*{Acknowledgements}
% This publication was made possible by the ETH AI Center doctoral fellowship to Riccardo De Santi. The project has received funding from the Swiss
% National Science Foundation under NCCR Catalysis grant number 180544 and NCCR Automation grant agreement 51NF40 180545. 

% The computations were enabled by resources provided by the National Academic Infrastructure for Supercomputing in Sweden (NAISS) at C3SE, partially funded by the Swedish Research Council through grant agreement no. 2022-06725.
This work was supported in part by the Swedish Research Council Distinguished Professor Grant (2017-01078) and a Knut and Alice Wallenberg Foundation Wallenberg Scholar Grant. This publication was made possible through an ETH AI Center doctoral fellowship awarded to Riccardo De Santi. The project also received funding from the Swiss National Science Foundation via the NCCR Catalysis (Grant No. 180544) and NCCR Automation (Grant Agreement No. 51NF40 180545). Computational resources were provided by the National Academic Infrastructure for Supercomputing in Sweden (NAISS) at C3SE, partially funded by the Swedish Research Council under Grant Agreement No. 2022-06725.

\bibliography{example_paper}
\bibliographystyle{icml2026}

\newpage
\appendix
\onecolumn

\section{Proofs for \AlgNameLong}

\subsection{Proof of Theorem~\ref{thm:cvar_duality}}\label{app:proof:reformulation_right}
Starting from the dual form of right-CVaR, we have:
\begin{align*}
    &\max_{p^{\pi}} \; \text{R-CVaR}_{\beta, p^{\pi}}[r(X)] - \alpha D_{\text{KL}}(p^{\pi} \parallel p^{\text{pre}}) \\
    &= \max_{p^{\pi}} \left\{ \min_{t} \left[ t + \frac{1}{1-\beta} \mathbb{E}_{p^{\pi}}[[r(X) - t]_+] \right] - \alpha D_{\text{KL}}(p^{\pi} \parallel p^{\text{pre}}) \right\} \\
    &= \min_{t} \max_{p^{\pi}} \left\{ t + \frac{1}{1-\beta} \mathbb{E}_{p^{\pi}}[[r(X) - t]_+] - \alpha D_{\text{KL}}(p^{\pi} \parallel p^{\text{pre}}) \right\},
\end{align*}
where the interchange of min and max follows from Sion's minimax theorem.
For fixed $t$, the inner maximization over $p^{\pi}$ is a Linear GO problem with reward function $r_t(x) = \frac{[r(x) - t]_+}{1-\beta}$. The optimal distribution is given by
\begin{equation*}
    q_{t,R}(x) \propto p^{\text{pre}}(x) \exp\left( \frac{r_t(x)}{\alpha} \right) = p^{\text{pre}}(x) \exp\left( \frac{[r(x) - t]_+}{\alpha(1-\beta)} \right).
\end{equation*}

Substituting this optimal distribution back into the objective and using the fact that the optimal value of the inner problem is $\alpha \log \mathbb{E}_{p^{\text{pre}}}[\exp(r_t(X)/\alpha)]$, we obtain
\begin{align*}
    &\max_{p^{\pi}} \left\{ t + \frac{1}{1-\beta} \mathbb{E}_{p^{\pi}}[[r(X) - t]_+] - \alpha D_{\text{KL}}(p^{\pi} \parallel p^{\text{pre}}) \right\} \\
    &= t + \alpha \log \mathbb{E}_{p^{\text{pre}}} \left[ \exp\left( \frac{[r(X) - t]_+}{\alpha(1-\beta)} \right) \right].
\end{align*}

Taking the minimum over $t$ completes the proof of \eqref{eq:cvar_dual_form}, and \eqref{eq:optimal_distribution} follows from the form of $q_{t,R}$ at the optimal $t^*$.

In what follows, we show that $t^*$ is exactly the VaR value of the target distribution $p^*_R$.
For simplicity of notation, define 
\begin{align*}
Z_R(t)\;\triangleq\;\mathbb E_{X\sim p^{\mathrm{pre}}}\!\left[\exp\big(\frac{[r(X)-t]_+}{\alpha(1-\beta)}\big)\right]
=\int_{\mathcal X} p^{\mathrm{pre}}(x)\exp\big(\frac{[r(x)-t]_+}{\alpha(1-\beta)}\big)\,dx,
\end{align*}
and then $q_{t,R}$ can be written as
\begin{align*}
    q_{t,R}(x) \;=\; \frac{1}{Z_R(t)} p^{\mathrm{pre}}(x) \exp \big(\frac{[r(x)-t]_+}{\alpha(1-\beta)}\big).
\end{align*}
Since the reward distribution of the pre-trained model is continuous, we have $\mathbb P_{X\sim p^{\mathrm{pre}}}(r(X)=t)=0$.
Then, it follows 
\begin{align*}
Z_R'(t)
=
-\frac{1}{\alpha(1-\beta)}\,
\mathbb E_{X\sim p^{\mathrm{pre}}}\!\left[
\exp\!\Big(\frac{[r(X)-t]_+}{\alpha(1-\beta)}\Big)\mathbf 1\{r(X)>t\}
\right].
\end{align*}

Define the objective in \eqref{eq:cvar_dual_form} as 
\begin{align*}
    F_R(t) \;=\; t +  \alpha \log \mathbb{E}_{X \sim p^{\text{pre}}} \left[ \exp\left( \frac{[r(X) - t]_+}{ \alpha(1-\beta)} \right) \right] = t + \alpha  \log  Z_R(t).
\end{align*}
Differentiating $F_R$ yields 
\begin{align*}
    F_R'(t)&=1+\alpha\frac{d}{dt}\log Z_R(t)= 1 + \alpha \frac{Z_R'(t)}{Z_R(t)} \nonumber \\
    &= 1 - \frac{1}{1-\beta} \frac{\mathbb E_{X\sim p^{\mathrm{pre}}}\!\left[\exp\!\Big(\frac{[r(X)-t]_+}{\alpha(1-\beta)}\Big) \mathbf 1\{r(X)>t\} \right] } {\mathbb E_{X\sim p^{\mathrm{pre}}}\!\left[ \exp\!\Big(\frac{[r(X)-t]_+}{\alpha(1-\beta)}\Big)\right]} \nonumber \\ 
    & = 1 - \frac{1}{1-\beta} \mathbb P_{X\sim q_{t,R}}(r(X)>t),
\end{align*}
where the lase equality follows from the definition of $q_{t,R}$.
Since $t^*$ is the optimal solution of \eqref{eq:cvar_dual_form}, we have $F_R'(t^*) = 0$. Then, we have 
\begin{align*}
    \mathbb P_{X\sim q_{t^\star,R}}(r(X)>t^\star)= \mathbb P_{X\sim p_{R}^\star}(r(X)>t^\star)=1-\beta,
\end{align*}
which, by definition leads to $\mathrm{VaR}_\beta(p_R^\star)=t^\star$. The proof is complete.

\subsection{Proof of Theorem~\ref{thm:cvar_duality_left}}\label{app:proof:reformulation_left}

This proof mirrors the logic in Appendix~\ref{app:proof:reformulation_right}. We present it here for completeness. 
Starting from the dual form of left-CVaR, we have
\begin{align*}
    &\max_{p^{\pi}} \; \text{L-CVaR}_{\beta, p^{\pi}}[r(X)] - \alpha D_{\text{KL}}(p^{\pi} \parallel p^{\text{pre}}) \\
    &= \max_{p^{\pi}} \left\{ \max_{t} \left[ t - \frac{1}{\beta} \mathbb{E}_{p^{\pi}}[[ t- r(X) ]_+] \right] - \alpha D_{\text{KL}}(p^{\pi} \parallel p^{\text{pre}}) \right\} \\
    &= \max_{t} \max_{p^{\pi}} \left\{ t - \frac{1}{\beta} \mathbb{E}_{p^{\pi}}[[t - r(X) ]_+] - \alpha D_{\text{KL}}(p^{\pi} \parallel p^{\text{pre}}) \right\}.
\end{align*}
For every fixed $t$, the inner maximization over $p^{\pi}$ is a Linear GO problem, and the optimal distribution is given by
\begin{equation*}
    q_{t,L}(x) \propto p^{\text{pre}}(x) \exp\left( -\frac{[t - r(x) ]_+}{\alpha \beta} \right).
\end{equation*}

Substituting this optimal distribution back into the objective and using the fact that the optimal value of the inner problem is $\alpha \log \mathbb{E}_{p^{\text{pre}}}[\exp( - [t- r(X)]_+/(\alpha  \beta))]$, we obtain
\begin{align*}
    &\max_{p^{\pi}} \left\{ t - \frac{1}{\beta} \mathbb{E}_{p^{\pi}}[[t - r(X) ]_+] - \alpha D_{\text{KL}}(p^{\pi} \parallel p^{\text{pre}}) \right\} \\
    &= t + \alpha \log \mathbb{E}_{p^{\text{pre}}} \left[ \exp\left( -\frac{[t-r(X) ]_+}{\alpha\beta} \right) \right].
\end{align*}

Taking the maximum over $t$ completes the proof of \eqref{eq:cvar_dual_form_left}, and \eqref{eq:optimal_distribution_left_cvar} follows from the form of $q_{t,L}(x)$ at the optimal $t^*$. 

In what follows, we show that $t^*$ is exactly the VaR value of the target distribution $p^*_L$. Define 
$$Z_L(t) = \mathbb{E}_{X \sim p^{\mathrm{pre}}} \left[ \exp\left( -\frac{[t - r(X)]_+}{\alpha\beta} \right) \right],$$
and then $q_{t,L}$ can be written as 
$$
q_{t,L}(x) = \frac{1}{Z_L(t)} p^{\text{pre}}(x) \exp\left( -\frac{[t - r(x) ]_+}{\alpha \beta} \right).
$$
Differentiating $Z_L(t)$ yields 
$$
Z_L'(t) = \mathbb{E}_{p^{\mathrm{pre}}} \left[ -\frac{1}{\alpha\beta} \mathbf{1}(r(X) < t) \exp\left( -\frac{[t - r(X)]_+}{\alpha\beta} \right) \right].
$$
Define the objective in \eqref{eq:cvar_dual_form_left} as 
$$
F_L(t) = t + \alpha \log Z_L(t).
$$
Differentiating $F_L$ yields
\begin{align*}
F_L'(t) &= 1 + \alpha \frac{Z_L'(t)}{Z_L(t)} \\
&= 1 + \alpha \frac{\mathbb{E}_{p^{\mathrm{pre}}} \left[ -\frac{1}{\alpha\beta} \mathbf 1(r(X) < t) \exp\left( -\frac{[t - r(X)]_+}{\alpha\beta} \right) \right]}{Z_L(t)} \\
&= 1 - \frac{1}{\beta} \int \mathbf 1(r(x) < t) \frac{p^{\mathrm{pre}}(x) \exp\left( -\frac{[t - r(x)]_+}{\alpha\beta} \right)}{Z_L(t)} dx \\
& = 1 - \frac{1}{\beta} \mathbb P_{X\sim q_{t,L}}(r(X)<t),
\end{align*}
where the lase equality follows from the definition of $q_{t,L}$. Since $t^*$ is the optimal solution of  \eqref{eq:cvar_dual_form_left}, we have $F_L'(t^*) = 0$, which yields
$$
\mathbb{P}_{q_{t,L}^*} (r(X) < t^*) = \mathbb{P}_{p_L^*} (r(X) < t^*) = \beta.
$$
According to the definition of VaR, we have $\mathrm{VaR}_\beta(p_L^*) = t^*$. The proof is complete.

\subsection{Proof of Theorem~\ref{thm:convexity}}

We first derive the properties for the right-CVaR minimization case.
% Let $\lambda_R = \alpha(1-\beta)$. 
Recall that
\begin{equation*}
    F_R(t) = t + \alpha \log \mathbb{E}_{x \sim p_{1}^{pre}} \left[ \exp \left( \frac{[r(x) - t]_+}{\alpha(1-\beta)} \right) \right].
\end{equation*}
Let $Z_R(t) = \mathbb{E}_{x \sim p_{1}^{pre}} \left[ \exp \left( \frac{[r(x) - t]_+}{\alpha(1-\beta)} \right) \right]$. The derivative with respect to $t$ is:
\begin{equation*}
    F_R'(t) = 1 + \frac{\alpha}{Z_R(t)} \frac{d}{dt} \mathbb{E} \left[ \exp \left( \frac{[r(x) - t]_+}{\alpha(1-\beta)} \right) \right].
\end{equation*}
Using the property $\frac{d}{dt}[r(x)-t]_+ = -\mathbf{1}(r(x) > t)$, where $\mathbf{1}$ is the indicator function, we obtain
\begin{equation*}
    F_R'(t) = 1 - \frac{1}{1-\beta} \frac{\mathbb{E}_{x \sim p_{1}^{pre}} \left[ \mathbf{1}(r(x) > t) \exp \left( \frac{[r(x) - t]_+}{\alpha(1-\beta)} \right) \right]}{Z_R(t)}.
\end{equation*}
By defining the tilted distribution $q_{t,R}(x) \propto p_{1}^{pre}(x) \exp\left( \frac{[r(x) - t]_+}{\alpha(1-\beta)} \right)$, the gradient can be expressed as:
\begin{equation*}
    F_R'(t) = 1 - \frac{1}{1-\beta} \mathbb{E}_{x \sim q_{t,R}} [\mathbf{1}(r(x) > t)].
\end{equation*}
To establish convexity and smoothness, we differentiate $F_R'(t)$ with respect to $t$, which yields $ F_R''(t) = -\frac{1}{1-\beta} \int \mathbf{1}(r(x) > t)\frac{d}{dt} q_{t,R}(x) dx $. 
Note that 
\begin{equation*}
    \frac{d}{dt} q_{t,R}(x) = \frac{1}{\alpha(1-\beta)} q_{t,R}(x) \left( \mathbb{E}_{q_{t,R}}[\mathbf{1}(r(x) > t)] - \mathbf{1}(r(x) > t) \right).
\end{equation*}
Then, we obtain
\begin{align}
    F_R''(t)  
    &= -\frac{1}{1-\beta} \int \mathbf{1}(r(x) > t) \left[ \frac{1}{\alpha(1-\beta)} q_{t,R}(x) \left( \mathbb{E}_{q_{t,R}}[\mathbf{1}(r(x) > t)] - \mathbf{1}(r(x) > t) \right) \right] dx \nonumber\\
    &= \frac{1}{\alpha(1-\beta)^2} \left( \mathbb{E}_{q_{t,R}}[\mathbf{1}(r(x) > t)^2] - \left(\mathbb{E}_{q_{t,R}}[\mathbf{1}(r(x) > t)]\right)^2 \right) \nonumber\\
    &= \frac{1}{\alpha(1-\beta)^2} \text{Var}_{x \sim q_{t,R}}[\mathbf{1}(r(x)>t)] \label{eq:hessian_var}
\end{align}
Since variance is non-negative and $\alpha(1-\beta)^2 > 0$, $F_R''(t) \ge 0$ for all $t$, proving that $F_R(t)$ is strictly convex.

Since the variance of any indicator variable $\mathbf{1} \in \{0, 1\}$ is strictly bounded by $\frac{1}{4}$, we have
\begin{equation*}
    |F_R''(t)| \le \frac{1}{\alpha(1-\beta)^2} \cdot \frac{1}{4} = L_R.
\end{equation*}
This establishes that $ F_R(t)$ is smooth with the parameter $L_R$.

For left-CVaR optimization, the objective function to maximize is:
% let $\lambda_L = \alpha\beta$. The objective function to maximize is:
\begin{equation*}
F_L(t) = t + \alpha \log \mathbb{E}_{x \sim p^{\text{pre}}} \left[ \exp\left( -\frac{[t - r(x)]+}{\alpha\beta} \right) \right].
\end{equation*}
Similarly, we have
\begin{equation*} 
    F_L'(t) = 1 - \frac{1}{\beta} \frac{\mathbb{E} \left[ \mathbf{1}(r(x) < t) \exp \left( -\frac{[t - r(x)]+}{\alpha\beta} \right) \right]}{\mathbb{E} \left[ \exp \left( -\frac{[t - r(x)]+}{\alpha\beta} \right) \right]}. 
\end{equation*}
By defining the tilted distribution $q_{t,L}(x) \propto p^{\text{pre}}(x) \exp\left( -\frac{[t - r(x)]_+}{\alpha\beta} \right)$, we get:
\begin{equation*}
F_L'(t) = 1 - \frac{1}{\beta} \mathbb{E}_{x \sim q_{t,L}} [\mathbf{1}(r(x) < t)].
\end{equation*}
Differentiating again with respect to $t$ using the same density shift logic as the right-CVaR case, we obtain
\begin{align*}
F_L''(t) 
&= -\frac{1}{\beta} \frac{d}{dt} \mathbb{E}_{x \sim q_{t,L}} [\mathbf{1}(r(x) < t)] \nonumber \\
&= -\frac{1}{\alpha \beta^2} \text{Var}_{x \sim q_{t,L}}[\mathbf{1}(r(x) < t)].
\end{align*}
Since the variance is always non-negative, we have $F_L''(t) \le 0$, proving that $F_L(t)$ is strictly concave.
Applying the upper bound for the variance of an indicator variable, we get
\begin{equation*}
|F_L''(t)| \le \frac{1}{\alpha \beta^2} \cdot \frac{1}{4} := L_{L}.
\end{equation*}
This establishes that $F_L(t)$ is smooth with the parameter $L_L$. The proof is complete.

\subsection{ Theorem~\ref{thm:converge:stage_1} and its Proof}\label{app:stage1_biased_sgd}

In this section, we provide the formal theorem for the convergence of stage 1 as well as its proof. Recalling the definitions of $F_R$ and $F_L$ in \eqref{eq:right_cvar_obj} and \eqref{eq:left_cvar_obj}, their gradients are given by
\begin{align*}
    F_R'(t) &= 1 - \frac{1}{1-\beta} \frac{\mathbb{E}_{x \sim p^{\text{pre}}} \left[ \mathbf{1}(r(x) > t) \exp \left( \frac{[r(x) - t]_+}{\alpha (1-\beta)} \right) \right]}{\mathbb{E}_{x \sim p^{\text{pre}}} \left[ \exp \left( \frac{[r(x) - t]_+}{\alpha (1-\beta )} \right) \right]}, \\
    F_L'(t) &= 1 - \frac{1}{\beta} \frac{\mathbb{E}_{x \sim p^{\text{pre}}} \left[ \mathbf{1}(r(x) < t) \exp \left( -\frac{[t- r(x) ]_+}{\alpha \beta } \right) \right]}{\mathbb{E}_{x \sim p^{\text{pre}}} \left[ \exp\left( -\frac{[t - r(x)]_+}{\alpha\beta} \right) \right]}.
\end{align*}
In what follows, we present the formal version of Theorem 6.1.

\textbf{Theorem 6.1} \; 
\textit{Suppose that we are given $N$ samples $x_i$, $i=1,...\ldots,N$. We construct the gradient estimators
\begin{align}
    \hat{g}_N^R(t) = 1- \frac{1}{1-\beta} \frac{\sum_{i=1}^N \mathbf{1}(r(x_i) > t) \exp \left( \frac{[r(x_i) - t]_+}{\alpha (1-\beta)} \right)} { \sum_{i=1}^N \exp \left( \frac{[r(x_i) - t]_+}{\alpha (1-\beta)} \right)} ,
\end{align}
\begin{align}
    \hat{g}_N^L(t) = 1- \frac{1}{\beta} \frac{\sum_{i=1}^N \mathbf{1}(r(x_i) < t) \exp \left( -\frac{[t - r(x_i) ]_+}{\alpha \beta} \right)} { \sum_{i=1}^N \exp \left( - \frac{[t - r(x_i) ]_+}{\alpha \beta} \right)}.
\end{align}
for $F_R'(t)$ and $F_L'(t)$, respectively.
Assume rewards lie in a bounded interval $\mathcal{I}$. 
For right-CVaR, let the update rule
$t_{k+1} = \Pi_{\mathcal{I}}(t_k - \eta \hat{g}_N^R(t_k))$, then we have $\mathbb{E}[F_R(\bar{t}_M) - F_R(t^*)] \le \mathcal{O}\left(\frac{1}{\sqrt{M}}\right) + \mathcal{O}\left(\frac{1}{N}\right)$. For the left-CVaR, let the update rule $t_{k+1} = \Pi_{\mathcal{I}}(t_k + \eta \hat{g}_N^L(t_k))$, then we have $\mathbb{E}[ F_L(t^*)- F_L(\bar{t}_M)] \le \mathcal{O}\left(\frac{1}{\sqrt{M}}\right) + \mathcal{O}\left(\frac{1}{N}\right)$.}

To prove Theorem 6.1, we establish general lemmas regarding ratio estimators and biased SGD, and then apply them to the specific geometry of right- and left-CVaR.

\subsubsection{Properties of the Ratio Gradient Estimator}
Both CVaR gradients take the form of a ratio of expectations. We first bound the bias and variance of the sample-based approximation.

\begin{lemma}[Bias and Variance of Ratio Estimators.] \label{lemma:bias:bound}
Let $U, V$ be random variables such that $0 \le U \le V$ almost surely and $V \in [m, M]$ almost surely for constants $0 < m \le M$. Let $\{(U_i, V_i)\}_{i=1}^N$ be i.i.d. copies. Define $\bar{U} = \frac{1}{N}\sum U_i$ and $\bar{V} = \frac{1}{N}\sum V_i$. Let $\rho = \mathbb{E}[U]/\mathbb{E}[V]$ and $\hat{\rho} = \bar{U}/\bar{V}$. Let $\Xi := M/m \ge 1$. Then:
$$\left|\mathbb{E}[\hat{\rho}] - \rho\right| \le \frac{\Xi^3}{4N}, \quad \text{and} \quad \text{Var}(\hat{\rho}) \le \frac{\Xi^4}{N}.$$
\end{lemma}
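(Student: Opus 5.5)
The approach is a second-order (delta-method) expansion of the ratio $\bar U/\bar V$ around $(\mathbb{E}[U],\mathbb{E}[V])$. Write $\mu_U=\mathbb{E}[U]$, $\mu_V=\mathbb{E}[V]\in[m,M]$, and $\Delta_V=\bar V-\mu_V$. The starting point is the exact algebraic identity
\begin{equation*}
\hat\rho-\rho=\frac{\bar U-\rho\bar V}{\bar V}
=\frac{\bar U-\rho\bar V}{\mu_V}-\frac{(\bar U-\rho\bar V)\,\Delta_V}{\mu_V\,\bar V}.
\end{equation*}
It follows from $\frac{1}{\bar V}=\frac{1}{\mu_V}-\frac{\Delta_V}{\mu_V\bar V}$. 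The first term has mean zero because $\mathbb{E}[\bar U-\rho\bar V]=\mu_U-\rho\mu_V=0$. So the bias equals $-\mathbb{E}\big[(\bar U-\rho\bar V)\Delta_V/(\mu_V\bar V)\big]$.

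For the bias, use $\bar V\ge m$ and $\mu_V\ge m$ almost surely. This gives $|\mathbb{E}[\hat\rho]-\rho|\le \frac{1}{m^2}\mathbb{E}\big[|\bar U-\rho\bar V|\,|\Delta_V|\big]$. Applying Cauchy--Schwarz and i.i.d.\ averaging reduces this to $\frac{1}{m^2 N}\sqrt{\mathrm{Var}(U-\rho V)\,\mathrm{Var}(V)}$. Since $V\in[m,M]$, Popoviciu gives $\mathrm{Var}(V)\le (M-m)^2/4\le M^2/4$. Also $0\le U\le V$ forces $\rho\in[0,1]$, so $U-\rho V$ ranges in $[-M,M]$. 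A tighter accounting is available because $U-\rho V\in[-\rho V,(1-\rho)V]$, an interval of length $V\le M$, so its variance is at most $M^2/4$ as well. Either way the bias is at most $c\,M^2/(m^2 N)$ with $c\le 1/4$ after care. Since $\Xi\ge1$, this is bounded by $\Xi^3/(4N)$, which is deliberately loose. If the direct constant does not come out to $1/4$, I would absorb it by the slack between $\Xi^2$ and $\Xi^3$, bounding $\rho\le1$ and $\mu_V\le M$ crudely where needed.

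For the variance, write $\mathrm{Var}(\hat\rho)\le\mathbb{E}[(\hat\rho-\rho)^2]$ and use the first form of the identity, $\hat\rho-\rho=(\bar U-\rho\bar V)/\bar V$. With $\bar V\ge m$ this yields $\mathbb{E}[(\hat\rho-\rho)^2]\le \frac{1}{m^2}\mathbb{E}[(\bar U-\rho\bar V)^2]=\frac{\mathrm{Var}(U-\rho V)}{m^2N}\le \frac{M^2}{m^2N}\le\frac{\Xi^4}{N}$. This is again loose by powers of $\Xi$.

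The main obstacle is the bias step. The naive approach of Taylor expanding $1/\bar V$ gives a remainder that must be controlled almost surely. The exact identity above avoids any remainder, since it needs only the deterministic lower bound $\bar V\ge m$, which holds because every $V_i\ge m$. Beyond that, the only care needed is in the constants: verifying the $1/4$ from Popoviciu-type variance bounds, and confirming that the stated powers $\Xi^3$ and $\Xi^4$ dominate what the argument produces.
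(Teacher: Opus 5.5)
Your proof is correct, but it uses a different decomposition from the paper's.

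\textbf{The paper's route.} It rescales by $m$ and splits the bias into two pieces:
\begin{itemize}
\item a covariance term $\mathbb{E}[(\bar U'-\mu_U)(1/\bar V'-1/\mu_V)]$, handled by Cauchy--Schwarz plus the Lipschitz transfer $\mathrm{Var}(1/\bar V')\le\mathrm{Var}(\bar V')$;
\item a Jensen-gap term $\mu_U(\mathbb{E}[1/\bar V']-1/\mu_V)$.
\end{itemize}
For the variance it uses the Lipschitz estimate $|u/v-u'/v'|\le|u-u'|+\Xi|v-v'|$ together with $(a+b)^2\le 2a^2+2b^2$.

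\textbf{Your route.} You center the numerator as $\bar U-\rho\bar V$, which has mean exactly zero. One exact identity then leaves a single product term for the bias and a single squared term for the variance. Each is controlled by $\bar V,\mu_V\ge m$ and Popoviciu.

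\textbf{What each buys.} Your route gives sharper constants. The bias is at most $\Xi(\Xi-1)/(4N)$ and the variance at most $\Xi^2/(4N)$, against the paper's $\Xi^2(\Xi-1)/(4N)$ and $\Xi^4/N$. It also never has to bound the Jensen gap. The paper bounds it via the intermediate claim $\mathbb{E}[1/\bar V']-1/\mu_V\le\mathrm{Var}(1/\bar V')$, which is false in general. For $N=1$ and $V'$ uniform on $\{1,2\}$, the gap is $1/12$ while that variance is $1/16$. The paper's end bound $\le\mathrm{Var}(\bar V')$ does survive, via $1/v-1/\mu=-(v-\mu)/\mu^2+(v-\mu)^2/(v\mu^2)$. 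The paper's split is more modular, separating the correlation between $\bar U$ and $\bar V$ from the curvature of $1/v$. Yours is shorter and tighter.

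\textbf{Two details to tighten.}
\begin{itemize}
\item Popoviciu needs a deterministic range. State $U-\rho V\in[-\rho M,(1-\rho)M]$, which follows from $0\le\rho\le1$ and $0\le U\le V\le M$ and is an interval of length $M$. The remark about an interval of length $V$ does not by itself justify $\mathrm{Var}(U-\rho V)\le M^2/4$.
\item Drop the fallback of absorbing a worse constant into the slack between $\Xi^2$ and $\Xi^3$. The crude range $[-M,M]$ combined with $\mathrm{Var}(V)\le M^2/4$ gives $\Xi^2/(2N)$, which exceeds $\Xi^3/(4N)$ when $\Xi<2$. You genuinely need either the sharp range or $\mathrm{Var}(V)\le(M-m)^2/4$; the latter gives $\Xi(\Xi-1)/(2N)\le\Xi^3/(4N)$ for all $\Xi\ge1$.
\end{itemize}
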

\begin{proof}

First, we normalize the variables to simplify the bounds. Define $U' = U/m$ and $V' = V/m$. It follows that $0 \le U' \le V'$ and $V' \in [1, \Xi]$ almost surely. Note that the ratio remains invariant: $\hat{\rho} = \bar{U}/\bar{V} = \bar{U}'/\bar{V}'$ and $\rho = \mathbb{E}[U']/\mathbb{E}[V']$. Thus, it suffices to prove the bounds for the rescaled variables. Let $\mu_U = \mathbb{E}[U']$ and $\mu_V = \mathbb{E}[V']$. Note that $\mu_V \ge 1$ and $\mu_U \le \Xi$.

We decompose the error by adding and subtracting terms:
$$\mathbb{E}\left[\frac{\bar{U}'}{\bar{V}'}\right] - \frac{\mu_U}{\mu_V} = \mathbb{E}\left[\frac{\bar{U}'}{\bar{V}'} - \frac{\bar{U}'}{\mu_V} + \frac{\bar{U}'}{\mu_V} - \frac{\mu_U}{\mu_V}\right] = \mathbb{E}\left[\bar{U}'\left(\frac{1}{\bar{V}'} - \frac{1}{\mu_V}\right)\right].$$

We can further rewrite the expectation of the product as the covariance plus the product of expectations
$$\mathbb{E}\left[\bar{U}'\left(\frac{1}{\bar{V}'} - \frac{1}{\mu_V}\right)\right] = \mathbb{E}\left[(\bar{U}' - \mu_U)\left(\frac{1}{\bar{V}'} - \frac{1}{\mu_V}\right)\right] + \mu_U\left(\mathbb{E}\left[\frac{1}{\bar{V}'}\right] - \frac{1}{\mu_V}\right).$$

For the first term, by the Cauchy-Schwarz inequality, we have
$$\left|\mathbb{E}\left[(\bar{U}' - \mu_U)\left(\frac{1}{\bar{V}'} - \frac{1}{\mu_V}\right)\right]\right| \le \sqrt{\text{Var}(\bar{U}')\text{Var}\left(\frac{1}{\bar{V}'}\right)}.$$

Since the function $\phi(v) = 1/v$ has derivative $\phi'(v) = -1/v^2$ and $| \phi'(v) | \le 1$ for $v \ge 1$, $\phi$ is 1-Lipschitz on $[1, \infty)$. For a $L-$Lipschitz continuous function $f$, we have $\text{Var}(f(X)) \le L^2 \text{Var}(X)$. Therefore, $\text{Var}(1/\bar{V}') \le \text{Var}(\bar{V}')$. Also, for bounded random variables in $[1, \Xi]$, the variance of the sample mean is bounded by $\text{Var}(\bar{V}') \le \frac{(\Xi-1)^2}{4N}$. Similarly, since $U' \le V' \le \Xi$, $\text{Var}(\bar{U}') \le \frac{\Xi^2}{4N}$.
Thus, we have
$$\sqrt{\text{Var}(\bar{U}')\text{Var}\left(\frac{1}{\bar{V}'}\right)} \leq \sqrt{\frac{\Xi^2}{4N} \cdot \frac{(\Xi-1)^2}{4N}} = \frac{\Xi(\Xi-1)}{4N}.$$

For the second term, we have
$$\mu_U\left(\mathbb{E}\left[\frac{1}{\bar{V}'}\right] - \frac{1}{\mu_V}\right) \le \mu_U \text{Var}\left(\frac{1}{\bar{V}'}\right) \le \mu_U \text{Var}(\bar{V}') \le \mu_U \frac{(\Xi-1)^2}{4N} \leq \Xi \frac{(\Xi-1)^2}{4N}.$$
Summing the two terms, we obtain
$$\left|\mathbb{E}[\hat{\rho}] - \rho\right| \le \frac{\Xi(\Xi-1)}{4N} + \frac{\Xi(\Xi-1)^2}{4N} = \frac{\Xi(\Xi-1)}{4N}(1 + \Xi - 1) = \frac{\Xi^2(\Xi-1)}{4N} \le \frac{\Xi^3}{4N}.$$

In what follows, we bound the variance. For $u, u' \in [0, \Xi]$ and $v, v' \in [1, \Xi]$, we have
$$\left|\frac{u}{v} - \frac{u'}{v'}\right| = \left|\frac{uv' - u'v}{vv'}\right|  = \left| \frac{v'(u - u') + u'(v' - v)}{v v'} \right| \le \frac{|u - u'|}{|v|} + \frac{|u'(v' - v)|}{|v v'|} \le |u - u'| + \Xi|v - v'|.
$$
We replace the generic variables $u, v$ with our random sample means $\bar{U}', \bar{V}'$ and the constants $u', v'$ with the true means $\mu_U, \mu_V$, which yields 
$$\left| \frac{\bar{U}'}{\bar{V}'} - \frac{\mu_U}{\mu_V} \right| \le |\bar{U}' - \mu_U| + \Xi |\bar{V}' - \mu_V|.$$
Then, we have
$$\text{Var}\left(\frac{\bar{U}'}{\bar{V}'}\right) \le \mathbb{E}\left[\left(\frac{\bar{U}'}{\bar{V}'} - \frac{\mu_U}{\mu_V}\right)^2\right] \le 2\mathbb{E}[(\bar{U}' - \mu_U)^2] + 2\Xi^2\mathbb{E}[(\bar{V}' - \mu_V)^2]
\le 2\text{Var}(\bar{U}') + 2\Xi^2\text{Var}(\bar{V}')$$
Substituting the variance bounds $\text{Var}(\bar{U}') \le \frac{\Xi^2}{4N}$ and $\text{Var}(\bar{V}') \le \frac{(\Xi-1)^2}{4N}$, we have
$$\text{Var}(\hat{\rho}) \le 2\left(\frac{\Xi^2}{4N}\right) + 2\Xi^2\left(\frac{\Xi^2}{4N}\right) \le \frac{\Xi^4}{N}.$$
The proof is complete.
\end{proof}

\subsubsection{Convergence of Biased Projected SGD}
We derive a convergence bound for minimizing a smooth convex function using a biased gradient oracle.

\begin{lemma}\label{lemma:bias:convergence}
Let $f: \mathbb{R} \rightarrow \mathbb{R}$ be convex and $L$-smooth with minimizer $t^*$. Fix an interval $\mathcal{I}$ containing $t^*$ with diameter $D$. Let the update be $t_{k+1} = \Pi_{\mathcal{I}}(t_k - \eta \hat{g}_k)$, where $\hat{g}_k = f'(t_k) + e_k$. Assume $|\mathbb{E}[e_k | t_k]| \le \epsilon$ and $\mathbb{E}[e_k^2 | t_k] \le \sigma^2 + \epsilon^2$. If $\eta \le \frac{1}{4L}$, then we have
$$\mathbb{E}[f(\bar{t}_M) - f(t^*)] \le \frac{(t_0 - t^*)^2}{\eta M} + 2\eta(\sigma^2 + \epsilon^2) + 2D\epsilon.$$
\end{lemma}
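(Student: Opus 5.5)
Fix a step $k$ and expand the squared distance to $t^*$ after the projection. Since $\mathcal{I}$ is convex and contains $t^*$, the projection is non-expansive, so
\begin{align*}
(t_{k+1}-t^*)^2 \le (t_k - t^*)^2 - 2\eta\,\hat g_k (t_k - t^*) + \eta^2 \hat g_k^2 .
\end{align*}
Substitute $\hat g_k = f'(t_k) + e_k$ and take the conditional expectation given $t_k$. The cross term splits into $-2\eta f'(t_k)(t_k-t^*)$ and $-2\eta\,\mathbb{E}[e_k\mid t_k](t_k-t^*)$. The second piece is bounded by $2\eta\epsilon D$, using $|t_k - t^*|\le D$ (both points lie in $\mathcal{I}$) and $|\mathbb{E}[e_k\mid t_k]|\le\epsilon$. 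For the quadratic term, use $(a+b)^2\le 2a^2+2b^2$ to get $\eta^2\,\mathbb{E}[\hat g_k^2\mid t_k]\le 2\eta^2 f'(t_k)^2 + 2\eta^2(\sigma^2+\epsilon^2)$.

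The key step is absorbing $2\eta^2 f'(t_k)^2$ into the descent term. Convexity and $L$-smoothness give co-coercivity in one dimension, $f'(t_k)^2 \le L\, f'(t_k)(t_k - t^*)$, since $f'(t^*)=0$ at an interior minimizer. If $t^*$ is only constrained-optimal in $\mathcal{I}$, I would instead assume it is interior, which holds in our application because $F_R'$ changes sign inside the reward range. Convexity also gives $f'(t_k)(t_k-t^*) \ge f(t_k)-f(t^*)$. With $\eta\le 1/(4L)$ we have $2\eta^2 L \le \eta/2$, so
\begin{align*}
\mathbb{E}[(t_{k+1}-t^*)^2\mid t_k] \le (t_k-t^*)^2 - \eta\,(f(t_k)-f(t^*)) + 2\eta\epsilon D + 2\eta^2(\sigma^2+\epsilon^2),
\end{align*}
where I have kept only half of $2\eta f'(t_k)(t_k-t^*)$ as the descent term $\eta(f(t_k)-f(t^*))$.

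Next take total expectations, sum over $k=0,\dots,M-1$, telescope, and divide by $\eta M$. This gives
\[
\frac{1}{M}\sum_k \mathbb{E}[f(t_k)-f(t^*)] \le \frac{(t_0-t^*)^2}{\eta M} + 2\eta(\sigma^2+\epsilon^2) + 2D\epsilon .
\]
Jensen's inequality for the convex $f$ at the average iterate $\bar t_M = \frac1M\sum_k t_k$ then yields the stated bound.

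The main obstacle is the co-coercivity step. A rough treatment of the gradient variance loses the factor-of-two bookkeeping needed to obtain exactly the constants claimed. If it fails, I would bound $f'(t_k)^2$ instead by $L^2 D^2$ or by $2L(f(t_k)-f(t^*))$, and adjust the constants.
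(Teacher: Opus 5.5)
Your proof is correct and follows the paper's argument step for step: non-expansive projection, the bias term bounded by $2\eta D\epsilon$, $(a+b)^2\le 2a^2+2b^2$, absorption of $2\eta^2 f'(t_k)^2$, telescoping, and Jensen. It lands on exactly the stated constants. The one difference is how you absorb $f'(t_k)^2$: you use one-dimensional co-coercivity, $f'(t_k)^2\le L\,f'(t_k)(t_k-t^*)$, before invoking convexity, whereas the paper applies convexity first and then uses $f'(t_k)^2\le 2L\,(f(t_k)-f(t^*))$. Your version actually goes through under the weaker step-size condition $\eta\le 1/(2L)$. Your hedge about interiority is unnecessary: $t^*$ is the minimizer of $f$ over all of $\mathbb{R}$, so $f'(t^*)=0$ is given, and the paper's inequality relies on the same fact.
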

\begin{proof}

By the non-expansiveness of the Euclidean projection $\Pi_{\mathcal{I}}$, we have:$$|t_{k+1} - t^*|^2 \le |t_k - \eta \hat{g}_k - t^*|^2 = |t_k - t^*|^2 - 2\eta(t_k - t^*)\hat{g}_k + \eta^2 \hat{g}_k^2.$$ Substituting $\hat{g}_k = f'(t_k) + e_k$ yields 
$$|t_{k+1} - t^*|^2 \le |t_k - t^*|^2 - 2\eta(t_k - t^*)f'(t_k) - 2\eta(t_k - t^*)e_k + \eta^2(f'(t_k) + e_k)^2.$$
By convexity, $f(t_k) - f(t^*) \le f'(t_k)(t_k - t^*)$. Rearranging the inequality to isolate the function value gap, we obtain 
\begin{align*}
    2\eta(f(t_k) - f(t^*)) \le &|t_k - t^*|^2 - |t_{k+1} - t^*|^2 - 2\eta(t_k - t^*)e_k + \eta^2(f'(t_k) + e_k)^2 \\
    & \le |t_k - t^*|^2 - |t_{k+1} - t^*|^2 - 2\eta(t_k - t^*)e_k + 2\eta^2(f'(t_k)^2 + 2\eta^2 (e_k)^2 \\
    &  \le  |t_k - t^*|^2 - |t_{k+1} - t^*|^2 - 2\eta(t_k - t^*)e_k + 4\eta^2(L(f(t_k) - f(t^*)) + 2\eta^2 (e_k)^2,
\end{align*} 
where the last inequality follows since $(f'(t))^2 \le 2L(f(t) - f(t^*))$.
Rearranging and taking expectations on both sides, we have
$$\mathbb{E}[f(t_k) - f(t^*)] \le \frac{\mathbb{E}[|t_k - t^*|^2] - \mathbb{E}[|t_{k+1} - t^*|^2]}{2\eta} + D\epsilon + \frac{\eta}{2}\left(4L\mathbb{E}[f(t_k) - f(t^*)] + 2(\sigma^2 + \epsilon^2)\right).$$
Rearranging terms yields
$$(1 - 2\eta L)\mathbb{E}[f(t_k) - f(t^*)] \le \frac{\mathbb{E}[|t_k - t^*|^2] - \mathbb{E}[|t_{k+1} - t^*|^2]}{2\eta} + \eta(\sigma^2 + \epsilon^2) + D\epsilon$$
Assuming $\eta \le \frac{1}{4L}$, we have $1 - 2\eta L \ge 1/2$. Therefore, we have
$$\mathbb{E}[f(t_k) - f(t^*)] \le \frac{\mathbb{E}[|t_k - t^*|^2] - \mathbb{E}[|t_{k+1} - t^*|^2]}{\eta} + 2\eta(\sigma^2 + \epsilon^2) + 2D\epsilon.$$
Summing up yields 
$$\frac{1}{M}\sum_{k=0}^{M-1} \mathbb{E}[f(t_k) - f(t^*)] \le \frac{|t_0 - t^*|^2}{\eta M} + 2\eta(\sigma^2 + \epsilon^2) + 2D\epsilon.$$
By Jensen's inequality, $f(\bar{t}_M) - f(t^*) \le \frac{1}{M}\sum (f(t_k) - f(t^*))$, which completes the proof.
\end{proof}

\subsubsection{Proof of Theorem 6.1}

We now apply the above results to our specific Stage 1 objectives. Assume the reward is bounded $r(x) \in [r_{min}, r_{max}]$.

\textbf{Right-CVaR (Minimization).}
The objective $F_R(t)$ is convex and $L_R$-smooth (Theorem~\ref{thm:convexity}). The gradient is $F'_R(t) = 1 - \frac{1}{1-\beta}\frac{A(t)}{B(t)}$, where $A(t) = \mathbb{E}[\exp([r(X)-t]_+ \mathbf{1}(r(x)>t) / (\alpha(1-\beta)))]$, $B(t) = \mathbb{E}[\exp([r(X)-t]_+ / (\alpha(1-\beta)))]$.
Since rewards are bounded, $B(t)$ is bounded away from zero for $t$ in a compact range (specifically $B(t) \ge 1$ as the exponent is non-negative). The numerator $A(t)$ is similarly bounded.
Thus, by Lemma~\ref{lemma:bias:bound}, the gradient estimator $\hat{g}_N^R$ has bias $\epsilon = \mathcal{O}(1/N)$ and variance $\sigma^2 = \mathcal{O}(1/N)$.
Applying Lemma~\ref{lemma:bias:convergence} with $\eta = \frac{1}{\sqrt{M}}$, we have
$$\mathbb{E}[F_R(\bar{t}_M) - F_R(t^*)] \le \mathcal{O}\left(\frac{1}{\sqrt{M}}\right) + \mathcal{O}\left(\frac{1}{N}\right).$$

\textbf{left-CVaR (Maximization).} The objective $F_L(t)$ is concave. The gradient term involves $B_L(t) = \mathbb{E}[\exp(-[t-r(X)]_+ / \alpha \beta)]$. Since rewards are bounded, the exponent is bounded below, ensuring $B_L(t) \ge e^{-(r_{max}-r_{min})/(\alpha \beta)} > 0$.
The conditions of Lemma~\ref{lemma:bias:bound} are satisfied. Applying Lemma~\ref{lemma:bias:convergence} to $-F_L$ yields the analogous convergence result.

\subsection{Proof of Theorem~\ref{thm:sensitivity}}\label{app:proof:thm_sensitivity}

We prove the sensitivity bound for the right-CVaR case in detail. The result for left-CVaR follows from symmetric arguments.

Let $t^*$ be the optimal threshold and $\hat{t}$ be the numerical approximation such that $\delta = |t^* - \hat{t}|$.
Recall that for right-CVaR, the optimal distribution $p_R^*$ and the approximate distribution $\hat{p}_R$ are defined as:
\begin{align*}
    p_R^*(x) &= \frac{1}{Z_R(t^*)} p^{\text{pre}}(x) \exp\left( \frac{[r(x) - t^*]_+}{\alpha(1-\beta)} \right), \\
    \hat{p}_R(x) &= \frac{1}{Z_R(\hat{t})} p^{\text{pre}}(x) \exp\left( \frac{[r(x) - \hat{t}]_+}{\alpha(1-\beta)} \right),
\end{align*}
where $Z_R(t) = \mathbb{E}_{x \sim p^{\text{pre}}} \left[ \exp\left( \frac{[r(x) - t]_+}{\alpha(1-\beta)} \right) \right]$. 
The Kullback-Leibler (KL) divergence is given by:
\begin{align}
    D_{\text{KL}}(p_R^* || \hat{p}_R) &= \mathbb{E}_{x \sim p^*} \left[ \ln p_R^*(x) - \ln \hat{p}_R(x) \right] \nonumber \\
    &= \mathbb{E}_{x \sim p_R^*} \left[ \frac{[r(x) - t^*]_+}{\alpha(1-\beta)} - \ln Z_R(t^*) - \left( \frac{[r(x) - \hat{t}]_+}{\alpha(1-\beta)} - \ln Z_R(\hat{t}) \right) \right] \nonumber \\
    &= \underbrace{\frac{1}{\alpha(1-\beta)} \mathbb{E}_{x \sim p_R^*} \left[ [r(x) - t^*]_+ - [r(x) - \hat{t}]_+ \right]}_{\text{(A)}} + \underbrace{\ln Z_R(\hat{t}) - \ln Z_R(t^*)}_{\text{(B)}}.
    \label{eq:kl_decomposition}
\end{align}

\textbf{Term (A):} Since $|\,[a]_+ - [b]_+\,| \le |a - b|$, we have
\begin{equation*}
    (\text{A}) \le \frac{1}{\alpha(1-\beta)} \mathbb{E}_{x \sim p_R^*} [\delta] = \frac{\delta}{\alpha(1-\beta)}.
\end{equation*}

\textbf{Term (B):}
Using the Lipschitz property, for any $x$:
\begin{equation*}
    \left| [r(x) - t^*]_+ - [r(x) - \hat{t}]_+ \right| \le \left| (r(x) - t^*) - (r(x) - \hat{t}) \right| = |\hat{t} - t^*| = \delta.
\end{equation*}
We also have
\begin{equation*}
    [r(x) - t^*]_+ - \delta \le [r(x) - \hat{t}]_+ \le [r(x) - t^*]_+ + \delta.
\end{equation*}
We substitute these inequalities into the definition of the partition function $Z(\hat{t})$:
\begin{align*}
    Z_R(\hat{t}) &= \mathbb{E}_{x \sim p^{\text{pre}}} \left[ \exp\left( \frac{[r(x) - \hat{t}]_+}{\alpha(1-\beta)} \right) \right] \nonumber \\
    &\le \mathbb{E}_{x \sim p^{\text{pre}}} \left[ \exp\left( \frac{[r(x) - t^*]_+ + \delta}{\alpha(1-\beta)} \right) \right] \nonumber \\
    &= e^{\delta/\alpha(1-\beta)} Z_R(t^*).
\end{align*}
Taking the natural logarithm yields $\ln Z_R(\hat{t}) - \ln Z_R(t^*) \le \frac{\delta}{\alpha(1-\beta)}$. Similarly, using the lower bound $[r(x) - \hat{t}]_+ \ge [r(x) - t^*]_+ - \delta$, we obtain $\ln Z_R(\hat{t}) - \ln Z_R(t^*) \ge -\frac{\delta}{\alpha(1-\beta)}$. Thus, we have
\begin{equation*}
    |(\text{B})| \le \frac{\delta}{\alpha(1-\beta)}.
\end{equation*}

Substituting the bounds for (A) and (B) back into \eqref{eq:kl_decomposition}, we get
\begin{equation*}
    D_{\text{KL}}(p_R^* || \hat{p}) \le  \frac{2\delta}{\alpha(1-\beta)}.
\end{equation*}

For the left-CVaR case, it is symmetric to the right-CVaR case and omitted here. The proof is complete.

\clearpage
\section{Detailed Algorithm}
To ensure completeness, below we provide a detailed pseudocode for our TFFT based on the adjoint matching \cite{domingo2024adjoint} framework.

\begin{algorithm}
\caption{Detailed \AlgNameLong}
\begin{algorithmic}[1]
\STATE \textbf{Input:} Pre-trained FM velocity field $u^{\text{pre}}$, step size $h$, number of fine-tuning iterations $N$, reward  $r$, coefficient $\alpha$ as in \eqref{eq:general_problem}, and the mode $M \in \{ \text{Right}, \text{Left}\}$.
\STATE \textbf{\textit{// Stage 1: Optimal Threshold Optimization}}
\IF{$M = \text{Right}$}
    \STATE Set Search Goal: $\min_t F_R(t)$.
\ELSE
    \STATE Set Search Goal: $\max_t F_L(t)$.
\ENDIF
\STATE Use numerical methods like gradient ascent
\STATE {\bfseries Return} $t^*$
\STATE \textbf{\textit{// Stage 2: Single Fine-Tuning Step}}
\IF{$M = \text{Right}$}
    \STATE Construct pseudo-reward: $r^*(x) = \frac{[r(x) - t^*]_+}{1-\beta}$.
\ELSE
    \STATE Construct pseudo-reward: $r^*(x) = -\frac{[t^* - r(x)]_+}{\beta}$.
\ENDIF
\STATE Initialize fine-tuned vector fields: $u^{\text{finetune}_{\theta}} = u^{\text{pre}}$ with parameters $\theta$.
\FOR{$n \in \{0, \dots, N-1\}$}
    \STATE Sample $m$ trajectories $\boldsymbol{X} = (X_t)_{t \in \{0, \dots, 1\}}$ with memoryless noise schedule $\sigma(t)$ \cite{domingo2024adjoint}, \textit{e.g.}:
    \begin{equation}
        X_{t+h} = X_t + h \left( 2 u_\theta^{\text{finetune}}(X_t, t) - \frac{\dot{\alpha}_t}{\alpha_t} X_t \right) + \sqrt{h} \sigma(t) \varepsilon_t, \quad \varepsilon_t \sim \mathcal{N}(0, I), \quad X_0 \sim \mathcal{N}(0, I). 
    \end{equation}
    
    \STATE For each trajectory, solve the \textit{lean adjoint ODE} backwards in time from $t=1$ to $0$, \textit{e.g.}:
    \begin{equation}
        \tilde{a}_{t-h} = \tilde{a}_t + h \tilde{a}_t^\top \nabla_{X_t} \left( 2 u^{\text{pre}}(X_t, t) - \frac{\dot{\alpha}_t}{\alpha_t} X_t \right), \quad \tilde{a}_1 = -\nabla_{X_1} r^*(X_1). 
    \end{equation}
    
    \STATE Note that $X_t$ and $\tilde{a}_t$ should be computed without gradients, \textit{i.e.}, $X_t = \texttt{stopgrad}(X_t)$, $\tilde{a}_t = \texttt{stopgrad}(\tilde{a}_t)$.
    
    \STATE For each trajectory, compute the Adjoint Matching objective:
    \begin{equation}
        \mathcal{L}(\theta) = \sum_{t \in \{0, \dots, 1-h\}} \left\| \frac{2}{\sigma(t)} (u_\theta^{\text{finetune}}(X_t, t) - u^{\text{pre}}(X_t, t)) + \sigma(t) \tilde{a}_t \right\|^2.
    \end{equation}
    
    \STATE Compute the gradient $\nabla_\theta \mathcal{L}(\theta)$ and update $\theta$ using favorite gradient descent algorithm.
\ENDFOR
\STATE \textbf{Output:} Fine-tuned vector field $u^{\text{finetune}}_{\theta}$
\end{algorithmic}
\end{algorithm}

\newpage
\section{Additional results on FDC}

\subsection{Theory}
\subsubsection{Derivation of \eqref{eq:update_final}}\label{sec:app:fdc:derivation}

\paragraph{Setup.}
Recalling that $ \mathcal{G}(p) := \text{R-CVaR}_{\beta, p}[r] - \alpha D_{\text{KL}}(p || p^{\text{pre}})$, we have
\begin{align*}
     \delta\mathcal{G}(p^k) = \frac{1}{1-\beta} \Big[r(x)-\mathrm{VaR}_\beta(p^k)\Big]_+
\;-\;
\alpha\,\frac{\delta}{\delta p(x)}
D_{\mathrm{KL}}\!\big(p^k\|p^{\mathrm{pre}}\big)
\end{align*}

Recalling that $D_{\mathrm{KL}}(p\|p^{\mathrm{pre}})
=
\int p(x)\log\frac{p(x)}{p^{\mathrm{pre}}(x)}\,dx,$
the functional derivative (first variation) is
\begin{equation}
\label{eq:kl_variation}
\frac{\delta}{\delta p(x)}D_{\mathrm{KL}}(p\|p^{\mathrm{pre}})
=
\log\frac{p(x)}{p^{\mathrm{pre}}(x)}+1.
\end{equation}
Therefore, evaluated at $p^k$, we have
\begin{equation}
\label{eq:kl_variation_at_k}
\frac{\delta}{\delta p(x)}D_{\mathrm{KL}}(p^k\|p^{\mathrm{pre}})
=
\log\frac{p^k(x)}{p^{\mathrm{pre}}(x)}+1.
\end{equation}
Then, we have
\begin{align}\label{eq:fk_def}
     \delta\mathcal{G}(p^k) = \frac{1}{1-\beta}\Big[r(x)-\mathrm{VaR}_\beta(p^k)\Big]_+
\;-\;
\alpha  \big( \log\frac{p^k(x)}{p^{\mathrm{pre}}(x)}+1 \big) : = f_k
\end{align}

At iteration $k$, the linearized subproblem has the generic form
\begin{equation}
\label{eq:linear_go_generic}
p^{k+1}
=\arg\max_{p}
\Big\langle f_k,\, p\Big\rangle
-\eta\,D_{\mathrm{KL}}(p\|p^{k}),
\end{equation}

\paragraph{Closed-form solution of the KL-regularized linear GO.}
We solve \eqref{eq:linear_go_generic} by introducing the Lagrangian
\begin{equation*}
\mathcal{L}(p,\nu)
=
\int p(x)f_k(x)\,dx
-\eta\int p(x)\log\frac{p(x)}{p^{k}(x)}\,dx
+\nu\Big(\int p(x)\,dx-1\Big).
\end{equation*}
Stationarity w.r.t. $p(x)$ gives
\begin{equation*}
0=\frac{\delta \mathcal{L}}{\delta p(x)}
=
f_k(x)-\eta\Big(\log\frac{p(x)}{p^{k}(x)}+1\Big)+\nu.
\end{equation*}
Rearranging and exponentiating yields the exponential-tilt form
\begin{equation}
\label{eq:exp_tilt_general}
p^{k+1}(x)
=
\frac{1}{Z_k}\,p^{k}(x)\exp\!\Big(\frac{f_k(x)}{\eta}\Big),
\qquad
Z_k=\int p^{k}(x)\exp\!\Big(\frac{f_k(x)}{\eta}\Big)\,dx.
\end{equation}
Thus, \emph{even when $f_k$ contains the KL first variation at $p^k$}, the subproblem
still admits a closed-form solution because the optimization variable $p$ only appears
inside $\langle f_k,p\rangle$ and the regularizer $D_{\mathrm{KL}}(p\|p^{k})$.

\paragraph{Making the $p^k$-dependence explicit.}
Substituting  \eqref{eq:fk_def} into \eqref{eq:exp_tilt_general}, we have
\begin{align}
p^{k+1}(x)
&\propto
p^{k}(x)\,
\exp\!\Big(\frac{[r(x)-\mathrm{VaR}_\beta(p^k)]_+}{\eta (1-\beta)}\Big)\,
\exp\!\Big(-\frac{\alpha}{\eta}\log\frac{p^k(x)}{p^{\mathrm{pre}}(x)}\Big)\,
\exp\!\Big(-\frac{\alpha}{\eta}\Big) \nonumber \\
&\propto
p^{k}(x)\,
\exp\!\Big(\frac{[r(x)-\mathrm{VaR}_\beta(p^k)]_+}{\eta (1-\beta)}\Big)\,
\Big(\frac{p^k(x)}{p^{\mathrm{pre}}(x)}\Big)^{-\alpha/\eta}, \nonumber \\
&\propto
\big(p^{k}(x)\big)^{1- \alpha / \eta}  \big( p^{\mathrm{pre}}(x) \big)^{ \alpha / \eta}\,
\exp\!\Big(\frac{[r(x)-\mathrm{VaR}_\beta(p^k)]_+}{\eta (1-\beta)}\Big).
% \label{eq:update_final}
\end{align}
The derivation is finished.

\subsubsection{Proof of Proposition~\ref{prop:fdc_fixed_point}}

For simplicity of notation, write
\[
s^\star(x)\;\triangleq\;\frac{[r(x)-t^\star]_+}{1-\beta}.
\]
Then, \eqref{eq:optimal_distribution} can be rewritten as
\begin{equation}
\label{eq:pstar_rewrite}
p_R^\star(x)=\frac{1}{Z(t^\star)}\,p^{\mathrm{pre}}(x)\exp\!\left(\frac{s^\star(x)}{\alpha}\right).
\end{equation}
Since $\mathrm{VaR}_\beta(p_R^\star)=t^\star$, we have
\[
\frac{[r(x)-\mathrm{VaR}_\beta(p_R^\star)]_+}{\eta(1-\beta)}
=
\frac{[r(x)-t^\star]_+}{\eta(1-\beta)}
=
\frac{s^\star(x)}{\eta}.
\]
Recall the definition of $\mathcal T$ in \eqref{eq:T_def}. Applying $\mathcal T$ to $p_R^\star$ yields
\begin{equation}
\label{eq:T_on_pstar}
(\mathcal T p_R^\star)(x)\;\propto\;
\big(p_R^\star(x)\big)^{1-\alpha/\eta}\,
\big(p^{\mathrm{pre}}(x)\big)^{\alpha/\eta}\,
\exp\!\left(\frac{s^\star(x)}{\eta}\right).
\end{equation}
Substitute \eqref{eq:pstar_rewrite} into $\big(p_R^\star(x)\big)^{1-\alpha/\eta}$ yields
\[
\big(p_R^\star(x)\big)^{1-\alpha/\eta}
=
\Big(\frac{1}{Z(t^\star)}p^{\mathrm{pre}}(x)e^{s^\star(x)/\alpha}\Big)^{1-\alpha/\eta}
=
Z(t^\star)^{-(1-\alpha/\eta)}\,
\big(p^{\mathrm{pre}}(x)\big)^{1-\alpha/\eta}\,
\exp\!\Big(\frac{1-\alpha/\eta}{\alpha}\,s^\star(x)\Big).
\]
Plugging this into \eqref{eq:T_on_pstar} yields
\[
(\mathcal T p_R^\star)(x)\;\propto\;
Z(t^\star)^{-(1-\alpha/\eta)}\,
\big(p^{\mathrm{pre}}(x)\big)^{1-\alpha/\eta}\,
\big(p^{\mathrm{pre}}(x)\big)^{\alpha/\eta}\,
\exp\!\Big(\frac{1-\alpha/\eta}{\alpha}\,s^\star(x)\Big)\,
\exp\!\Big(\frac{s^\star(x)}{\eta}\Big).
\]
Since  $\frac{1-\alpha/\eta}{\alpha}+\frac{1}{\eta}
=
\frac{1}{\alpha}-\frac{1}{\eta}+\frac{1}{\eta}
=
\frac{1}{\alpha}$, we have
\[
(\mathcal T p_R^\star)(x)\;\propto\;
p^{\mathrm{pre}}(x)\exp\!\left(\frac{s^\star(x)}{\alpha}\right)
\;\propto\;
p_R^\star(x),
\]
where the proportionality constants are absorbed by normalization. Hence $\mathcal T(p_R^\star)=p_R^\star$. The proof is complete.

\subsection{Setup and implementation details}
\label{app:fdc:details}

\paragraph{Distribution-level FDC operator.}
To separate distribution dynamics from model approximation effects, we study FDC directly in distribution space on a discretized 2D domain.
Starting from $p^{0}=p^{\mathrm{pre}}$, the iterates are defined by
\begin{equation}
\label{eq:fdc_operator_app}
p^{k+1}(x)\;\propto\; \big(p^{k}(x)\big)^{1-\alpha/\eta_k}\,\big(p^{\mathrm{pre}}(x)\big)^{\alpha/\eta_k}\,
\exp\!\Big(\frac{[r(x)-\mathrm{VaR}_{\beta}(p^{k})]_+}{\eta_k(1-\beta)}\Big),
\end{equation}
where $\mathrm{VaR}_{\beta}(p^{k})$ denotes the $\beta$-quantile of the random variable $r(X)$ under $X\sim p^{k}$, and $\{\eta_k\}_{k=1}^{K}$ is an increasing schedule.

\paragraph{Step-size parameter $\eta_k$} The closed-form update \eqref{eq:update_final} provides guidance for choosing $\eta$, which can be interpreted as an inverse step size in the sequence of linearized GO subproblems.
In practice, we enforce $\alpha/\eta<1$ so the exponent on $p^{\pi_k}$ remains positive, yielding stable distribution dynamics.
Moreover, $\alpha/\eta$ controls the strength of anchoring: early in training, when $p^{\pi_k}$ may be far from optimal,
a larger $\alpha/\eta$ (stronger pull toward $p^{\mathrm{pre}}$) is often beneficial for stability.
Later, once $p^{\pi_k}$ approaches the target distribution, the prior can become overly restrictive;
then a smaller $\alpha/\eta$ is preferred to let the tilt term dominate, which corresponds to using a larger $\eta$ (i.e., a smaller effective step size $1/\eta$).

\paragraph{Closed-form target distribution.}
We compare $\{p^{k}\}$ against the closed-form target distribution
\begin{equation*}
% \label{eq:target_pstar_app}
p^\star(x)\;\propto\; p^{\mathrm{pre}}(x)\exp\!\Big(\frac{[r(x)-t^\star]_+}{\alpha(1-\beta)}\Big),
\end{equation*}
where $t^\star$ is the minimizer in Theorem~\ref{thm:cvar_duality}, i.e.,
\begin{equation*}
% \label{eq:tstar_dual_app}
t^\star \in \arg\min_{t\in\mathbb R}\left\{\, t + \alpha \log \mathbb{E}_{X\sim p^{\mathrm{pre}}}\!\left[\exp\!\Big(\frac{[r(X)-t]_+}{\alpha(1-\beta)}\Big)\right]\,\right\}.
\end{equation*}
We compute $t^\star$ via one-dimensional numerical minimization (golden-section search) and then construct $p^\star$ by normalization on the grid.

\paragraph{2D prior and reward.}
We use a standard Gaussian prior $p^{\mathrm{pre}}=\mathcal N(0,I_2)$.
The reward is chosen as $r(x)=\exp\!\big(-\tfrac{1}{2}\|x-\mu\|^2/\sigma^2\big)$ with center $\mu=(2,2)$ and scale $\sigma = 0.8$. We select $\alpha=1$, $\beta = 0.9$.

\paragraph{Discretization and numerical computation.}
We discretize $\mathcal X=[-4,4]\times[-4,4]$ using an $n\times n$ uniform grid and represent each distribution as a probability mass function (PMF) on grid cells.
Densities shown in figures are obtained by dividing the PMF by the cell area.
To compute $\mathrm{VaR}_{\beta}(p^{k})$, we form the weighted empirical distribution of $\{r(x_i)\}$ with weights $p^{k}(x_i)$ and evaluate the weighted $\beta$-quantile using linear interpolation.
All updates in \eqref{eq:fdc_operator_app} are implemented in log-space followed by normalization.

\paragraph{Evaluation metrics.}
We report information distances between iterates and the target distribution, including $\mathrm{KL}(p^{k}\|p^\star)$, Jensen--Shannon divergence $\mathrm{JS}(p^{k},p^\star)$, and total variation distance
\[
\mathrm{TV}(p^{k},p^\star)=\frac{1}{2}\int_{\mathcal X}|p^{k}(x)-p^\star(x)|\,dx,
\]
approximated by summation on the grid. We also track the threshold sequence $t_k=\mathrm{VaR}_\beta(p^{k})$ and compare it to $t^\star$.

\subsection{Additional figures on FDC}
\label{app:fdc:figures}

\begin{figure}[H]
    \centering
    \includegraphics[width=0.5\linewidth]{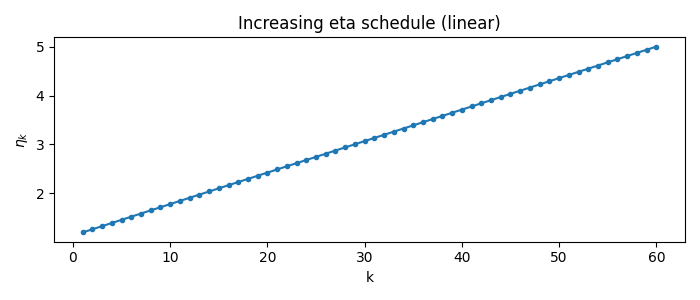}
    \caption{Increasing $\eta$ schedule used in FDC experiments. We use a time-varying regularization parameter $\{\eta_k\}_{k=1}^{K}$ that increases with iteration $k$.
    This annealing stabilizes the later-stage dynamics of the FDC distribution operator.}
    \label{fig:fdc_eta_schedule}
\end{figure}

\begin{figure}[H]
    \centering
    \begin{subfigure}
        \centering
        \includegraphics[width=0.48\linewidth]{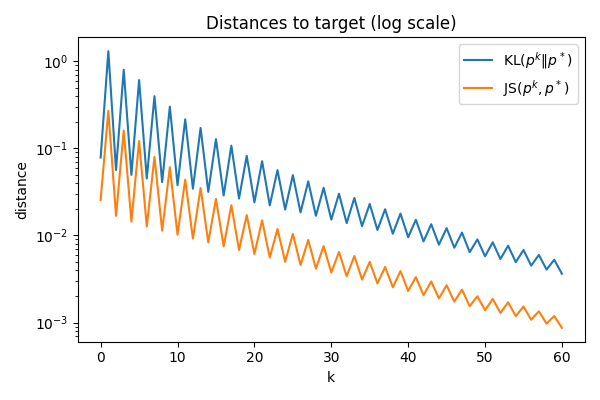}
    \end{subfigure}
    \vspace{-0.5em}
    \begin{subfigure}
        \centering
        \includegraphics[width=0.48\linewidth]{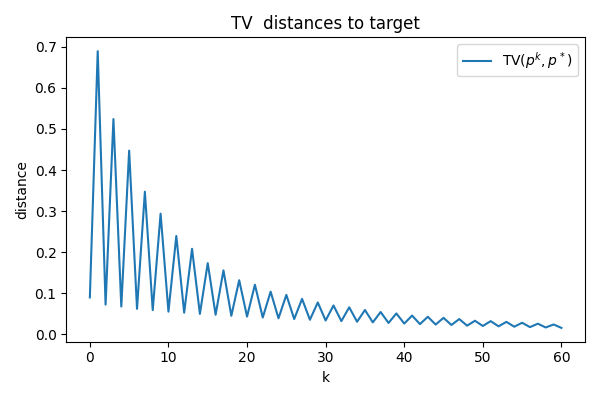}
    \end{subfigure}
    \caption{Distance to the target distribution. We report distances between $p^k$ and $p^*$, including KL divergence, JS divergence, and total variation distance. All of the distances converge to 0, indicating that the distribution converges to the target.}
    \label{fig:fdc_distance}
\end{figure}

\begin{figure}[H]
    \centering
    \includegraphics[width=0.5\linewidth]{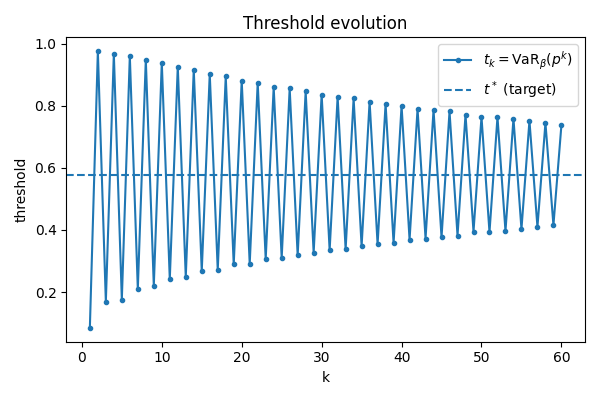}
    \caption{Threshold evolution under FDC. We observe that the sequence $t_k=\mathrm{VaR}_{\beta}(p^{k})$ induced by the FDC updates converges to the optimal threshold $t^\star$ returned by the dual minimization in Theorem~\ref{thm:cvar_duality}. This provides an additional evidence for the convergence of the distributions. }
    \label{fig:fdc_t_evol}
\end{figure}

\clearpage

\section{Experimental Details}
\label{sec:experimental_details}
Most of the experiments were conducted on computing clusters equipped with NVIDIA A100 and A100 (80GB) GPUs. For all illustrative experiments we utilize Adjoint Matching (AM) \cite{domingo2024adjoint} for the entropy-regularized fine-tuning solver.

\subsection{2D Example}

\subsubsection{Implementation}
For the 2D illustrative analysis, we directly compute the optimal distributions via importance sampling. We define the pre-trained prior distribution $p^{\text{pre}}$ as a standard 2D Gaussian $\mathcal{N}(0, I_2)$. The reward function is defined as the linear sum of coordinates $r(x) = x_1 + x_2$. We set the temperature parameter $\alpha = 1.0$. We draw $N=10000$ samples from the prior to serve as an offline dataset to compute the optimal $t$ in the first stage. The probability density functions (PDFs) are estimated from the weighted samples using Gaussian Kernel Density Estimation with a bandwidth factor of 0.25.

\subsubsection{Additional results in the 2D example}

\begin{table}[ht]
\centering
\caption{Results on 2D example. Expected fine-tuning (EXP-FT) achieves highest expected rewards, right-CVaR fine-tuning (R-TFFT) achieves the highest right-CVaR value, and left-CVaR fine-tuning (L-TFFT) achieves highest left-CVaR value.}
\begin{tabular}{lccc}
\hline
Model & $E[r]$ & $\text{R-CVaR}_{0.8}[r]$ & $\text{L-CVaR}_{0.2}[r]$ \\
\hline
Pre-trained          & 0.00 & 1.99 & -1.98 \\
EXP-FT   & \bf{2.03} & 4.06 & 0.03 \\
L-TFFT  & 2.00 & 3.08 & \bf{1.23} \\
R-TFFT & 1.26 & \bf{6.31} & -1.80 \\
\hline
\end{tabular}
\end{table}

\begin{figure}[ht]
    \centering
    \begin{minipage}[b]{0.48\textwidth}
        \centering
        \includegraphics[width=\linewidth]{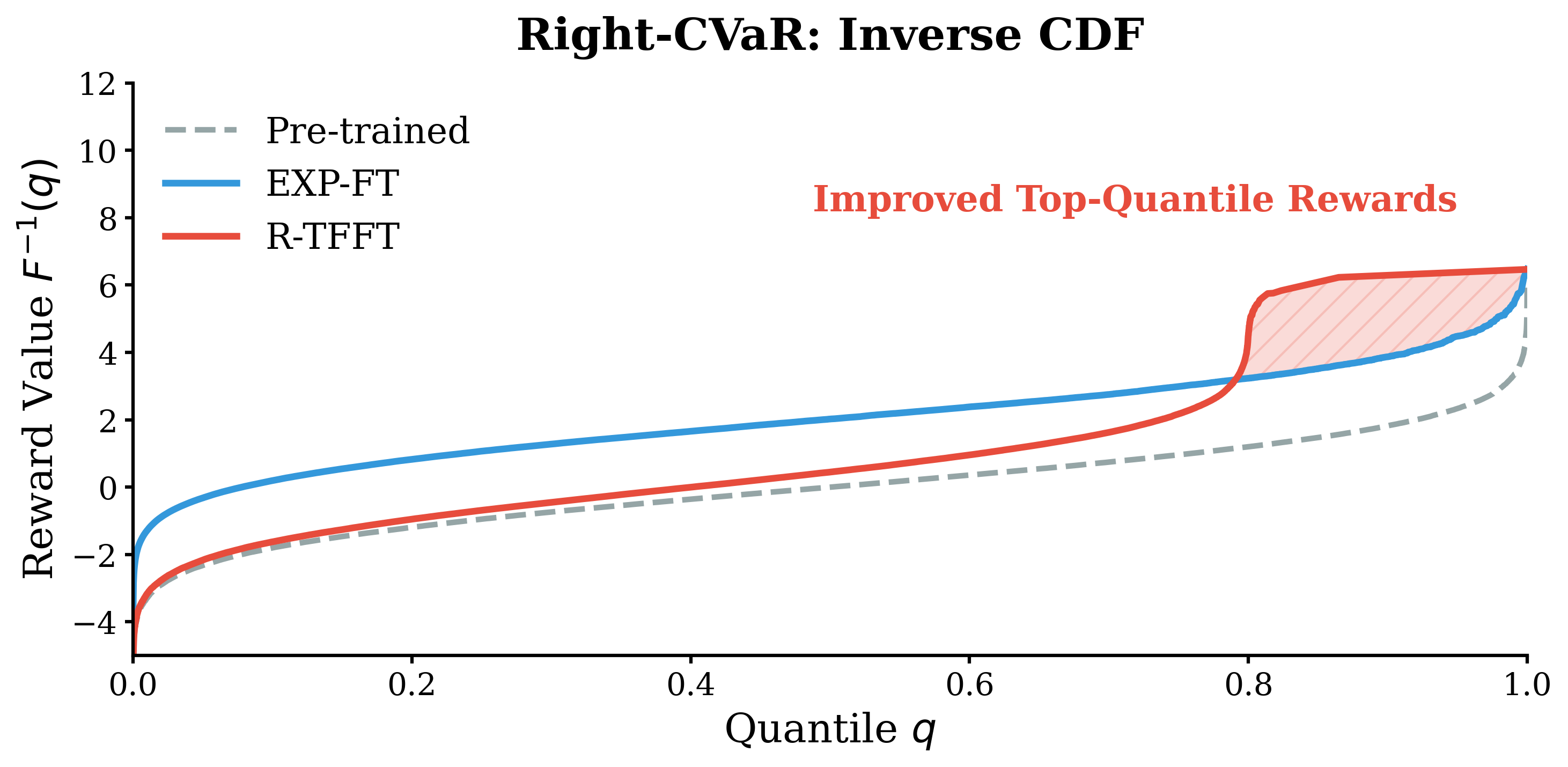}
    \end{minipage}
    \hfill % Adds flexible space between the images
    \begin{minipage}[b]{0.48\textwidth}
        \centering
        \includegraphics[width=\linewidth]{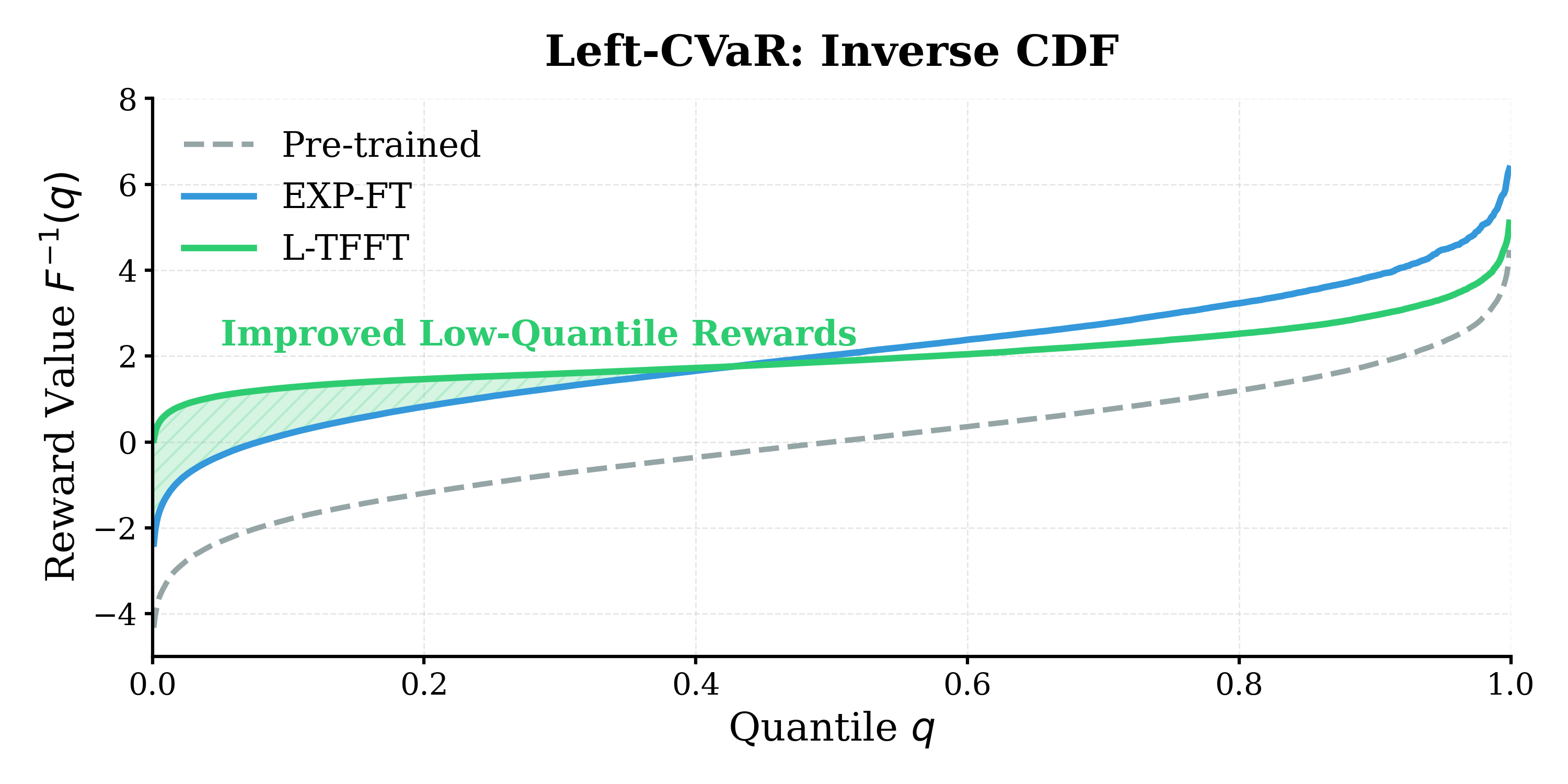}
    \end{minipage}
    \caption{ Inverse CDFs of the pre-trained model, expected fine-tuning model, left- and right-CVaR fine-tuning model in the 2D example. R-TFFT achieves higher rewards in the high-quantile region while L-TFFT achieves higher rewards in the low-quantile region.}
    \label{fig:inverseDF:2D:illustration}
\end{figure}

\subsection{Text-to-Image generation}

\subsubsection{Implementation}

\paragraph{Experimental setup:} Following \citet{domingo2024adjoint}, we utilize Stable Diffusion v1-5 \cite{rombach2022high} as the pre-trained generative backbone. To align the model with human aesthetic preferences, we employ ImageReward \cite{xu2023imagereward} as the downstream reward function. We set $\alpha=1$ and the reward function $$r(x) = 100 \times \text{ImageReward}(x).$$

\paragraph{Training details:} The training follows the two-stage procedure proposed in Algorithm 1. In stage 1, we estimate the optimal threshold $t^*$ by maximizing the concave objective $F_L(t)$. We collect an offline batch of 10000 samples generated from the pre-trained prior $p^{\text{pre}}$. With $\beta = 0.2$, the resulting optimal threshold was found to be approximately $t^* \approx 0.700759$. In stage 2, We perform a single fine-tuning run using the Adjoint Matching solver with the pseudo-reward $r^*(x) = -[t^* - r(x)]_+ / \beta$. 
Both EXP-FT and L-TFFT are trained for 1600 steps with a step size of $3e^{-6}$.
Since the pseudo-reward transformation incurs negligible overhead, EXP-FT and L-TFFT exhibit nearly identical training times ($\sim$82 hours on a single NVIDIA A100 80GB GPU). To ensure a fair comparison under a fixed budget, we constrain FDC to the same total duration by running 2 outer iterations ($K=2$) with 800 steps per iteration.

\paragraph{Evaluation metrics:} We evaluate the performance using four complementary metrics to ensure a comprehensive assessment. e report the expected ImageReward to measure general human preference alignment. To assess reliability, we report the \textbf{left-CVaR} metric (average of the bottom $\beta=20\%$ rewards), which quantifies the model's performance in worst-case scenarios. We compute the CLIP Score \cite{hessel2021clipscore} to evaluate the semantic consistency between the generated images and the input text prompts. Human Preference Score v2 (HPSv2) \cite{wu2023human} is used as a robust proxy for human evaluation to test the model's domain generalization capabilities beyond the training reward model. To measure the diversity of the generated outputs, we compute the DreamSim score \cite{fu2023dreamsim}, which assesses perceptual similarity.

\subsubsection{Additional results in image generation}

\begin{figure}[ht]
    \centering
    \begin{minipage}[b]{0.48\textwidth}
        \centering
        \includegraphics[width=\linewidth]{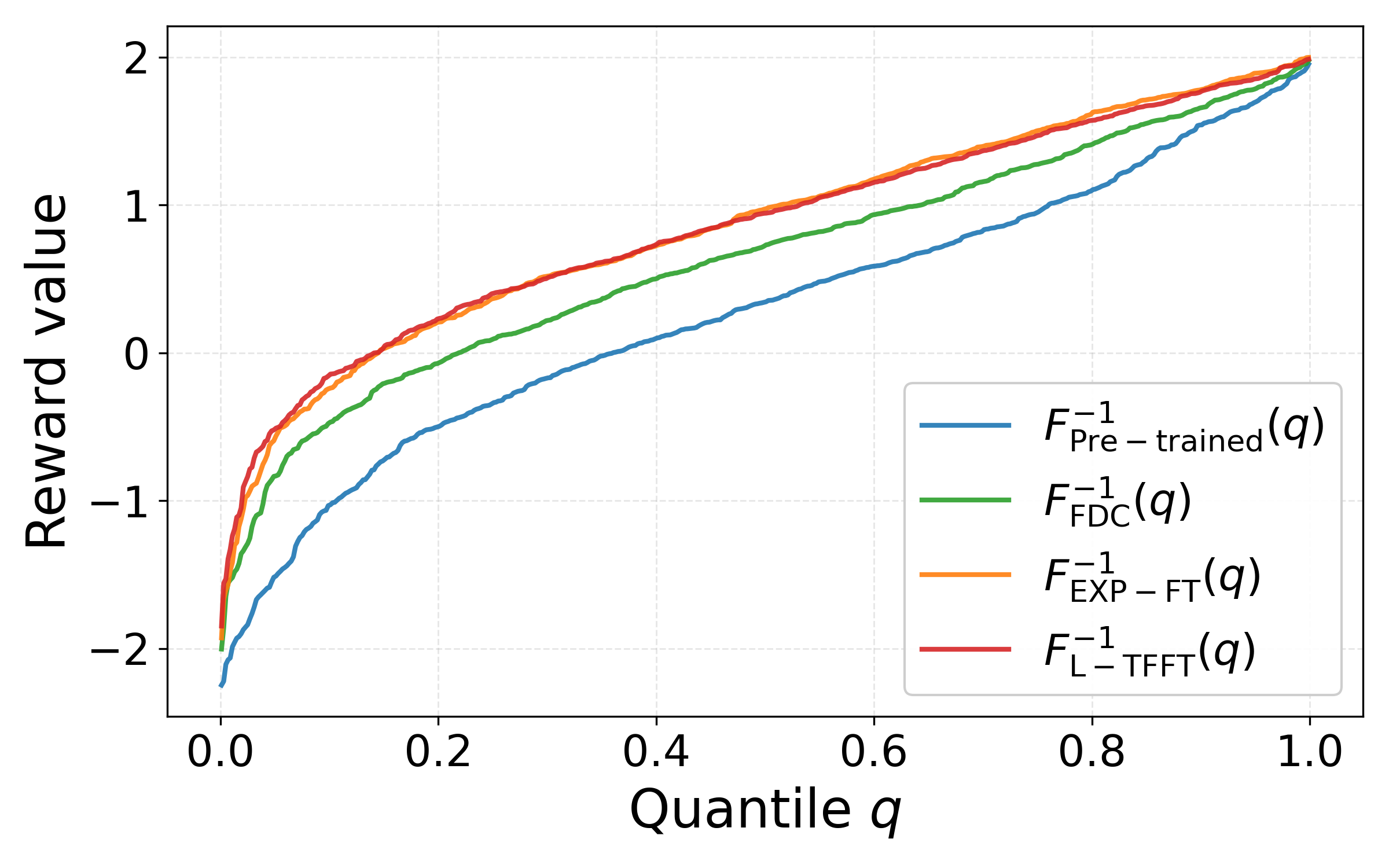}
    \end{minipage}
    \hfill % Adds flexible space between the images
    \begin{minipage}[b]{0.48\textwidth}
        \centering
        \includegraphics[width=\linewidth]{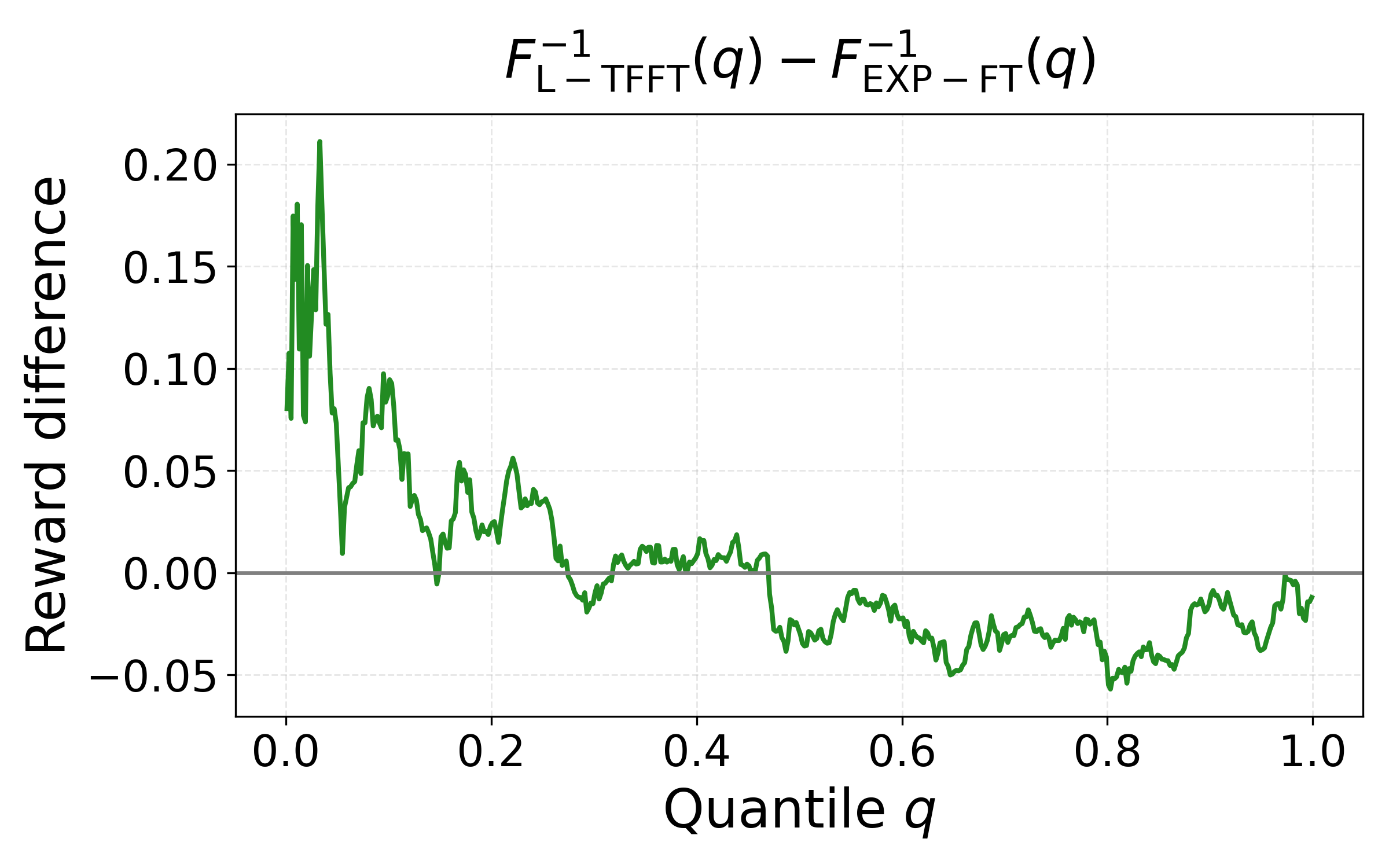}
    \end{minipage}
    \caption{Comparison of reward distributions on stable-diffusion. The left panel shows the inverse CDF of all four methods, while the right panel illustrates the difference of inverse CDFs between the expected and CVaR fine-tuning, which clearly shows that L-TFFT outperforms EXP-FT in lower tail $q\in[0,0.2]$.}
    \label{fig:rewards_comparison}
    \vspace{-0.3cm}
\end{figure}

Fig.~\ref{fig:rewards_comparison} plots the inverse Cumulative Distribution Functions (CDFs) for all the methods. We observe in the left panel that EXP-FT and L-TFFT are leading in most quantile values, indicating their significantly improved rewards. In the quantile region $q\in [0,0.2]$, L-TFFT outperforms EXP-FT, i.e., $F^{-1}_{\text{L-TFFT}}(q) > F^{-1}_{\text{EXP-FT}}(q)$. The difference is amplified in the right panel, where we observe a clear positive gap, confirming that L-TFFT successfully raises the quality floor of worst-case samples.

Fig.~\ref{fig:sd:3:pdfs} visualizes the empirical PDFs for the Pre-trained, EXP-FT, and L-TFFT models, with the dashed vertical line indicating the $20\%$ quantile value. As shown in the top panel, the Pre-trained model exhibits a broad distribution with a heavy left tail, indicating a high frequency of low-reward (low-quality) samples. The EXP-FT method successfully shifts the entire distribution to the right, but the distribution retains a non-negligible density in the negative reward region. In contrast, L-TFFT further suppresses the density in the extreme lower tail, as evidenced by a higher 20\% quantile value.

\begin{figure}[H]
    \centering
    \includegraphics[width=0.5\linewidth]{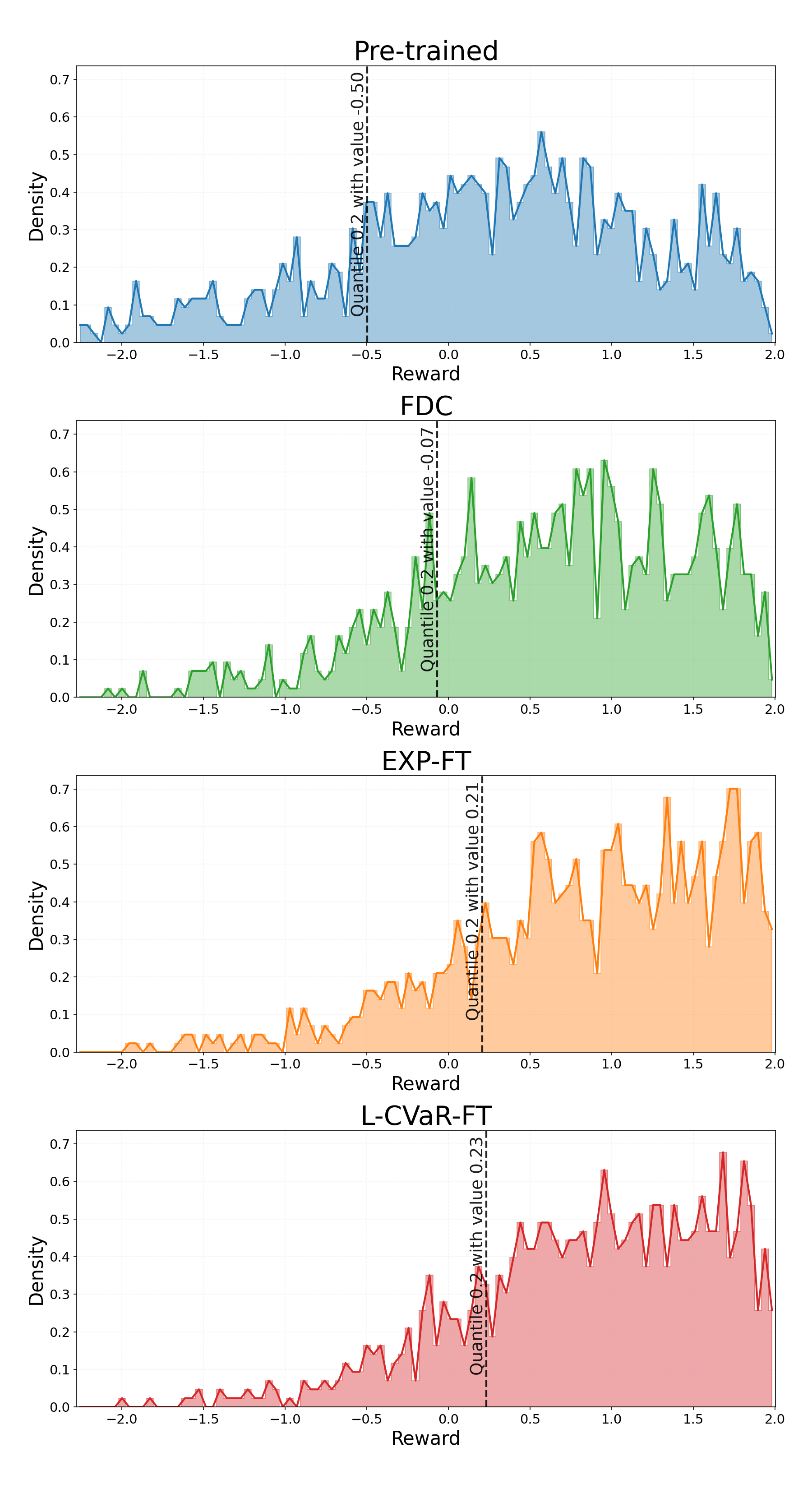}
    \caption{Reward PDFs for Pre-trained, FDC, EXP-FT, and L-TFFT models in text-to-image generation. The vertical dashed line represents the $0.2$-quantile. L-TFFT effectively suppresses the probability mass in the lower tail (worst-case scenarios) compared to the baselines, which is evidenced by a higher 20\% quantile value.}
    \label{fig:sd:3:pdfs}
\end{figure}

\subsection{Molecule design}

\subsubsection{Implementation}

\paragraph{Experimental setup:} We employ FlowMol \cite{dunn2025flowmol3} as the backbone generative flow model, which is pre-trained on the GEOM-Drugs dataset \cite{axelrod2022geom} to generate 3D molecular conformations.
To optimize for chemical stability, we define the reward function $r(x)$ as the negative energy calculated using GFN1-xTB \cite{friede2024dxtb}, a semi-empirical quantum mechanical method. Lower energy corresponds to higher stability, so we maximize the negative energy. We set $\alpha=1$ and the reward function $$r(x) = 6 \times (-\text{Energy}(x)).$$

\paragraph{Training details:} The training utilizes our TFFT framework, specifically configured for the right-CVaR objective to discover novel samples. In Stage 1, we minimize the convex objective $F_R(t)$ to estimate the target threshold $t^*$ corresponding to the top 10\% of the distribution ($\beta=0.9$). We approximate the expectations using a fixed offline batch of 10000 molecules sampled from the pre-trained model. In \textbf{Stage 2}, we fine-tune the model using the Adjoint Matching solver. 
Directly targeting the top 10\% ($\beta=0.9$) creates a sparse signal, as few samples initially exceed the threshold. To accelerate training, we employ a $\beta$-annealing schedule: we initialize the quantile at $\beta_0 = 0$ (effectively standard expected fine-tuning) and gradually increase it to the target $\beta = 0.9$. This curriculum allows the model to improve the general distribution before focusing on the extreme tail. The model is trained for a total of 120 gradient steps using the AdamW optimizer with a learning rate of 1e-4 and batch size 8.

\paragraph{Evaluation metrics:} We evaluate performance by generating 2,000 samples per model across three independent runs. The evaluation procedure is as follows. First, we verify validity for all generated molecules, determined by whether molecules can pass RDKit sanitization checks. Second, we compute the Average Reward and Synthetic Accessibility (SA) Score exclusively on the subset of valid molecules. This filtration step is critical because invalid structures can occasionally exhibit anomalously low energy values (high rewards) due to simulation artifacts.

\subsubsection{Additional results in molecule design}
\begin{figure}[H]
    \centering
    \includegraphics[width=0.6\linewidth]{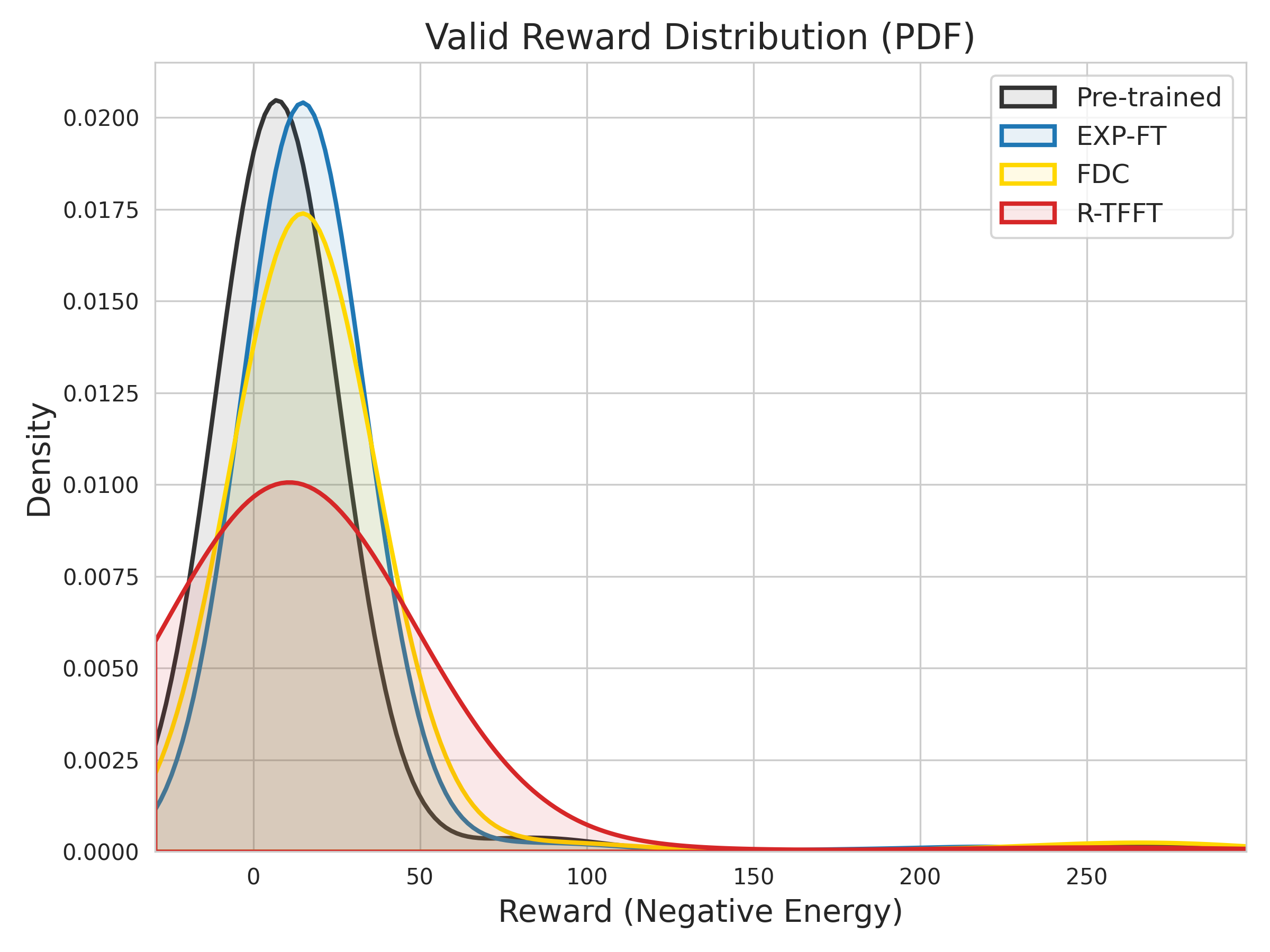}
    \caption{PDF of valid rewards of different methods in molecule design.}
    \label{fig:mol_pdf}
\end{figure}
To rigorously assess the quality of the generated structures, we analyze the reward distributions exclusively for the valid molecules, i.e., those passing RDKit sanitization. 
As shown in Figure~\ref{fig:mol_pdf}, all fine-tuning methods successfully shift the reward distribution to the right compared to the Pre-trained baseline. Notably, \textbf{R-TFFT} exhibits the highest density in the high-reward interval of $[50, 100]$, confirming its ability to concentrate probability mass on chemically stable configurations.

\begin{figure}[H]
    \centering
    \includegraphics[width=0.6\linewidth]{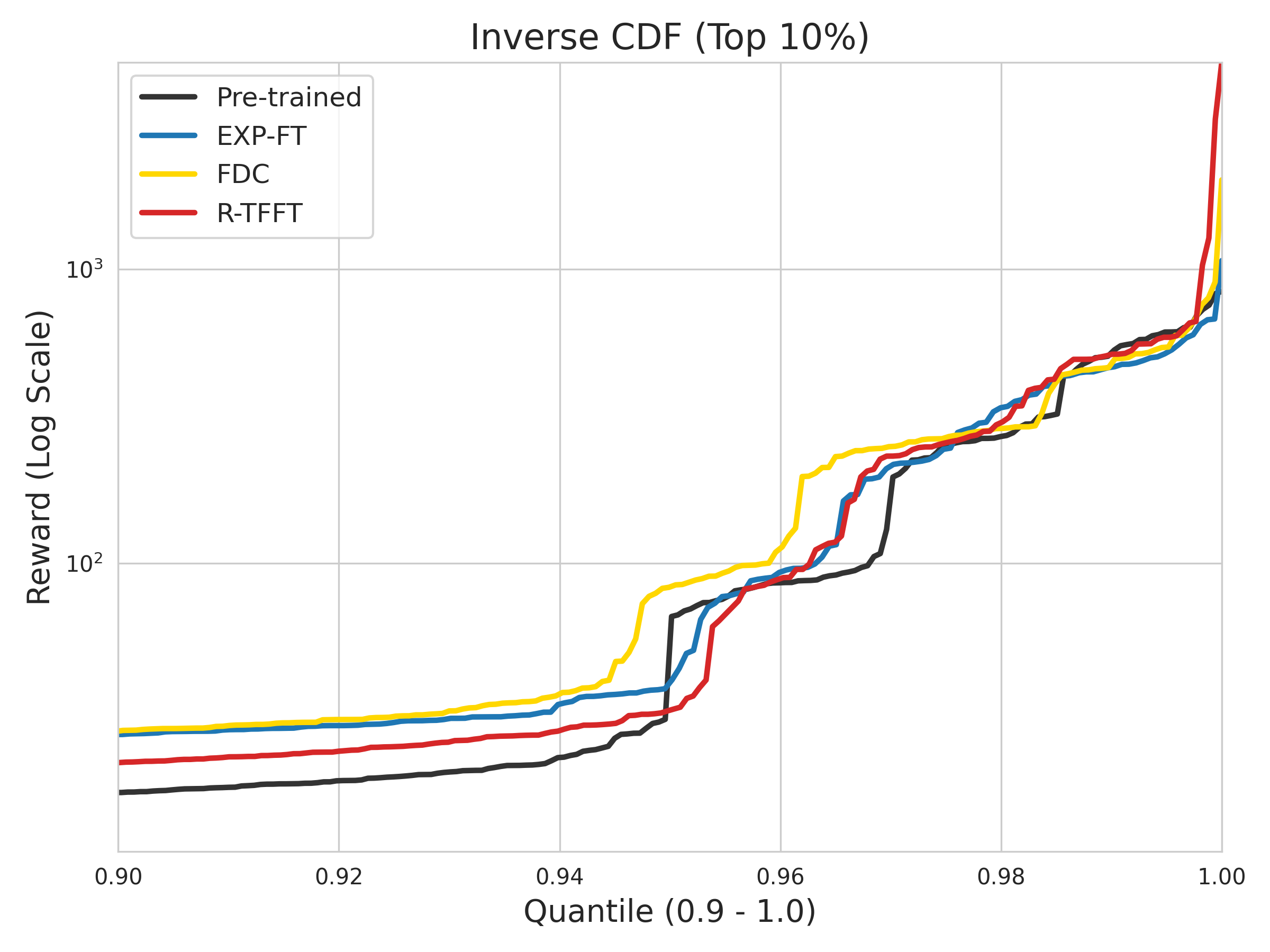}
    \caption{Inverse CDF of valid rewards of different methods in molecule design.}
    \label{fig:mol_inverse_cdf}
\end{figure}

Figure~\ref{fig:mol_inverse_cdf} analyzes the top 10\% of the data ($q \in [0.9, 1.0]$), uncovering a notable performance shift. In the intermediate range ($q \approx 0.94 - 0.99$), dominance alternates among methods, with sometimes even the Pre-trained model occasionally outperforming R-TFFT. We attribute this to a trade-off between aggressive reward optimization and chemical validity; fine-tuning methods may generate high-energy but invalid structures that are subsequently filtered, leaving a valid subset that temporarily lags in reward compared to the more consistently valid Pre-trained baseline. Crucially, as $q \to 1$, our R-TFFT re-establishes dominance in the extreme tail. This confirms that despite validity constraints, the right-CVaR objective successfully identifies valid and novel candidates.

\subsection{Ablation study on the risk level $\beta$}

We study the effect of the risk level $\beta$ in the molecule design task. 
Larger $\beta$ focuses on a smaller novel portion of the high-reward tail, which makes optimization harder and tends to produce more invalid samples.
Table~\ref{tab:mol_beta_ablation} reports the validity rate across 5 runs: a moderate choice ($\beta=0.9$) achieves the highest validity with the lowest variance, whereas more aggressive tail focusing ($\beta=0.95,0.99$) reduces validity and increases variability.

\begin{table}[ht]
    \caption{Ablation on the risk level $\beta$ for molecule design. We report validity rate (\%) mean and standard deviation across 5 runs.}
    \centering
    \begin{tabular}{ccccc}
        \toprule
         $\beta$ & 0.8 & 0.9 & 0.95 & 0.99  \\
         \midrule
         Validate rate &  78.1\% $\pm$ 13.9\% & 81.2\% $\pm$ 3.9\%  & 69.4\% $\pm$ 21.5\% &61.5\% $\pm$ 27.6\%  \\
         \bottomrule
    \end{tabular}
    \label{tab:mol_beta_ablation}
\end{table}

\subsection{Sensitivity to threshold optimization}

To validate the theoretical bounds proposed in Theorem~\ref{thm:sensitivity}, we conducted numerical experiments on a synthetic 2D Gaussian task where the optimal target distribution $p^*$ can be computed exactly. We use right-CVaR as an example. Recall that Theorem~\ref{thm:sensitivity} defines the scaling constant as $\lambda = \alpha(1-\beta)$. We investigate the impact of the coefficient $\alpha$ and the quantile $\beta$.

We define the base distribution $p^{pre}(x)$ as a standard Gaussian $\mathcal{N}(0, I)$. 
We introduce an estimation error $\delta = |\hat{t} - t^*|$ to the threshold and compute the KL divergence between the ideal target distribution $p^*(\cdot; t^*)$ and the approximate distribution $\hat{p}(\cdot; \hat{t})$. 
Fig.~\ref{fig:sensitivity} shows how the KL divergence increases with $\delta$ with different $\alpha$ and $\beta$. As expected, larger $\delta$ monotonically increases the KL divergence error. Moreover, the degradation is more pronounced when using a larger quantile $\beta$ and a smaller penalty coefficient $\alpha$.

\begin{figure}[H]
    \centering
    \begin{subfigure}
        \centering
        \includegraphics[width=0.48\linewidth]{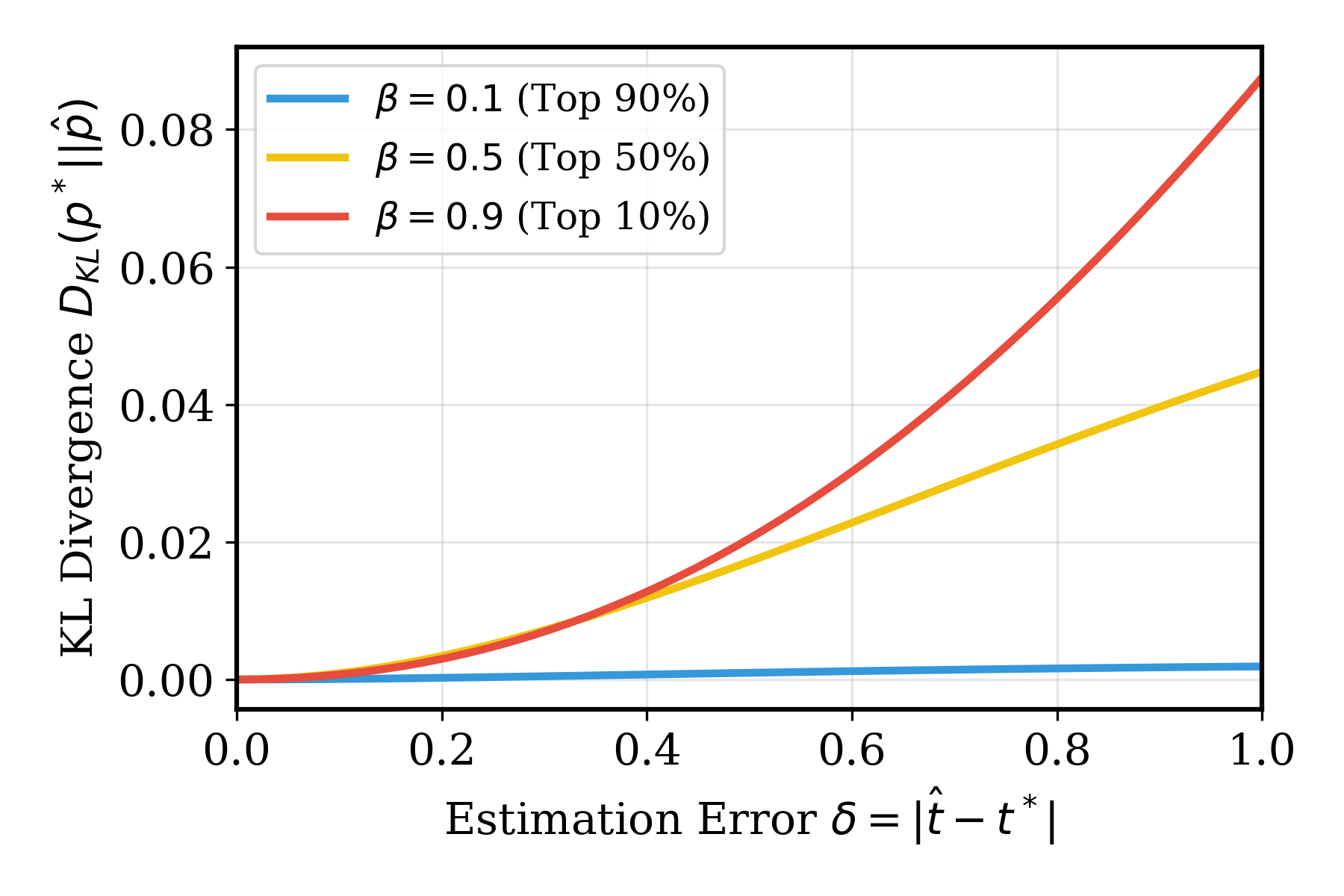}
    \end{subfigure}
    \vspace{-0.5em}
    \begin{subfigure}
        \centering
        \includegraphics[width=0.48\linewidth]{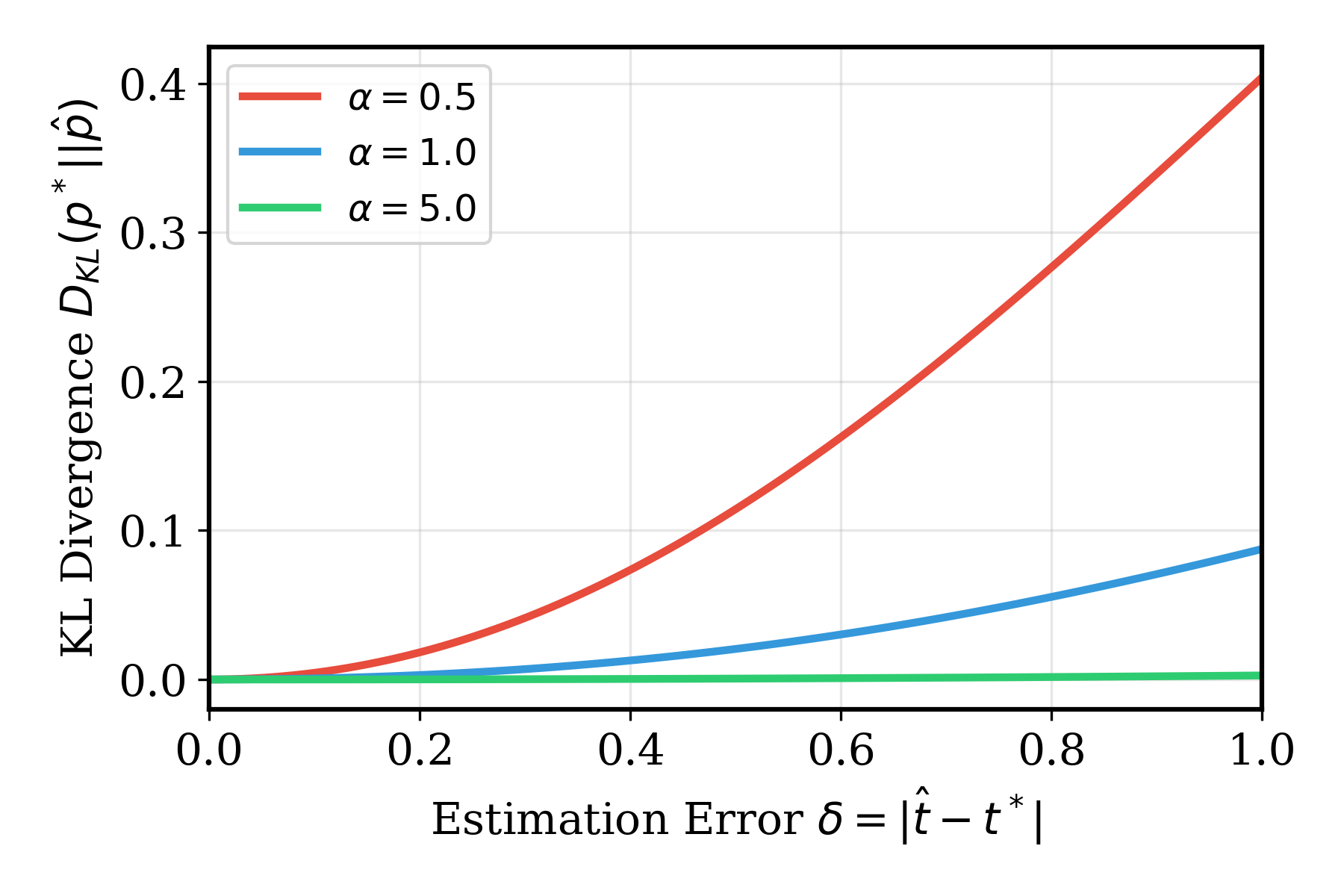}
    \end{subfigure}
    \caption{Sensitivity to threshold optimization with different values of $\alpha$ and $\beta$. }
    \label{fig:sensitivity}
\end{figure}

\newpage 
\section{Additional text-to-image samples}

\begin{figure}[ht!]
    \centering
    \begin{subfigure}
        \centering
        \begin{minipage}[c]{0.1\linewidth}
            \centering \small \textbf{Pre-trained}
        \end{minipage}%
        \begin{minipage}[c]{0.85\linewidth}
            \includegraphics[width=\linewidth]{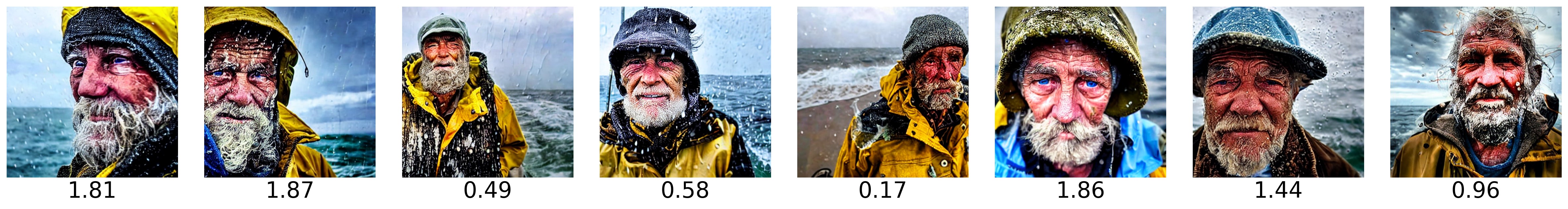}
        \end{minipage}
    \end{subfigure}
    \vspace{0.2em}
    \begin{subfigure}
        \centering
        \begin{minipage}[c]{0.1\linewidth}
            \centering \small \textbf{EXP-FT}
        \end{minipage}%
        \begin{minipage}[c]{0.85\linewidth}
            \includegraphics[width=\linewidth]{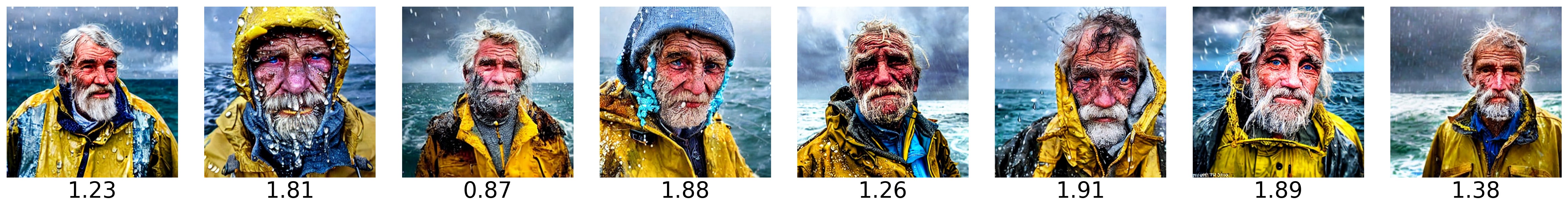}
        \end{minipage}
    \end{subfigure}
    \vspace{0.2em}
    \begin{subfigure}
        \centering
        \begin{minipage}[c]{0.1\linewidth}
            \centering \small \textbf{FDC}
        \end{minipage}%
        \begin{minipage}[c]{0.85\linewidth}
            \includegraphics[width=\linewidth]{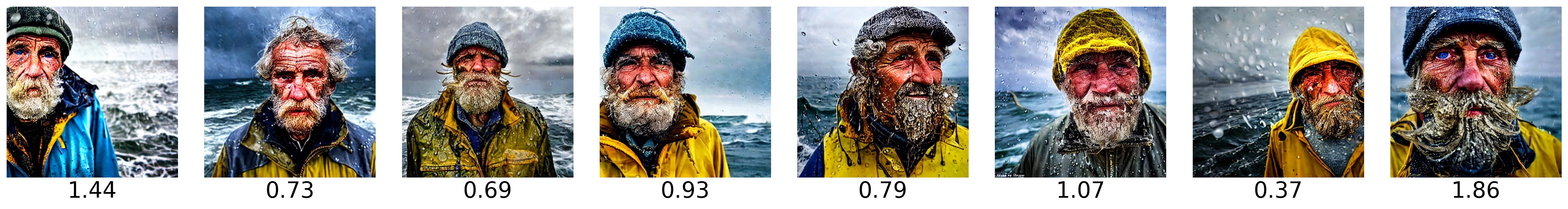}
        \end{minipage}
    \end{subfigure}
    \vspace{0.2em}
    \begin{subfigure}
        \centering
        \begin{minipage}[c]{0.1\linewidth}
            \centering \small \textbf{L-TFFT}
        \end{minipage}%
        \begin{minipage}[c]{0.85\linewidth}
            \includegraphics[width=\linewidth]{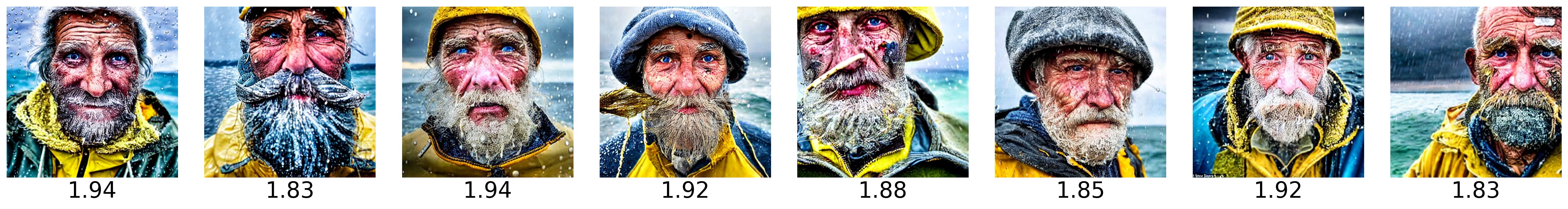}
        \end{minipage}
    \end{subfigure}
    \caption{Generated samples with ImageReward values. Prompt: ``A documentary photograph of an elderly fisherman with a weather-beaten face and bushy grey beard. His piercing \textbf{blue} eyes look out at a stormy sea. He is wearing a heavy, waterproof, dirty yellow oilskin jacket covered in salt spray and grime. Raindrops on his face''. Images with low rewards fail to render blue eyes. }
    \vspace{0.5cm}
\end{figure}

\begin{figure}[ht!]
    \centering
    \begin{subfigure}
        \centering
        \begin{minipage}[c]{0.1\linewidth}
            \centering \small \textbf{Pre-trained}
        \end{minipage}%
        \begin{minipage}[c]{0.85\linewidth}
            \includegraphics[width=\linewidth]{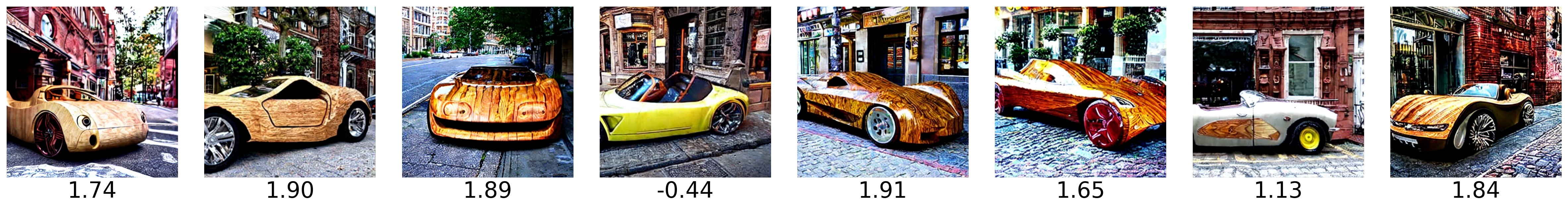}
        \end{minipage}
    \end{subfigure}
    \vspace{0.2em}
    \begin{subfigure}
        \centering
        \begin{minipage}[c]{0.1\linewidth}
            \centering \small \textbf{EXP-FT}
        \end{minipage}%
        \begin{minipage}[c]{0.85\linewidth}
            \includegraphics[width=\linewidth]{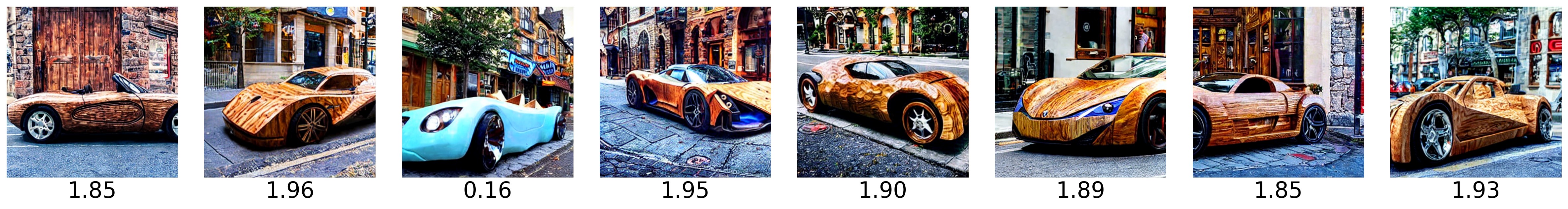}
        \end{minipage}
    \end{subfigure}
    \vspace{0.2em}
    \begin{subfigure}
        \centering
        \begin{minipage}[c]{0.1\linewidth}
            \centering \small \textbf{FDC}
        \end{minipage}%
        \begin{minipage}[c]{0.85\linewidth}
            \includegraphics[width=\linewidth]{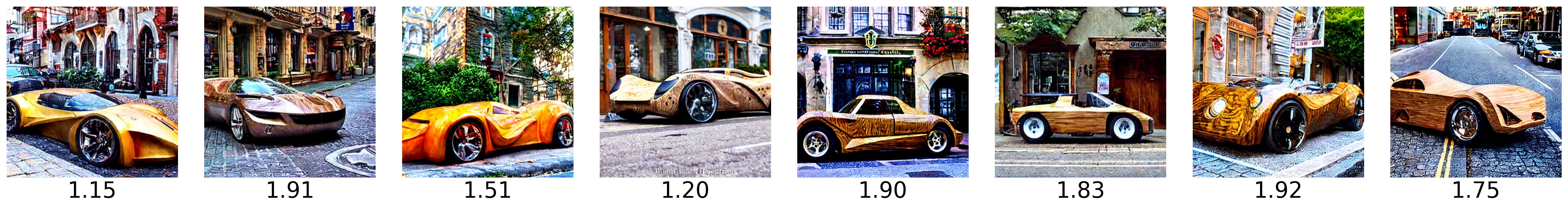}
        \end{minipage}
    \end{subfigure}
    \vspace{0.2em}
    \begin{subfigure}
        \centering
        \begin{minipage}[c]{0.1\linewidth}
            \centering \small \textbf{L-TFFT}
        \end{minipage}%
        \begin{minipage}[c]{0.85\linewidth}
            \includegraphics[width=\linewidth]{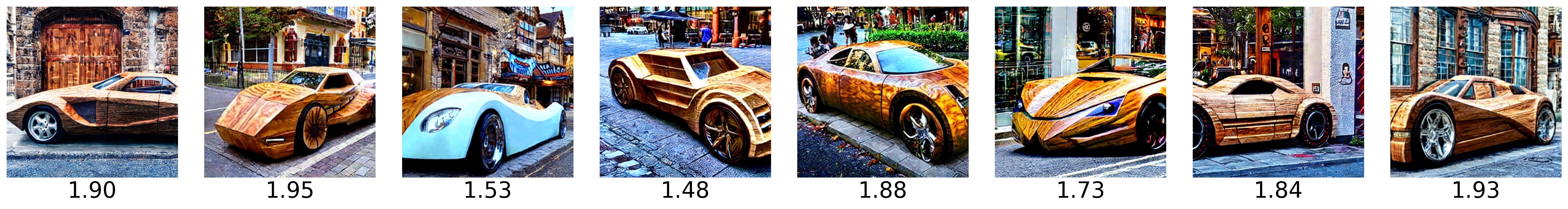}
        \end{minipage}
    \end{subfigure}
    \caption{Generated samples from different models with ImageReward values. Prompt: ``A sports car carved entirely out of polished wood, parked on a street''.}
\end{figure}

\begin{figure}[ht]
    \centering
    \begin{subfigure}
        \centering
        \begin{minipage}[c]{0.1\linewidth}
            \centering \small \textbf{Pre-trained}
        \end{minipage}%
        \begin{minipage}[c]{0.85\linewidth}
            \includegraphics[width=\linewidth]{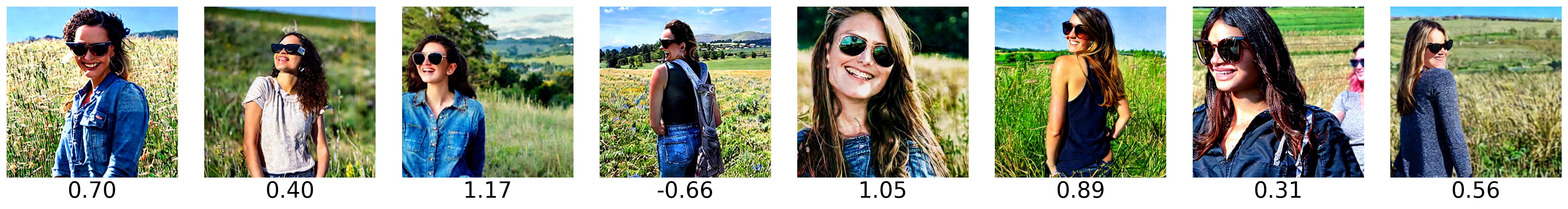}
        \end{minipage}
    \end{subfigure}
    \vspace{0.2em}
    \begin{subfigure}
        \centering
        \begin{minipage}[c]{0.1\linewidth}
            \centering \small \textbf{EXP-FT}
        \end{minipage}%
        \begin{minipage}[c]{0.85\linewidth}
            \includegraphics[width=\linewidth]{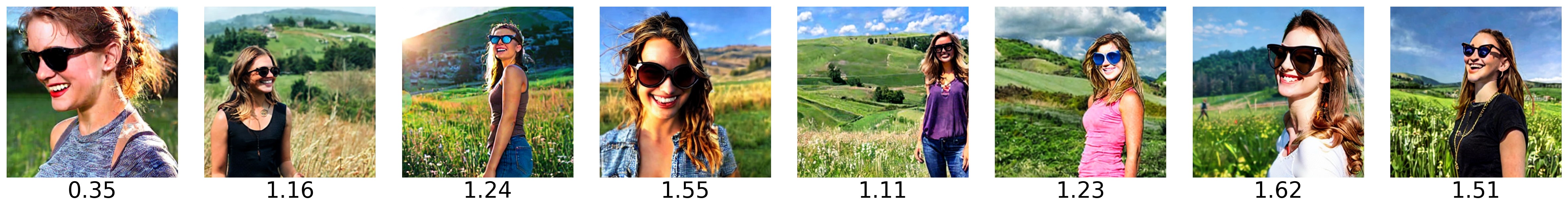}
        \end{minipage}
    \end{subfigure}
    \vspace{0.2em}
    \begin{subfigure}
        \centering
        \begin{minipage}[c]{0.1\linewidth}
            \centering \small \textbf{FDC}
        \end{minipage}%
        \begin{minipage}[c]{0.85\linewidth}
            \includegraphics[width=\linewidth]{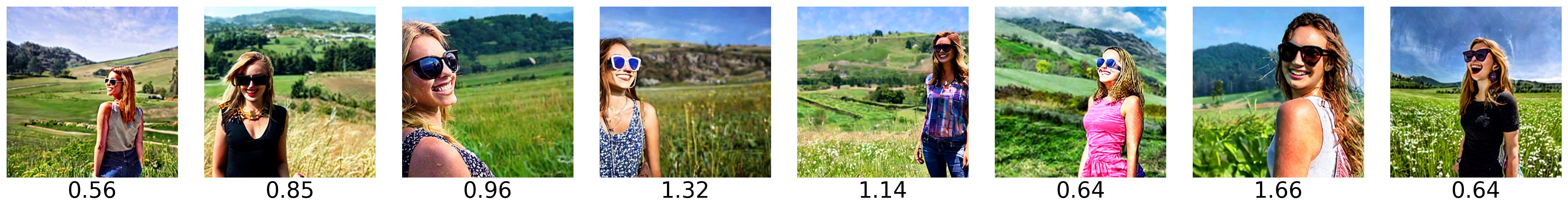}
        \end{minipage}
    \end{subfigure}
    \vspace{0.2em}
    \begin{subfigure}
        \centering
        \begin{minipage}[c]{0.1\linewidth}
            \centering \small \textbf{L-TFFT}
        \end{minipage}%
        \begin{minipage}[c]{0.85\linewidth}
            \includegraphics[width=\linewidth]{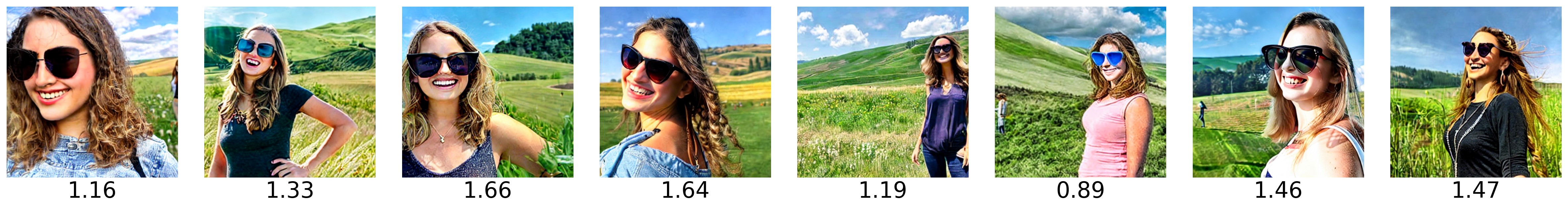}
        \end{minipage}
    \end{subfigure}
    \caption{Generated images with ImageReward values from different models. Prompt: ``The beautiful young woman in sunglasses is standing at the background of field and hill. She is smiling and looking over shoulder''.}
    \label{fig:sd:woman}
\end{figure}

\begin{figure}[ht]
    \centering
    \begin{subfigure}
        \centering
        \begin{minipage}[c]{0.1\linewidth}
            \centering \small \textbf{Pre-trained}
        \end{minipage}%
        \begin{minipage}[c]{0.85\linewidth}
            \includegraphics[width=\linewidth]{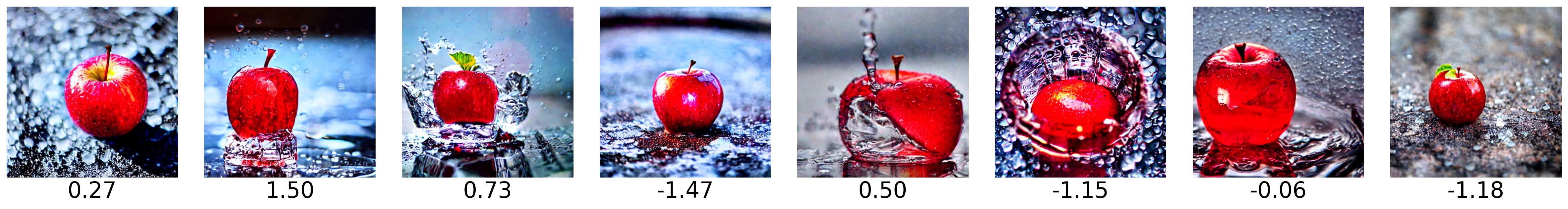}
        \end{minipage}
    \end{subfigure}
    \vspace{0.2em}
    \begin{subfigure}
        \centering
        \begin{minipage}[c]{0.1\linewidth}
            \centering \small \textbf{EXP-FT}
        \end{minipage}%
        \begin{minipage}[c]{0.85\linewidth}
            \includegraphics[width=\linewidth]{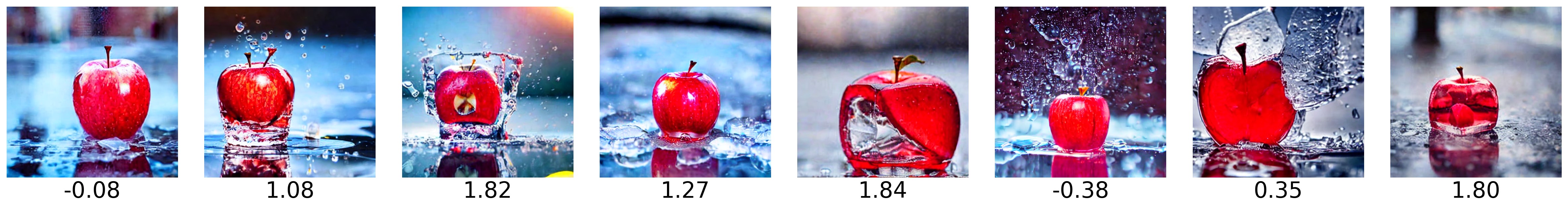}
        \end{minipage}
    \end{subfigure}
    \vspace{0.2em}
    \begin{subfigure}
        \centering
        \begin{minipage}[c]{0.1\linewidth}
            \centering \small \textbf{FDC}
        \end{minipage}%
        \begin{minipage}[c]{0.85\linewidth}
            \includegraphics[width=\linewidth]{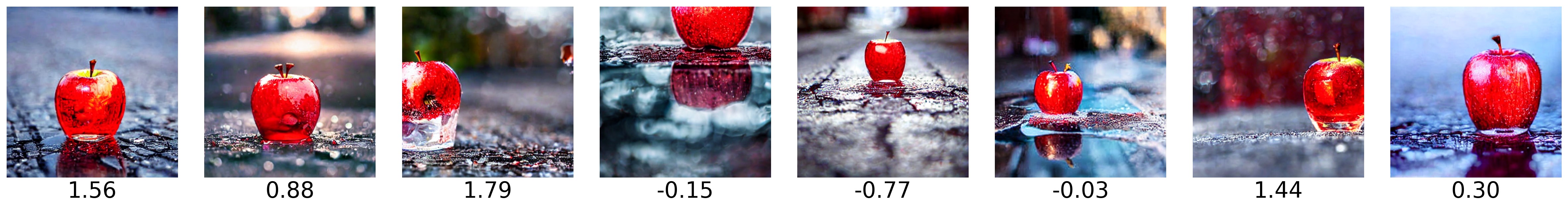}
        \end{minipage}
    \end{subfigure}
    \vspace{0.2em}
    \begin{subfigure}
        \centering
        \begin{minipage}[c]{0.1\linewidth}
            \centering \small \textbf{L-TFFT}
        \end{minipage}%
        \begin{minipage}[c]{0.85\linewidth}
            \includegraphics[width=\linewidth]{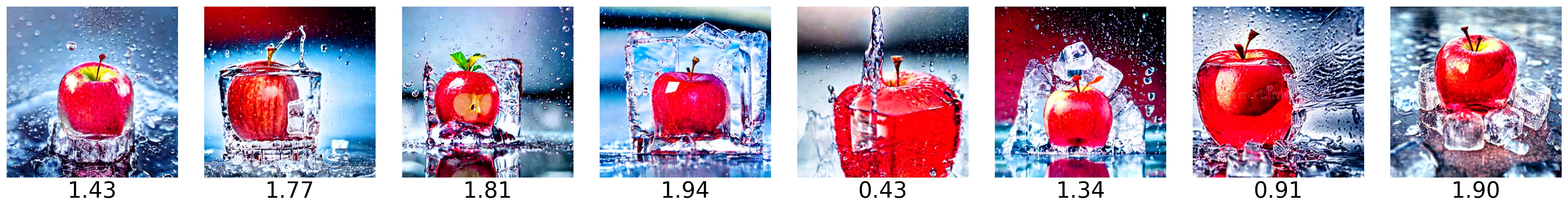}
        \end{minipage}
    \end{subfigure}
    \caption{Generated samples from different models with ImageReward values. Prompt: ``A red apple inside a clear ice cube, the ice cube is melting on a hot pavement, macro photography''.}
\end{figure}

\begin{figure}[ht]
    \centering
    \begin{subfigure}
        \centering
        \begin{minipage}[c]{0.1\linewidth}
            \centering \small \textbf{Pre-trained}
        \end{minipage}%
        \begin{minipage}[c]{0.85\linewidth}
            \includegraphics[width=\linewidth]{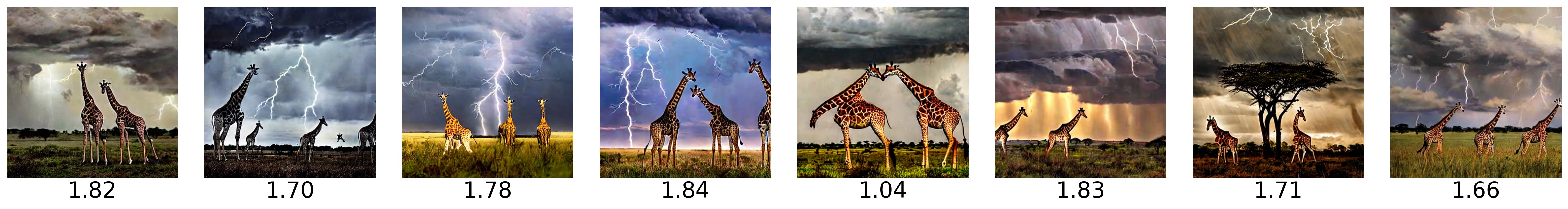}
        \end{minipage}
    \end{subfigure}
    \vspace{0.2em}
    \begin{subfigure}
        \centering
        \begin{minipage}[c]{0.1\linewidth}
            \centering \small \textbf{EXP-FT}
        \end{minipage}%
        \begin{minipage}[c]{0.85\linewidth}
            \includegraphics[width=\linewidth]{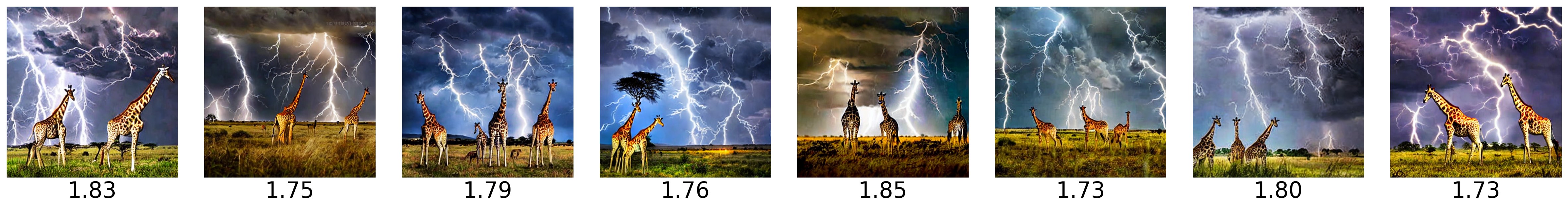}
        \end{minipage}
    \end{subfigure}
    \vspace{0.2em}
    \begin{subfigure}
        \centering
        \begin{minipage}[c]{0.1\linewidth}
            \centering \small \textbf{FDC}
        \end{minipage}%
        \begin{minipage}[c]{0.85\linewidth}
            \includegraphics[width=\linewidth]{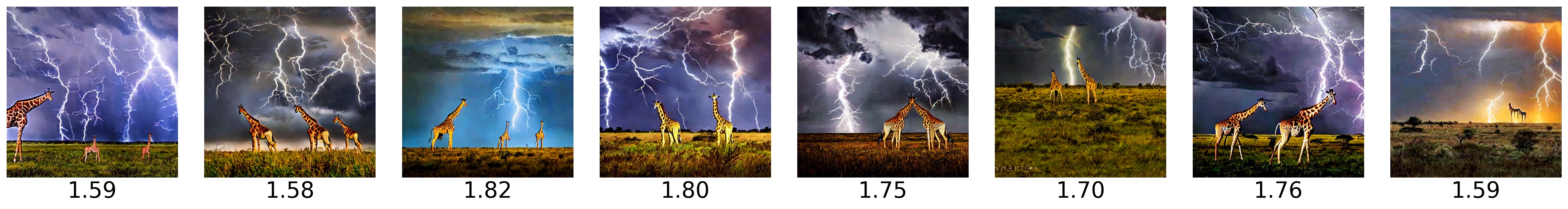}
        \end{minipage}
    \end{subfigure}
    \vspace{0.2em}
    \begin{subfigure}
        \centering
        \begin{minipage}[c]{0.1\linewidth}
            \centering \small \textbf{L-TFFT}
        \end{minipage}%
        \begin{minipage}[c]{0.85\linewidth}
            \includegraphics[width=\linewidth]{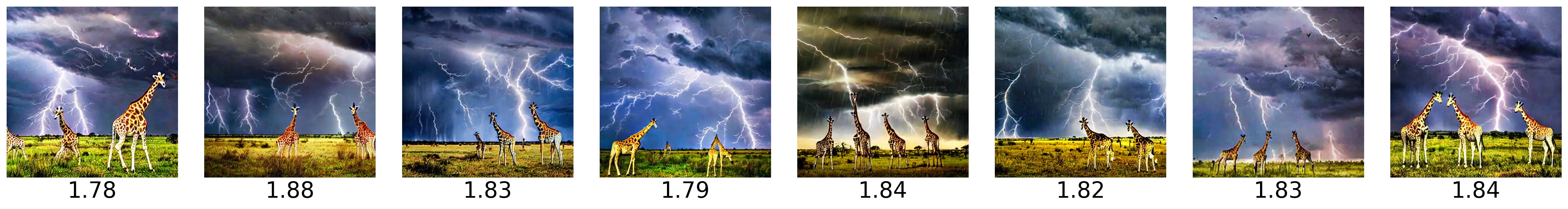}
        \end{minipage}
    \end{subfigure}
    \caption{Generated samples from different models with ImageReward values. Prompt: ``Award-winning wildlife photograph of giraffes standing on a savannah under distant lightning, dramatic storm clouds, rain curtains in the background, warm foreground light, cinematic contrast, sharp detail on coats and grass''.}
\end{figure}

\begin{figure}[ht]
    \centering
    \begin{subfigure}
        \centering
        \begin{minipage}[c]{0.1\linewidth}
            \centering \small \textbf{Pre-trained}
        \end{minipage}%
        \begin{minipage}[c]{0.85\linewidth}
            \includegraphics[width=\linewidth]{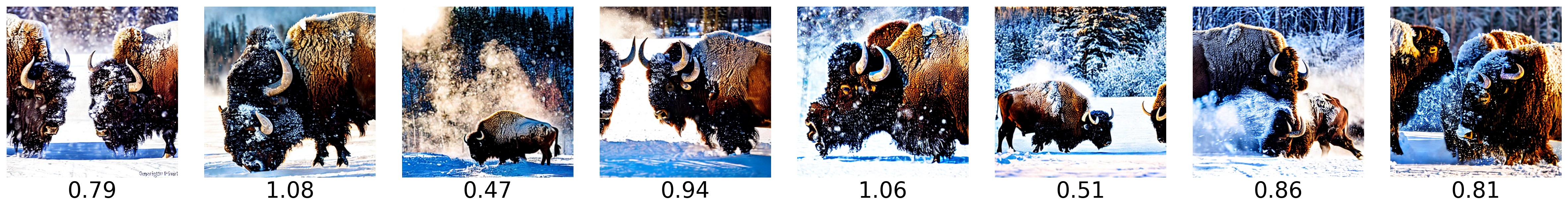}
        \end{minipage}
    \end{subfigure}
    \vspace{0.2em}
    \begin{subfigure}
        \centering
        \begin{minipage}[c]{0.1\linewidth}
            \centering \small \textbf{EXP-FT}
        \end{minipage}%
        \begin{minipage}[c]{0.85\linewidth}
            \includegraphics[width=\linewidth]{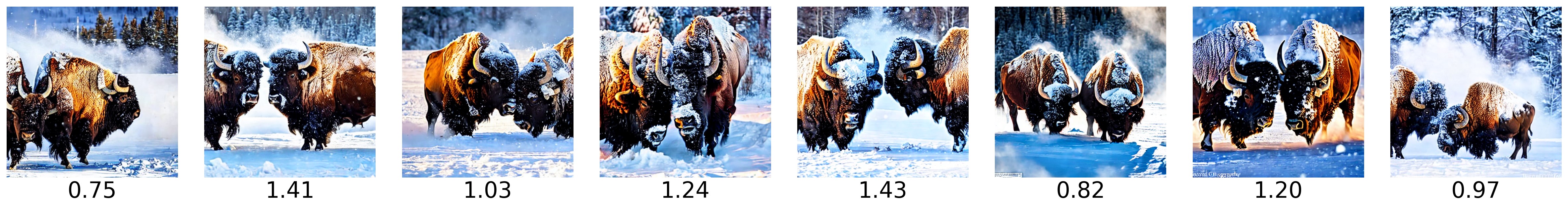}
        \end{minipage}
    \end{subfigure}
    \vspace{0.2em}
    \begin{subfigure}
        \centering
        \begin{minipage}[c]{0.1\linewidth}
            \centering \small \textbf{FDC}
        \end{minipage}%
        \begin{minipage}[c]{0.85\linewidth}
            \includegraphics[width=\linewidth]{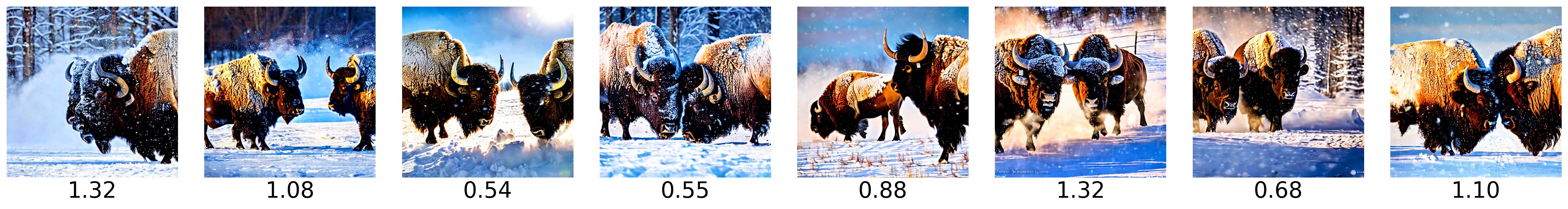}
        \end{minipage}
    \end{subfigure}
    \vspace{0.2em}
    \begin{subfigure}
        \centering
        \begin{minipage}[c]{0.1\linewidth}
            \centering \small \textbf{L-TFFT}
        \end{minipage}%
        \begin{minipage}[c]{0.85\linewidth}
            \includegraphics[width=\linewidth]{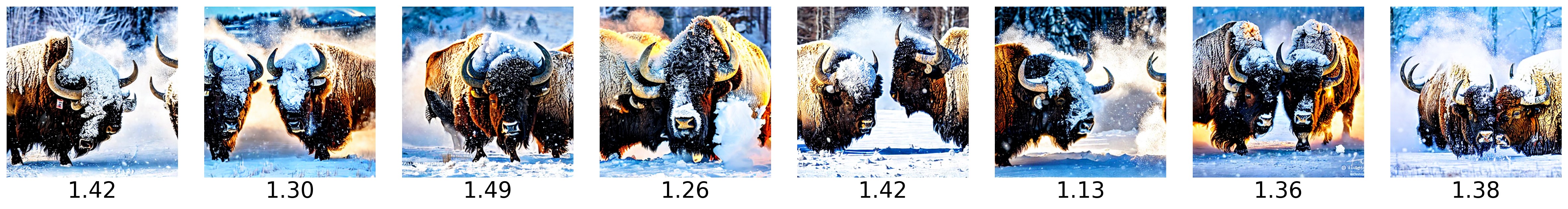}
        \end{minipage}
    \end{subfigure}
    \caption{Generated samples from different models with ImageReward values. Prompt: ``National Geographic style photography of two massive American bison bulls colliding head-on in deep winter snow. A massive explosion of powder snow is kicked up and frozen around them, obscuring parts of their bodies. Their breath forms thick clouds of steam in the freezing air. Low, cold winter sun creating long shadows. Telephoto lens compression''.}
\end{figure}

\end{document}